\renewcommand\section{\@startsection
{section}{1}{0mm}%
{-2\bigskipamount}%
{\bigskipamount}%
{\normalfont\normalsize\bfseries}%
}
\renewcommand\thesection{\arabic{section}}
\newcommand\dateymd{\number\year, \ifcase\month\or
January\or February\or March\or April\or May\or June\or
July\or August\or September\or October\or November\or
December\fi, \number\day}
\newcommand\printtime{%
\c@hours=\time \divide\c@hours by60
\c@minutes=\c@hours \multiply\c@minutes by-60
\advance \c@minutes by \time
\ifnum\c@hours<10 0\fi\the\c@hours:%
\ifnum\c@minutes<10 0\fi\the\c@minutes}
\newcommand\upla{}
\newcommand\uplapar{}
\newcommand\xtra{} 
\newcommand\bnou{\textup{\textbf{\lower3.7pt\hbox{\char'052}:~}}}
\newcommand\enou{\unskip\textup{\textbf{~:\lower3.7pt\hbox{\char'052}}} }
\newcommand\bvell{\textup{\textbf{\lower3.7pt\hbox{\char'052}:~}$\langle$}}
\newcommand\evell{\unskip\textup{$\rangle$\textbf{~:\lower3.7pt\hbox{\char'052}}} }
\newcommand\bbnou{\textup{\textbf{\lower3.7pt\hbox{\char'052\char'052}:~}}}
\newcommand\eenou{\unskip\textup{\textbf{~:\lower3.7pt\hbox{\char'052\char'052}}} }
\newcommand\ie{i.\,e.~}
\newcommand\ifoi{\,\hbox{if\kern2.5pt and\kern2.5pt only\kern2.5pt if}\,{} }
\newcommand\df{\bfseries}
\newcommand\dfc[1]{\,{\df#1}\,}
\newcommand\dfd[1]{\,{\df#1}\hskip1pt}
\newcommand\secpar[1]{\S\,{#1}}
\newcommand\ensep{\unskip\hskip.65em\ignorespaces}
\newcommand\atilde{\lower3.5pt\hbox{\~{}}}
\newcommand\underl{\lower3.5pt\hbox{-}}
\newcommand\halfsmallskip{\vskip0.5\smallskipamount}
\newcommand\remarks{\medskip\noindent\textit{Remarks}\par}
\newcommand\remark{\vskip-10pt\bigskip\noindent\textit{Remark}.\hskip.5em}
\newcommand\pq[2]{\raise.25ex\hbox{\footnotesize${#1}\over{#2}$}%
\hskip-.35ex\null}
\newcommand\onehalf{\frac12}
\newcommand\itm{\smallskip\noindent\hbox to\parindent{\hss\smaller{$\bullet$}\hskip.5em}}
\newcommand\xxxx[1]{%
 \hangindent2.5\parindent
 \hangafter1
 \noindent\hskip.5\parindent
 \hbox to2\parindent{\hss#1\hss}}
\newcommand\iim[1]{\xxxx{\textup{\small(#1)}}\ignorespaces} 
\newcommand\ddd[1]{\halfsmallskip\vskip-2pt\noindent\hbox to 2\parindent{\hss\footnotesize$\bullet$\ \ }{#1}\ensep}
\newcommand\brwrap[1]{[\textsl{#1}\kern1pt]}
\newcommand\bibref[1]{\@nameuse{b@#1}}
\renewcommand\@biblabel[1]{\brwrap{#1}}
\renewcommand\@cite[2]{\hbox{\brwrap{#1\if@tempswa\/\upshape\,:\,{\relscale{0.95}#2}\fi}}} 
\newcommand*\dbibref[2]{\bibref{#1}\,\textup{:\,{\relscale{0.95}#2}}}
\newcommand*\refco{\/\kern.1ex\textup{,}\hskip.45ex}
\newcommand*\refsc{\/\kern.15ex\textup{;} }
\newcommand\ustrut{\rule[-.2ex]{0pt}{1.6ex}}
\newcommand\latop[2]{{\scriptstyle\ustrut#1\atop\scriptstyle\ustrut#2}}
\newcommand\sbset{\subset}
\newcommand\sbseteq{\subseteq}
\newcommand\cd[1]{\!#1\!}
\newcommand\thesis{t}
\newtheorem{proposition}{Proposition}[section]
\newtheorem{lemma}[proposition]{Lemma}
\newtheorem{theorem}[proposition]{Theorem}
\newtheorem{corollary}[proposition]{Corollary}
\newcommand\nt[1]{\overline{#1}}		
\newcommand\pset{\varPi_+}		
\newcommand\piset{\varPi}			
\newcommand\spiset{\varSigma}		
\newcommand\lit{p}		    		
\newcommand\liit{q}		    		
\newcommand\liiit{r}		    		
\newcommand\livt{s}		    		
\newcommand\lxt{\alpha}		    
\newcommand\clau{C}				
\newcommand\doct{{\mathcal D}} 	
\newcommand\cnf{\Phi} 			
\newcommand\ist{A}				
\newcommand\xst{X}				
\newcommand\releq{E}				
\newcommand\val{w}				
\newcommand\orv{v}				
\newcommand\orvk{v^k}				
\newcommand\pesk{a_k}				
\newcommand\utv{v'}				
\newcommand\urv{v^*}				
\newcommand\orvbis{w}				
\newcommand\utvbis{w'}			
\newcommand\urvbis{w^*}			
\newcommand\valt{w'}
\newcommand\valtz{\widetilde w{}^{\kern.5pt\prime}}
\newcommand\mg{\eta}				
\newcommand\mgo{\eta_0}
\newcommand\orvz{\widetilde v}	
\newcommand\utvz{\widetilde v{}^{\kern.5pt\prime}}
\newcommand\urvz{\widetilde v{}^{\kern.5pt\ast}}
\newcommand\uurvz{\widetilde u{}^{\kern.5pt\ast}}
\newcommand\ntv[1]{v^{(#1)}}		
\newcommand\ntvz[1]{\widetilde v{}^{(#1)}}	 
\newcommand\doctz{\widetilde{\hbox{\vphantom{t}\smash{$\doct$}}}} 	
\newcommand\clauiii{C_\ast}
\newcommand\clauzi{\widetilde{\hbox{\vphantom{t}\smash{$\clau$}}}\kern-1.5pt{}_1}
\newcommand\clauzii{\widetilde{\hbox{\vphantom{t}\smash{$\clau$}}}\kern-1.5pt{}_2}
\newcommand\clauzzi{\widetilde{\clau}\kern-1pt{}_1}
\newcommand\clauzzii{\widetilde{\clau}\kern-1pt{}_2}
\newcommand\Phiz{\widetilde{\hbox{\vphantom{t}\smash{$\Phi$}}}}
\newcommand\uttvz{\widetilde v{}^{\kern.5pt\prime\prime}}
\newcommand\magn{u}
\newcommand\magnz{\widetilde u}
\newcommand\ptit{\delta}
\newcommand\ltv{{}^\prime\kern-.25pt v}
\newcommand\lrv{{}^\ast\kern-.25pt v}
\newcommand\res[1]{\,\vtop{\offinterlineskip\halign{\hfil##\hfil\cr$\vee$\cr\noalign{\vskip2pt}$\scriptstyle#1$\cr}}\,}
\newcommand\ress[1]{\vtop{\offinterlineskip\halign{\hfil##\hfil\cr$\vee$\cr\noalign{\vskip2pt}$\scriptscriptstyle#1$\cr}}}
\renewcommand\res[1]{\mathbin{\vtop{\baselineskip0pt\lineskip.3ex\halign{\hfil##\hfil\cr$\vee$\cr$\scriptstyle{#1}$\cr}}}}
\renewcommand\ress[1]{\mathbin{\vtop{\baselineskip0pt\lineskip.25ex\halign{\hfil##\hfil\cr$\scriptstyle\vee$\cr$\scriptscriptstyle{#1}$\cr}}}}
\begin{document}


\thispagestyle{empty}

\renewcommand\footnoterule{\rule{10mm}{0pt}}

\null\vskip-24mm\null 

\begin{center}
\hrule
\vskip7.5mm
\textbf{\uppercase{A general method for deciding}}\linebreak
\textbf{\uppercase{about logically constrained issues}\,%
\footnote{This work stemmed from our attendance at the \textsl{MOVE\,-\,Urrutia Elejalde} workshop on \textsl{Judgement Aggregation} that was held in Barcelona on December 14th–16th, 2009.\ensep
Our research into certain particular aspects has been motivated also by the attendance of two of us at the workshop \textsl{New Developments in Judgement Aggregation and Voting Theory}, held in Lauterbad, Germany, on~September 9th--11th, 2011.
\ensep
We are grateful to the organizers of both these workshops for their kind invitations.}
}
\par\medskip
\textsc{Rosa Camps,\, Xavier Mora \textup{and} Laia Saumell}
\par
Departament de Matem\`{a}tiques,
Universitat Aut\`onoma de Barcelona,
Catalonia,
Spain
\par\medskip
\texttt{xmora\,@\,mat.uab.cat}
\par\medskip
July 15, 2010;\ensep revised March 3, 2012
\vskip5mm
\hrule
\end{center}

\null\vskip-7.5mm\null
\begin{abstract}
\noindent
A general method is given for
revising degrees of belief
and arriving at consistent decisions
about a system of logically constrained issues.
\linebreak[3]\ensep
In contrast to other works about belief revision,
here the constraints are assumed to be fixed.
\ensep
The method has two variants, dual of each other,
whose revised degrees of belief are respectively above and below the original ones.
\ensep
The~upper [resp.~lower] revised degrees of belief are uniquely characterized
as~the lowest [resp.~greatest] ones that are invariant by a~certain max-min
[resp.~min-max] operation
determined by the logical constraints.
\ensep
In~both variants, making balance between the revised degree of belief of a proposition
and that of its negation leads to decisions that are ensured to be consistent with the logical constraints.
\ensep
These decisions are ensured to agree with the majority criterion
as applied to the original degrees of belief  
whenever this gives a consistent result.
\ensep
They are also ensured to satisfy
a property of respect for unanimity about any particular issue,
as well as a property of monotonicity with respect to the original degrees of belief.
\ensep
The application of the method to certain special domains
comes down to well established or increasingly accepted methods,
such as the single-link method of cluster analysis
and the method of paths in preferential voting.

\bigskip\noindent
\textbf{Keywords:}\hskip1em
\textit{%
constrained judgment aggregation,
degrees of belief,\linebreak 
belief revision,
plausible reasoning,
artificial intelligence,
decision\linebreak 
 theory,
doctrinal paradox,
cluster analysis,
preferential voting,\linebreak 
conjunctive normal forms.
}

\bigskip\noindent
\textbf{Classification MSC2010:}\hskip.75em
\textit{%
03B42, 
68T37, 
91B06, 
91B14, 
91C20. 
}
\end{abstract}

\vskip2.5mm
\hrule

\pagebreak 

\null\vskip-20mm\null 
\section{Introduction}

\renewcommand\uplapar{\vskip-7mm\null}

\uplapar
\paragraph{1.1} 
Around 1990 Lewis A.~Kornhauser and Lawrence G.~Sager pointed out that
collegial courts are liable to what they termed the \dfc{doctrinal paradox} 
\cite{kii, ks93}.
A~simple example of it would be the following:
A~person is on trial for having committed a crime.
The case involves two issues $p$ and $q$ whose conjunction, \ie both of them being true,
determines whether the accused is guilty or not.
The case is heard by a jury of three members.
One of them believes that $p$ is true but not $q$;
 accordingly, he finds the accused not guilty.
Another one believes that $q$ is true but not $p$;
 so he also finds the accused not guilty.
Finally, the third member of the jury believes that both $p$ and $q$ are true,\linebreak[3]
 so~he finds the accused guilty.
Altogether, one can say that the jury has reached a majority verdict of not guilty.
However, one can also say that they have a majority opinion that $p$ is true 
and that $q$ is also true; accordingly, the accused should be considered guilty.
The above-mentioned authors acknowledge that 
``We have no clear understanding of how a court should proceed in cases where the doctrinal paradox arises''~\cite{ks93}.

The main issue in a trial is whether the accused is guilty or not. Let $\thesis$ 
denote the proposition that he is guilty. We are assuming that this proposition is logically connected to $p$ and $q$ as specified by the following \dfd{doctrine\mdseries:} $\thesis\leftrightarrow p\land q$. 
Every member of the jury is required to be consistent with it. Therefore, there are only four consistent opinions about the truth of $(p,q,t)$, namely:
$(1,1,1)$, $(1,0,0)$, $(0,1,0)$ and $(0,0,0)$,
where $1$ means true, and $0$~means false.
Let us consider all possibilities for a jury that is hearing such a case:
let $x,y,z,u$ be the fractions who adhere to, respectively, each of those four consistent opinions. 
In terms of these numbers, the fractions of the jury who believe in the truth of $t,p,q$ are res\-pectively $v_\thesis = x$, $\orv_{p} = x+y$ and $\orv_{q} = x+z$.
These numbers can be seen as degrees of collective belief in the truth of the respective propositions.
A natural criterion for col\-lectively deciding about $\thesis$ is to consider it true whenever $v_t > \onehalf$, \ie $x > \onehalf$.
In the following we will refer to it as the \dfc{conclusion-based criterion}.
In contrast, the \dfc{premise-based criterion}
con\-siders $\thesis$ true \ifoi both $\orv_{p},\orv_{q} > \onehalf$, \ie both $x\cd+y,\,x\cd+z>\onehalf$.
Clearly, if~$\thesis$~is found true by the conclusion-based criterion, then it will also be found true by the premise-based one. However, the converse does not hold,
as it is exemplified in the preceding paragraph, where $\orv_{p}=\orv_{q}=\frac23 > \onehalf$ but $v_\thesis = \frac13 < \onehalf$.

\renewcommand\uplapar{\vskip-9mm\null}

\uplapar
\paragraph{1.2}
The core of the problem is that the majority rule does not keep consistency with the doctrine.
Even though each individual votes in a consistent way, the outcome of the majority rule need not be consistent!
By the \dfc{majority rule} we mean accepting a proposition $\lxt$ and rejecting its negation $\nt\lxt$
whenever $v_\lxt > \onehalf$.

From this point of view, the problem is entirely analogous to the well-known paradox pointed out in the eighteenth century by the Marquis of Condorcet in connection with preferential voting \cite{mu}: When several\linebreak[3] individuals vote on three or more alternatives by ordering them according to their preferences, the majority rule may result in a cyclic (\ie non transitive) binary relation.
\ensep
In preferential voting one is interested in the propositions\linebreak[3]
$p_{xy}$\,: `$x$~is preferable to~$y$', where $x$ and $y$ vary over all possible pairs of alternatives, and the doctrine is transitivity, namely $p_{xy} \land p_{yz} \rightarrow p_{xz}$ for any three alternatives $x,y,z$, 
together with asymmetry, namely $p_{xy} \rightarrow \nt p_{yx}$ for any two alternatives $x,y$.
As before, the problem is that the majority rule does not keep consistency with the doctrine.
The standard example involves three alternatives $a,b,c$ and three rankings, namely $a\succ b\succ c$, \,$b\succ c\succ a$ and $c\succ a\succ b$. If~$v_{xy}$ denotes the fraction of times that $x$~is preferred to~$y$, one gets $v_{ab} = v_{bc} = v_{ca} = \frac23$ and $v_{ba} = v_{cb} = v_{ac} = \frac13$. Clearly, the condition $v_{xy}>\onehalf$ does not define a transitive relation.

Yet another class of objects whose aggregation and subsequent application of the majority rule may break away from the corresponding  doctrine are equivalence relations. Consider, for example, the set $\{a,b,c\}$ and the equivalence relations asso\-ciated respectively with the three following partitions: $\{\{a,b\},\{c\}\}$, $\{\{a\},\{b,c\}\}$ and $\{\{a,b,c\}\}$. If $v_{xy}$ denotes the fraction of times that $x$~and~$y$ belong to the same class, one gets $v_{ab} = v_{bc} = \frac23$ and $v_{ac} = \frac13$ (together with~$v_{yx} = v_{xy}$). Here too, the aggregation operation does away with transitivity.

All these problems are particular cases of a more general one
where the objects being aggregated are systems of degrees of belief 
for several propositions which are logically constrained by a certain doctrine.
This general problem can be referred to as that of
\dfc{constrained judgment aggregation.}
\ensep
In~a~celebrated paper published in 1952,
Georges Th.~Guilbaud already 
identified this problem as a generalization of Condorcet's one:
``The~general logic of propositions ... teaches us that the problem is universal. Given several propositions or questions, every logical relation between them can be expressed by establishing the list of possible arrangements of signs and the list of impossible arrangements. ... the rule of the majority may very well lead to a forbidden arrangement''~\cite{gui}.



\uplapar
\paragraph{1.3} 
Condorcet's paradox is closely related to the celebrated impossibility theorem formulated in 1950--63 by Kenneth J.~Arrow
\cite{arrow, geana}.
This theorem is concerned with preferences expressed by means of complete rankings, ties allowed, and with rules for (deterministically) aggregating any given set of individual preferences of this form into a collective one of the same form. 
Quite naturally, one would be interested in rules that comply with the following conditions:\ensep
(i)~Anonymity:
all individuals play the same role;\ensep
(ii)~Respect for unanimity:
if~every individual strictly prefers $x$ to $y$, 
then the collective ranking strictly prefers also $x$ to $y$;\ensep 
(iii)~Independence of irrelevant alternatives:\linebreak[3]
the collective preference about two alternatives $x$ and $y$ depends only on the individual preferences about $x$ and $y$.\ensep
According to Arrow's theorem, such a rule does not exist, 
except in the case where there are only two alternatives.


The doctrinal paradox has motivated the question of extending Arrow's theorem to the general problem of constrained judgment aggregation.
Analogously to the preceding paragraph, one would be interested in rules that,\linebreak[3]
besides keeping consistency with the doctrine in question,
comply also with the following conditions:\ensep
(i)~Anonymity;\ensep
(ii)~Respect for unanimity:
if a particular proposition is accepted by every individual, then it is also accepted by 
the collective judgment;\ensep
(iii)~Issue-by-issue aggregation (or independence):
the collective judgment about each issue depends only on the individual judgments about it.\ensep
In~the present context, the possibility or impossibility of a rule satisfying these conditions clearly depends on the structure of the doctrine under consideration. For instance, the majority rule will always do for an empty doctrine, \ie several propositions without any logical connection between them.\linebreak[3]
Accordingly, the existing impossibility results, for which we refer to~\cite{np08, np10, dl10b, lipo, dl}, 
specify certain conditions to be met by the doctrine.


\uplapar
\paragraph{1.4}
In practice, one is bound to make decisions, 
even for doctrines that are included in the above-mentioned impossibility results.
The purpose of this article is to put forward a general rule for making such decisions
in consistency with the doctrine.
This rule will not comply with the property of issue-by-issue aggregation.
However, it will be anonymous and it will respect unanimity about any particular issue
whenever the individual beliefs are consistent with the doctrine.


The proposed method generalizes two other ones that are already known in certain particular areas: In fact, when the doctrine corresponds to the notion of an equivalence relation on a certain set of items, then one of the variants of our method yields the so-called single-link method of cluster analysis~\cite{js}. On~the other hand, when the doctrine corresponds to the notion of a total order, then the proposed method corresponds essentially to the one that was introduced in 1997 by Markus Schulze
\hbox{\brwrap{
\bibref{sc}\refco
\bibref{scbis}\refsc
\dbibref{t6}{p.\,228--232}\refsc%
\bibref{crc}\refco
\bibref{cri}%
}}.


The problem considered in this article
should be distinguished from a related but different one that
is often referred to as `belief revision'.
This subject is reviewed for instance in \cite{hansson}.
In contrast to the present work, where the doctrine remains fixed,
there one allows for the possibility of revising~it.
\halfsmallskip\pagebreak 

The remainder of this article is organized in five more sections: 
\ensep
2:~Setting up the problem.
\ensep
3:~Construction and main results.
\ensep
4:~Technical issues and supplementary topics.
\ensep
5:~Application to specific domains.
\ensep
6:~Discussion and interpretation of the results.


\section{Setting up the problem}


In this section we introduce in more detail the objects that we will be dealing with.
We will be definitely based upon propositional logic.
However, instead of dealing with all-or-none truth assignments,
we will deal with degrees of belief in the whole range from $0$ to $1$.
The term `belief' should be understood here in a very wide sense;
depending on the context, it may be more appropriate to use other terms,
such as `plausibility', `certainty',\linebreak[3] `evidence', et cetera.
On~the other hand, we will also deal
with decision values within a set of three alternatives meaning respectively `accepted',\linebreak[3] `rejected' and `undecided'.
Allowing for undecidedness is unavoidable as soon as one looks at aggregating different views on the same issues.
Besides, the notion of aggregation also calls for allowing
the degree of belief of a negation $\nt p$ to be independent from that of~$p$.
This leads to viewing $p$ and $\nt p$ as antagonistic to each other but not necessarily mutually exclusive.


\paragraph{2.1}
To begin with, we~are interested in a finite
set of basic propositions $p,q,r,...$ together with their respective negations $\nt p,\nt q,\nt r,...$.
Following the standard terminology of logic,
we will refer to $p,q,r,...$ as \hbox{\dfd{atoms},} 
and an atom or its negation will be called a \dfd{literal}.
The set of atoms 
will be denoted as $\pset$,
and the set of literals will be denoted as~$\piset$.
So, $\piset = \cup_{p\in\pset}\,\{p,\nt p\}$.
\ensep
A \dfc{truth assignment} is a mapping whereby each literal is assigned one of the values  `true' or `false', with the restriction that $\nt\lit$~is false [resp.~true] whenever $\lit$~is true [resp.~false].

We will also deal with compound propositions. They are represented by \dfc{formulas} that combine atoms by means of the Boolean operators of propositional calculus, such as $\lnot,\land,\lor,\rightarrow$ and $\leftrightarrow$.
\ensep
We are assuming that $\lnot p=\nt p$ and $\lnot(\nt p) = p$ 
for any $p\in\pset$.
\ensep
The notions of \dfc{entailment} \hbox{---or }logical implication--- and \dfc{logical equivalence} between formulas will be understood exactly as in classical bivalent propositional logic:
One formula entails another if there is no truth assignment that makes the first one true and the second one false according to the rules of propositional calculus.
Two formulas are logically equivalent to each other when their truth value is the same for any truth assignment.

A~\dfc{doctrine} can be seen as a compound proposition ---in other words, a~formula---
whose truth is assumed to hold.
A truth assignment that makes this formula true will be
said to be
\dfc{consistent} with the doctrine.

In dealing with the doctrine, we will make a crucial use of the well-known fact that any formula can be transformed into logically equivalent ones of the form
\begin{equation}
\label{eq:cnf}
\cnf(\doct) \,:=\,
\bigwedge_{\clau\in\doct} \left(\,\bigvee_{\lit\in\clau} \lit\right),
\end{equation}
where $\doct$ stands for a collection of subsets of $\piset$.
A formula of this form is called a \dfc{conjunc\-tive normal form},
and the expressions within parentheses are called its \dfd{clauses}.
Generally speaking, a clause means any formula of the form
\begin{equation}
\label{eq:clause}
\phi(\clau) \,:=\,
\bigvee_{\lit\in\clau} \lit,
\end{equation}
where $\clau$ is a subset of $\piset$.
Obviously, specifying a clause is equivalent to specifying the associated set $\clau\sbset\piset$,
and specifying a conjunctive normal form is equivalent to specifying the associated collection $\doct$ of subsets of~$\piset$.
Because of that, in the following we will sometimes refer to the sets $\clau\in\doct$ themselves as `clauses' and we may even refer to the collection $\doct$ as the `doctrine'.

A conjunctive normal form being true means that each of its clauses is true.
On the other hand, a clause being true means that at least one of its literals is true;
in other words, if all of its literals but one are known to be false, then the remaining one must be true. Therefore, the doctrine associated with (\ref{eq:cnf})
provides the following implications:
\begin{equation}
\label{eq:pimplicant}
\lit \,\leftarrow\, \bigwedge_{\latop{\lxt\in\clau}{\lxt\neq\lit}} \nt\lxt,
\end{equation}
for any $\clau\in\doct$ such that $\lit\in\clau$.
The method developed below will be based crucially on such implications.

\smallskip
The conjunctive normal forms equivalent to a given doctrine are by no means unique.
Later on, we will
restrict ourselves 
to the special class of
\dfd{prime conjunctive normal forms},
and eventually we will choose a particular member of that class
which is known as the \dfc{Blake canonical form.}
These concepts will be introduced in \secpar{4.1}.

\medskip
On the other hand, we adopt from now on the following assumptions:

\halfsmallskip
\iim{D1}The doctrine is satisfiable
\hskip1.25pt and \hskip1.25pt
it is given in conjunctive normal form.

\halfsmallskip
\iim{\small D2}It does not contain unit clauses, \ie clauses with a single literal.
\ensep
Otherwise, one can always fix it by replacing such literals and their negations by the corresponding truth values, and deleting them
from~$\piset$.

\halfsmallskip
\iim{D3}It explicitly contains each of the \textit{tertium non datur} clauses $\lit\lor \nt\lit$,\linebreak 
with $\lit\in\pset$. This special convention provides the trivial implications
$\lit\leftarrow\lit$ for any $\lit\in\piset$.


\paragraph{2.2}
We call~\dfc{valuation} any mapping $\val$ whereby each literal $\lit$ in $\piset$ is assigned a~value $\val_\lit$ in the interval of real numbers from~$0$ to~$1$.
The number~$\val_\lit$, sometimes denoted alternatively as $\val(\lit)$, will be seen as a degree of belief in the proposition~$\lit$.

Generally speaking, the values of $\val_\lit$ and $\val_{\nt\lit}$
need not add up to $1$,\linebreak[3]
but their sum can take any value from $0$ to $2$.
Having $\val_\lit+\val_{\nt\lit} < 1$ means a~lack of information,
whereas $\val_\lit+\val_{\nt\lit} > 1$ means that some contradiction is present.
A valuation that satisfies $\val_\lit+\val_{\nt\lit}=1$ for any $\lit\in\piset$ will be called \dfd{balanced}.


Later on, we will sometimes compare two valuations $\orv$ and~$\val$. In that connection, $\orv\le\val$ will mean simply that $\orv_\lit\le\val_\lit$ for any $\lit\in\piset$.

\medskip
The truth assignments of classical propositional logic can be seen as balanced valuations that take only the values $0$ and $1$, where $1$ means `true' and $0$ means `false'.
Since they are formally the same, in the sequel we will often identify truth assignments with balanced all-or-none beliefs.

\medskip
We will also be interested in balanced valuations with values in $\{0,\onehalf,1\}$,
where the value $\onehalf$ can be interpreted as `undefined' or `undecided'.
We will refer to such valuations as \dfd{partial truth assignments},
a term that we are taking from satisfiability theory.
They will be used
mainly for specifying decisions,
in which case the values $1$, $0$ and $\onehalf$ can be interpreted 
as meaning respectively `accepted', `rejected' and `undecided'.

A partial truth assignment $u$ will be said to be \dfc{definitely consistent} with~$\doct$,
or with $\Phi(\doct)$, 
when, for each clause $\clau\in\doct$ and every $\lit\in\clau$,\linebreak[3]
the following implication holds:\ensep
if $u_\lxt=0$ for every $\lxt\in\clau\setminus\{\lit\}$,
then $u_\lit=1$.
When no undecidedness is present, a partial truth assignment being definitely consistent is equivalent to its being consistent as a truth assignment. 
When undecidedness is allowed, definite consistency requires every clause to contain at least one accepted literal, or alternatively, at least two undecided literals.


\pagebreak
Every valuation $\val$ gives rise to a (partial) decision in the following way,
that depends on a parameter $\mg$ in the interval $0\le\mg\le1$:
For any $\lit\in\piset$,
\begin{alignat}{4}
&\text{$\lit$ is accepted and $\nt\lit$ is rejected} &&\quad\text{whenever}\ \ 
\hphantom{|\,}\val_\lit &&- \val_{\nt\lit}\hphantom{\,|} &&\,>\, \mg,
\label{eq:acceptedrejected}
\\
&\text{$\lit$ and $\nt\lit$ are left undecided} &&\quad\text{whenever}\ \ 
|\,\val_\lit &&- \val_{\nt\lit}\,| &&\,\le\, \mg.
\label{eq:undecided}
\end{alignat}
\xtra
We will refer to it as the \dfc{decision of margin $\mg$} associated with~$\val$, and
we will identify it with the corresponding partial truth assignment,
which will be denoted as $\mu_\mg(\val)$.
In the case $\mg=0$ we will call it the \dfc{basic decision} associated with $\val$,
and the corresponding partial truth assignment will be denoted as $\mu(\val)$.
\ensep
In tune with these definitions, the difference $\val_\lit-\val_{\nt\lit}$ will be called the \dfc{acceptability} of~$\lit$ according to $\val$.
\ensep
If the valuation $\val$ is balanced, then the basic decision criterion is equivalent to the majority rule of §{1.2}, 
namely accepting $\lit$ and rejecting $\nt\lit$ whenever $\val_\lit>\onehalf$.

\paragraph{2.3}
Valuations are often an aggregate of several components (members of a jury, decision criteria, etc.), \ie an average of the form
\begin{equation}
\label{eq:convcomb}
\orv_\lit \,=\, \sum_k \pesk\,\orvk_\lit,
\end{equation}
where $\orvk$ are the component valuations,
and $\pesk$ are the corresponding relative frequencies or weights,
satisfying $\pesk\ge0$ for any $k$ and $\sum_k\pesk=1$.
If the component valuations have an all-or-none character,
then $\orv_\lit$ is the fraction of components where $\lit$ is considered valid.

The doctrinal paradox points out the possibility that the average valuation be not consistent with the doctrine even when all component valuations are consistent with it.

\paragraph{2.4}
Our main aim can be stated in the following way: \textit{Given a~valuation~$\orv$,
build a revised one that
achieves consistency 
with the doctrine
while staying as near as possible to $\orv$.}

In the method presented below, the consistency of the revised valuation will hold in a general sense that entails the definite consistency of the associated decisions in the sense of the definition given in~\secpar{2.2}.

Generally speaking, we will obtain two revised valuations $\urv$ and $\lrv$\linebreak[3] satisfying the inequalities $\lrv\le\orv\le\urv$.
We will call them respectively the upper and lower revised valuations.
\ensep
For a balanced original valuation, the upper and lower revised valuations give rise to exactly the same decision. However, in the unbalanced case they can lead to different decisions.

For the moment, we will be concerned only with the upper revised valuation $\urv$, 
the lower one being introduced at the end by duality.

\section{Construction and main results}

\paragraph{3.1}
\textbf{Construction.}\ensep
The upper revised valuation will be obtained by means of an iterative process
whereby belief will be propagated along the implications contained in the doctrine.
More specifically, we will consider the implications (\ref{eq:pimplicant})
and we will apply the following general principle:

\newcommand\teof{P}
\halfsmallskip
\iim{\teof}%
Consider an
implication of the form $\lit\leftarrow \bigwedge_{\lxt\in\spiset}\lxt$
with $\spiset\sbset\piset$. 
As~soon as the right-hand side is satisfiable, this implication
gives to~$\lit$ at least the same degree of belief
as the weakest of the conjuncts~$\lxt$.
\halfsmallskip

The hypothesis of satisfiability contained in this principle should be understood
in relation to the doctrine under consideration:
Among the truth assignments that make the doctrine true, 
at least one of them should make the conjunction $\bigwedge_{\lxt\in\spiset}\lxt$ true.
Unless otherwise stated, in the sequel `satisfiability' should always be understood in this sense.
\ensep
In \secpar{4.1} we will see that this
amounts to requiring the conjunctive normal form
that expresses the doctrine
to be a prime conjunctive normal form.
However, for the moment we need not be concerned with this question.
In fact, principle~{\small(\teof)} will be used only to motivate certain definitions
that can be applied to any conjunctive normal form.

In contrast, the present subsection already makes use of the assumptions {\small(D1, D2, D3)} adopted at the end of \secpar{2.1}.

\bigskip 
Let us look at the consequences of applying principle~{\small(\teof)} to the implications~(\ref{eq:pimplicant}).
Starting with the degrees of belief given by $\orv$, we infer that every $\lit\in\piset$ should be believed at least in the degree $\utv_\lit$ defined by
\begin{equation}
\label{eq:vprime}
\utv_\lit \,=\, \max_{\latop{\clau\in\doct}{\clau\ni\lit}}\, \min_{\latop{\lxt\in\clau}{\lxt\neq\lit}} \,\orv_{\nt\lxt},
\end{equation}
where the $\max$ and $\min$ operators are ensured to deal with non-empty sets of values as a consequence of assumptions {\small(D2)} and {\small(D3)}.


\medskip
\begin{lemma}\hskip.5em
\label{st:step}
The transformation $\orv\mapsto\utv$ has the following properties:

\iim{a}It is continuous.

\iim{b}$\orv \le \orvbis$ implies $\utv \le \utvbis$.

\iim{c}$\orv \le \utv$.

\iim{d}The image set of $\utv$ is contained in that of $\orv$.
\end{lemma}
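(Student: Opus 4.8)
The plan is to derive all four properties from the single observation that each component $\utv_\lit$ is built by applying the finitary operations $\max$ and $\min$ to the coordinate maps $\orv\mapsto\orv_{\nt\lxt}$, which are themselves continuous and monotone. Since assumptions {\small(D2)} and {\small(D3)} guarantee that the outer maximum ranges over a nonempty family of clauses and each inner minimum over a nonempty set of literals, formula~(\ref{eq:vprime}) is well defined, and I would treat each of (a)--(d) in turn.

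For continuity (a), I would note that every coordinate projection $\orv\mapsto\orv_{\nt\lxt}$ is continuous, and that the maximum and the minimum of finitely many continuous real-valued functions are again continuous; a finite nesting of such operations, as in~(\ref{eq:vprime}), therefore yields a continuous function of $\orv$ for each fixed $\lit$, whence the whole transformation $\orv\mapsto\utv$ is continuous. For monotonicity (b), I would use that both $\max$ and $\min$ are nondecreasing in each of their arguments: from $\orv\le\orvbis$ one has $\orv_{\nt\lxt}\le\orvbis_{\nt\lxt}$ for every $\lxt$, and feeding these termwise inequalities through the inner minima and then the outer maxima preserves the order, giving $\utv_\lit\le\utvbis_\lit$ for each $\lit$.

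The inequality (c) is where assumption {\small(D3)} becomes essential. Fix $\lit\in\piset$. By {\small(D3)} the \textit{tertium non datur} clause $\clau=\{\lit,\nt\lit\}$ belongs to $\doct$ and contains $\lit$, so it is one of the clauses entering the outer maximum in~(\ref{eq:vprime}). For this particular clause the inner minimization ranges over the single literal $\lxt=\nt\lit$, and hence equals $\orv_{\nt{\nt\lit}}=\orv_\lit$, using the convention $\nt{\nt\lit}=\lit$. Since $\utv_\lit$ is the maximum over all clauses through $\lit$ of the corresponding inner minima, it is at least this particular value, so $\orv_\lit\le\utv_\lit$; as $\lit$ was arbitrary, $\orv\le\utv$.

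Finally, for the image inclusion (d) I would invoke the elementary fact that a maximum, or a minimum, taken over a nonempty finite set of real numbers is always attained and coincides with one of those numbers. Choosing a clause $\clau\ni\lit$ that attains the outer maximum and a literal $\lxt\in\clau\setminus\{\lit\}$ that attains the corresponding inner minimum, one gets $\utv_\lit=\orv_{\nt\lxt}$; thus every value taken by $\utv$ is already a value taken by $\orv$, which is exactly the asserted inclusion of image sets. None of the four parts presents a genuine difficulty; the only point requiring care is the bookkeeping of the hypotheses---{\small(D2)} and {\small(D3)} for the nonemptiness that makes~(\ref{eq:vprime}) meaningful, and {\small(D3)} again for supplying, via the \textit{tertium non datur} clauses, the term that forces~(c).
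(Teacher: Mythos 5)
Your proposal is correct and follows essentially the same route as the paper's own (much terser) proof: continuity and monotonicity of finite $\max$/$\min$ compositions for (a) and (b), the \textit{tertium non datur} clause supplying the argument $\orv_\lit$ in the outer maximum for (c), and attainment of the nested $\max$/$\min$ at some $\orv_{\nt\lxt}$ for (d). The only difference is that you spell out the bookkeeping (nonemptiness via (D2) and (D3), the convention $\nt{\nt\lit}=\lit$) that the paper leaves implicit.
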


\begin{proof}\hskip.5em
Part~(a): Since $\max$ and $\min$ are continuous.
Part~(b): Since $\max$ and $\min$ are monotone.
Part~(c): Because of the \textit{tertium non datur} clauses provided by {\small(D3)}, one of the arguments of the $\max$ operator in the right-hand side of~(\ref{eq:vprime}) is $\orv_\lit$. 
Part~(d): Formula~(\ref{eq:vprime}) entails that $v'_\lit$ coincides with $\orv_{\nt\lxt}$ for some $\lxt\in\piset$.
\end{proof}

\medskip
As soon as we accept $\utv$ as new degrees of belief, it makes sense to repeat the same operation with $\orv$ replaced by $\utv$, thus obtaining a still higher valuation~$\orv''$, and so on. By proceeding in this way, one obtains a non-decreasing sequence of valuations $\ntv{n}\ (n = 0,1,2,\dots)$ with the property that all of them take values in the same finite set, namely  $\{\,\orv_\lxt\mid\lxt\in\piset\,\}$. Obviously, this implies that this sequence will eventually reach an invariant state~$\urv$. This eventual valuation is, by definition, the \dfd{upper revised valuation}.


\medskip
\begin{theorem}\hskip.5em
\label{st:rev}
The transformation $\orv\mapsto\urv$ has the following properties:

\iim{a}It is continuous.

\iim{b}$\orv \le \orvbis$ implies $\urv \le \urvbis$.

\iim{c}$\orv \le \urv$.

\iim{d}The image set of $\urv$ is contained in that of $\orv$.
\end{theorem}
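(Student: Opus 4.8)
The plan is to exploit the fact that the upper revised valuation is reached after finitely many applications of the single step of Lemma~\ref{st:step}. Write $S$ for the transformation $\orv\mapsto\utv$ of that lemma, so that the sequence of the preceding paragraph is $\ntv{n}=S^n(\orv)$ and $\urv=S^N(\orv)$ as soon as it has stabilized. All four properties will then be transferred from the corresponding properties of $S$ already established in Lemma~\ref{st:step}, provided one first secures the key point that a \emph{single} exponent $N$ works simultaneously for every starting valuation $\orv$. This uniform bound is precisely what makes the continuity claim go through: without it, part~(a) would only present $\urv$ as a pointwise limit of continuous maps, which in general need not be continuous.

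First I would establish the uniform bound. By Lemma~\ref{st:step}(d) applied along the sequence, the image set of $\ntv{n+1}$ is contained in that of $\ntv{n}$, so by induction every $\ntv{n}$ takes its values in the fixed finite set $\{\,\orv_\lxt\mid\lxt\in\piset\,\}$, of cardinality at most $|\piset|$. By Lemma~\ref{st:step}(c) the sequence is coordinatewise non-decreasing, since $\ntv{n}\le S(\ntv{n})=\ntv{n+1}$. Hence each of the $|\piset|$ coordinates is a non-decreasing sequence ranging in a set of at most $|\piset|$ distinct values, and so can strictly increase at most $|\piset|-1$ times over the whole sequence. Because a step that has not yet reached a fixed point must strictly increase at least one coordinate, and because two consecutive equal iterates force $\ntv{n}=S(\ntv{n})$ and hence constancy forever after, the sequence must already be constant after $N:=|\piset|\,(|\piset|-1)$ steps. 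This $N$ depends only on the number of literals, not on $\orv$, so $\urv=S^N(\orv)$ for every valuation $\orv$.

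With this in hand the four properties follow by composing or iterating the parts of Lemma~\ref{st:step}. For~(a), $S^N$ is a finite composition of the continuous map $S$, hence continuous. For~(b), $S^N$ is monotone as a composition of monotone maps, so $\orv\le\orvbis$ gives $\urv=S^N(\orv)\le S^N(\orvbis)=\urvbis$. For~(c), iterating $\orv\le S(\orv)$ yields $\orv\le S(\orv)\le\cdots\le S^N(\orv)=\urv$. For~(d), iterating the image-set inclusion of Lemma~\ref{st:step}(d) shows that the image set of $\urv=S^N(\orv)$ is contained in that of $S^{N-1}(\orv)$, and so on down to $\orv$. The only genuine obstacle is the uniform bound underlying~(a): one must rule out that the number of iterations needed for stabilization grows unboundedly as $\orv$ varies. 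The counting argument above disposes of this cleanly, since Lemma~\ref{st:step}(c) and~(d) together confine the iterates to a fixed finite value set and force monotone coordinatewise growth; everything else is a routine transfer of Lemma~\ref{st:step} through finitely many compositions.
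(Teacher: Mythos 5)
Your proof is correct, and it follows the same basic route as the paper's: the paper's entire proof of Theorem~\ref{st:rev} is the single sentence that everything is an immediate consequence of Lemma~\ref{st:step}, the stabilization of the iterates $\ntv{n}$ having been noted in the construction just before. What you add, and what the paper leaves implicit, is the observation that for part~(a) pointwise stabilization alone is not enough: a pointwise limit of continuous maps need not be continuous, so one needs the number of iterations to stabilization to be bounded uniformly in~$\orv$. Your counting argument supplies exactly this: writing $S$ for the one-step map $\orv\mapsto\utv$, each of the $|\piset|$ coordinates of the iterates is non-decreasing by part~(c) of Lemma~\ref{st:step} and confined by part~(d) to the value set $\{\,\orv_\lxt\mid\lxt\in\piset\,\}$ of cardinality at most $|\piset|$, so at most $|\piset|\,(|\piset|-1)$ strict increases can occur in total, while every step before stabilization forces at least one; hence $\urv=S^N(\orv)$ for the fixed exponent $N=|\piset|\,(|\piset|-1)$, independent of $\orv$, and continuity, monotonicity, inflation and the image-set inclusion all transfer through a \emph{fixed} finite composition of $S$. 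So your write-up is best read not as a different proof but as the paper's proof with its one genuinely non-immediate point made rigorous: parts (b), (c) and (d) really are direct transfers of Lemma~\ref{st:step} along the iteration, with or without the uniform bound, whereas part~(a) quietly relies on it.
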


\begin{proof}\hskip.5em
Everything is an immediate consequence of Lemma~\ref{st:step}. 
\end{proof}

\vskip-2mm
\smallskip
\begin{theorem}\hskip.5em
\label{st:char}
The upper revised valuation $\urv$ is the lowest of the valuations $\val$ that lie above~$\orv$ and satisfy the equation $\val{}'=\val$.
\end{theorem}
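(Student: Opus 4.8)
The plan is to establish two things: that $\urv$ itself belongs to the set of valuations $\val$ satisfying $\orv\le\val$ and $\val{}'=\val$, and that it lies below every member of that set. The first part is essentially already in hand: the inequality $\orv\le\urv$ is Theorem~\ref{st:rev}(c), while the equation $(\urv){}'=\urv$ holds by the very construction of $\urv$, which was defined as the invariant state eventually reached by the non-decreasing sequence $\ntv{n}$. So the only substantive claim to verify is minimality.

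For minimality, I would fix an arbitrary valuation $\val$ with $\orv\le\val$ and $\val{}'=\val$, and prove by induction on $n$ that $\ntv{n}\le\val$ for every $n$. The base case is just the hypothesis $\ntv{0}=\orv\le\val$. For the inductive step, assuming $\ntv{n}\le\val$, I would apply the monotonicity of the single-step transformation, Lemma~\ref{st:step}(b), to obtain $\ntv{n+1}=(\ntv{n}){}'\le\val{}'=\val$, where the final equality invokes the fixed-point hypothesis. Since $\urv=\ntv{N}$ for some finite $N$, this yields $\urv\le\val$, which is exactly what is required.

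The argument is short because all the genuine content is already packaged into the earlier lemmas. The one point that deserves care is the logical role of the fixed-point hypothesis $\val{}'=\val$: it is used precisely to keep the right-hand side of the inductive inequality pinned at $\val$ rather than letting it drift upward under repeated application of the prime operation. I do not anticipate any serious obstacle here; the only thing to double-check is that the single-step operator is genuinely monotone as a map on valuations, so that it may be iterated against the fixed inequality $\ntv{n}\le\val$ without loss, and this is furnished directly by Lemma~\ref{st:step}(b).
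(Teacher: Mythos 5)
Your proof is correct and takes essentially the same approach as the paper's: both split the claim into membership (Theorem~\ref{st:rev}.c plus invariance by construction) and minimality, and both obtain minimality from monotonicity combined with the fixed-point hypothesis. The only difference is packaging: you unroll an induction on the iterates $\ntv{n}$ via the one-step monotonicity of Lemma~\ref{st:step}.b, whereas the paper applies the monotonicity of the completed map $\orv\mapsto\urv$ (Theorem~\ref{st:rev}.b) once and then observes that $\val{}'=\val$ forces the full revision of $\val$ to equal $\val$ itself.
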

\begin{proof}\hskip.5em
Theorem~\ref{st:rev} ensures that $\urv$ satisfies the inequality $\orv\le\urv$.
Besides, its being invariant by the transformation $\orvbis\mapsto\orvbis'$ means that 
it satisfies the equality $\urv{}'=\urv$.
It remains to see that $\orv \le \orvbis$ together with $\utvbis=\orvbis$ implies $\urv\le\orvbis$.
To this effect, it suffices to use Theorem~\ref{st:rev}.b to see that
the inequality $\orv \le \orvbis$ entails $\urv \le \urvbis$, and 
to combine this inequality with the equality $\urvbis = \orvbis$,
which follows from $\utvbis = \orvbis$ by the definition of $\urvbis$.
\end{proof}

\paragraph{3.2}
\textbf{Consistency.}\ensep
Let us look at the meaning of satisfying the equation $\val'=\val$.
Since the inequality $\val'\ge\val$ is always satisfied
---because of the \textit{tertium non datur} clauses, as it was seen in Lemma 3.1.c---
satisfying that equation is equivalent to satisfying the inequality $\val\ge\val'$,
\ie having
\begin{equation}
\label{eq:inequality}
\val_\lit \,\ge\, \min_{\latop{\lxt\in\clau}{\lxt\neq\lit}}\, \val_{\nt\lxt},\qquad\text{$\forall\lit\in\piset$ and $\forall\clau\in\doct$ with $\lit\in\clau$}.
\end{equation}
This is saying that the valuation~$\val$ is consistent with principle~{\small(\teof)} in~connection with all the implications of the form~(\ref{eq:pimplicant}) contained in the doctrine.
This motivates the following definition:\ensep
A valuation $\val$ is \dfc{consistent} with the doctrine $\doct$
\ifoi it satisfies the equation $\val'=\val$, 
where $\val'$ means the image of $\val$
by the transformation defined by (\ref{eq:vprime}).\ensep
As we have just remarked, this is equivalent to requiring that $\val$ satisfies the inequalities~(\ref{eq:inequality}).

\bigskip
The following results show that
this notion of consistency for a general valuation $\val$
agrees with the standard notion of consistency of a truth assignment
---namely that the Boolean formula (\ref{eq:cnf}) evaluates to true---
and it is stronger than the notion of definite consistency of a decision
that we defined in~\secpar{2.2}.

%
%
%
%
%

\smallskip
\begin{proposition}\hskip.5em
\label{st:truthass}
A truth assignment is consistent with the doctrine if and only if the corresponding all-or-none valuation $u$ satisfies the equality $u'=u$.
\end{proposition}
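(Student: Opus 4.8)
The plan is to pass through the inequalities~(\ref{eq:inequality}) and then match them against the Boolean meaning of the clauses. First I would record that since the all-or-none valuation $u$ is balanced and takes only the values $0$ and~$1$, we have $u_{\nt\lxt}=1-u_\lxt\in\{0,1\}$ for every literal, so every $\min$ appearing in~(\ref{eq:vprime}) equals either $0$ or~$1$. Next, as observed at the start of~\secpar{3.2}, the \textit{tertium non datur} clauses of~{\small(D3)} guarantee $u'\ge u$ unconditionally; hence the equality $u'=u$ is equivalent to the system of inequalities~(\ref{eq:inequality}). The proposition therefore reduces to showing that~(\ref{eq:inequality}) holds \ifoi the truth assignment makes every clause of~(\ref{eq:cnf}) true.

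For the direction from consistency to $u'=u$, I would fix a clause $\clau\in\doct$ and a literal $\lit\in\clau$, and examine the right-hand side $\min_{\latop{\lxt\in\clau}{\lxt\neq\lit}} u_{\nt\lxt}$ of~(\ref{eq:inequality}). If this minimum is~$0$ the inequality is immediate because $u_\lit\ge0$. If instead it equals~$1$, then $u_{\nt\lxt}=1$, \ie $\lxt$ is false, for every $\lxt\in\clau\setminus\{\lit\}$; since the truth assignment is consistent, the clause~$\clau$ is true, so some literal of~$\clau$ must be true, and the only candidate left is~$\lit$, giving $u_\lit=1$ and the required inequality.

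For the converse I would argue by contraposition. Suppose the truth assignment makes some clause $\clau$ false, so that every literal of~$\clau$ is false, \ie $u_{\nt\lxt}=1$ for all $\lxt\in\clau$. Choosing any $\lit\in\clau$, assumption~{\small(D2)} ensures that $\clau$ is not a unit clause, so $\clau\setminus\{\lit\}$ is non-empty and the minimum in~(\ref{eq:inequality}) is taken over a non-empty set on which $u_{\nt\lxt}$ is identically~$1$; thus that minimum equals~$1$. Were~(\ref{eq:inequality}) to hold, it would force $u_\lit\ge1$, \ie $\lit$ true, contradicting that $\lit\in\clau$ is false. Hence~(\ref{eq:inequality}) fails, completing the equivalence.

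I do not expect a genuine obstacle here: the content is the faithful translation between the $\min$--$\max$ formulation and the Boolean truth of clauses. The two places that require care---and that I would flag explicitly---are the reduction of $u'=u$ to~(\ref{eq:inequality}), which relies on~{\small(D3)}, and the well-definedness of the $\min$ over a non-empty index set, which is exactly what~{\small(D2)} secures and what makes the converse direction go through.
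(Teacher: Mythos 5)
Your proof is correct and takes essentially the same route as the paper's: both arguments reduce the equality $u'=u$ to the inequalities~(\ref{eq:inequality}) (the paper handles the ``$u'\ge u$'' half inside its case $u_\lit=1$ via Lemma~\ref{st:step}, you invoke the \secpar{3.2} remark up front) and then translate clause truth into the max--min formula, with the converse settled by the same contrapositive argument through an unsatisfied clause. The only cosmetic difference is that you split cases on the value of the minimum while the paper splits on the value of $u_\lit$; both rely on {\small(D2)} and {\small(D3)} in exactly the same way.
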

\begin{proof}\hskip.5em
Let $\Phi$ be the conjunctive normal form that expresses the doctrine.

Let us begin by seeing that the consistency of the truth assignment implies $u'_\lit=u_\lit$ for any $\lit\in\piset$.
When $u_\lit=1$, this is true because of parts~(c) and (d) of Lemma~\ref{st:step}.
When $u_\lit=0$, the truth of $\Phi$ requires that any clause~$\clau$ that contains $\lit$ must contain also some $\lxt\neq\lit$ with $u_\lxt=1$ and therefore $u_{\nt\lxt}=0$. By introducing this in (\ref{eq:vprime}) one gets $u'_\lit=0$.

Let us see now that the truth assignment not being consistent implies $u'\neq u$. The lack of consistency of the truth assignment means that there exists at least one clause $\clau$ that is not satisfied, \ie such that $u_\lit=0$ for any $\lit\in\clau$. By taking one such $\lit$ and noticing that $u_{\nt\lxt}=1$ for any $\lxt\in\clau\setminus\{\lit\}$ ---that exists because of {\small(D2)}--- one gets $u'_\lit=1>0=u_\lit$. 
\end{proof}

\smallskip
\begin{proposition}\hskip.5em
\label{st:decision}
Let $u$ be a partial truth assignment. If the equality $u' = u$ is satisfied, then
$u$ is definitely consistent with the doctrine.
\end{proposition}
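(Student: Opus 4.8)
The plan is to read off definite consistency directly from the inequality form of the invariance equation $u'=u$. First I would recall that, by the \textit{tertium non datur} clauses supplied by {\small(D3)}, the inequality $u\le u'$ holds automatically (Lemma~\ref{st:step}.c), so that the hypothesis $u'=u$ is equivalent to the reverse inequality $u\ge u'$. Spelled out through the definition~(\ref{eq:vprime}), this is exactly the system~(\ref{eq:inequality}): for every $\lit\in\piset$ and every $\clau\in\doct$ with $\lit\in\clau$,
\begin{equation*}
u_\lit \,\ge\, \min_{\latop{\lxt\in\clau}{\lxt\neq\lit}} u_{\nt\lxt}.
\end{equation*}

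Next I would check the definition of definite consistency from \secpar{2.2} directly against this inequality. So I would fix a clause $\clau\in\doct$ and a literal $\lit\in\clau$ and assume the antecedent of that definition, namely $u_\lxt=0$ for every $\lxt\in\clau\setminus\{\lit\}$; the goal is to conclude $u_\lit=1$. Here two earlier conventions do all the work. Since $u$ is a partial truth assignment it is balanced, so $u_\lxt=0$ forces $u_{\nt\lxt}=1$ for each such $\lxt$. And by assumption {\small(D2)} the clause $\clau$ has at least two literals, so $\clau\setminus\{\lit\}$ is non-empty and the minimum above is taken over a non-empty set of values, all equal to~$1$. Substituting into the displayed inequality gives $u_\lit\ge 1$, hence $u_\lit=1$, which is precisely what definite consistency requires.

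There is no genuine obstacle in this argument: it is a direct unfolding of the two definitions, with balancedness converting the hypothesized $0$'s into $1$'s on the negated literals and {\small(D2)} guaranteeing that the minimum is non-vacuous. The only point deserving a moment's care is matching the quantifier structure—definite consistency fixes a clause together with a distinguished literal $\lit$ and inspects the remaining literals of that same clause—to the single instance of~(\ref{eq:inequality}) indexed by that same pair $(\clau,\lit)$; once these are aligned, the conclusion is immediate.
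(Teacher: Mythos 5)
Your proof is correct and follows essentially the same route as the paper's: both are direct unfoldings of the definitions, with the key step being that the hypothesized zeros on $\clau\setminus\{\lit\}$ force $\min_{\latop{\lxt\in\clau}{\lxt\neq\lit}} u_{\nt\lxt}=1$ (via balancedness) and hence $u_\lit=1$ through the invariance $u'=u$. The only cosmetic difference is that you route through the inequality form~(\ref{eq:inequality}) and Lemma~\ref{st:step}.c, while the paper applies the definition of $u'$ directly to get $u'_\lit=1$; the underlying computation is identical.
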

\begin{proof}\hskip.5em
In accordance with the definition of definite consistency given in \secpar{2.2},
we have to check that for each $\clau\in\doct$ and every $\lit\in\clau$,
having $u_{\nt\lxt}=1$ for all $\lxt\in\clau\setminus\{\lit\}$
implies $u_\lit=1$. To this effect, it suffices to use the definition of~$u'$,
which gives $u'_\lit=1$, plus the assumed equality $u_\lit=u'_\lit$.
\end{proof}

\remark
The converse is not true: 
In fact, for a clause $\lit\lor\liit\lor\liiit$ the definition of definite consistency allows for having $u_\lit=0$ and $u_\liit=u_\liiit=\onehalf$, in which case one gets $u'_\lit=\onehalf\neq u_\lit$.
\ensep
\xtra
Having said that, one easily sees that $u$ being definitely consistent implies $\mu(u') = u$.

\bigskip
The following is one of the fundamental results of this article:

\smallskip
\begin{theorem}\hskip.5em
\label{st:dec}
A valuation $\val$ being consistent, \ie satisfying the equality $\val'=\val$,
entails the definite consistency of the associated decision of margin~$\mg$
---that is the partial truth assignment~$\mu_\mg(\val)$---
for any $\mg$ in the interval $0\le\mg\le1$.
\end{theorem}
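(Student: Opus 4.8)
The plan is to unwind the definition of definite consistency from \secpar{2.2} and reduce the statement to a single implication about acceptabilities. Fix a clause $\clau\in\doct$ and a literal $\lit\in\clau$, and assume the hypothesis of definite consistency, namely that $\mu_\mg(\val)$ rejects every $\lxt\in\clau\setminus\{\lit\}$; by the decision rule~(\ref{eq:acceptedrejected})--(\ref{eq:undecided}) this means $\val_{\nt\lxt}-\val_\lxt>\mg$ for each such $\lxt$. What must be shown is that $\lit$ is then accepted, i.e.\ that $\val_\lit-\val_{\nt\lit}>\mg$. The only tool available is the consistency hypothesis in its inequality form~(\ref{eq:inequality}), which I intend to invoke twice over the single clause $\clau$: once to bound $\val_\lit$ from below, and once---this is the decisive move---to bound $\val_{\nt\lit}$ from above.

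First I would set $m=\min_{\lxt\in\clau\setminus\{\lit\}}\val_{\nt\lxt}$ and pick a literal $\lyt\in\clau\setminus\{\lit\}$ attaining it, so that $\val_{\nt\lyt}=m$. Applying~(\ref{eq:inequality}) to the pair $(\clau,\lit)$ gives $\val_\lit\ge m$ at once. For the upper bound on $\val_{\nt\lit}$ I would apply~(\ref{eq:inequality}) to the pair $(\clau,\lyt)$, obtaining $\val_\lyt\ge\min_{\lxt\in\clau\setminus\{\lyt\}}\val_{\nt\lxt}$. The rejection hypothesis for $\lyt$ gives $\val_\lyt<\val_{\nt\lyt}-\mg=m-\mg\le m$, so $\val_\lyt$ is strictly smaller than $m$, hence strictly smaller than $\val_{\nt\lxt}$ for every $\lxt\in\clau\setminus\{\lit,\lyt\}$ (those terms are all $\ge m$). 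Consequently the minimum on the right-hand side cannot be attained at any of those literals and must be witnessed by $\lit$ alone; that is, $\val_{\nt\lit}\le\val_\lyt<m-\mg$.

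Combining the two bounds yields $\val_\lit-\val_{\nt\lit}>m-(m-\mg)=\mg$, which is exactly the acceptance condition for $\lit$; as $\clau$ and $\lit$ were arbitrary, this establishes the definite consistency of $\mu_\mg(\val)$. The argument is uniform in $\mg\in[0,1]$, and the degenerate case where $\clau$ is a \textit{tertium non datur} clause $\{\lit,\nt\lit\}$ (so that $\clau\setminus\{\lit,\lyt\}$ is empty) is covered automatically, the relevant inequality then reading $\val_{\nt\lit}\le\val_{\nt\lit}$. I expect the main obstacle to be precisely the realization that lower-bounding $\val_\lit$ is not enough---one also needs an upper bound on $\val_{\nt\lit}$, and this bound must be extracted from the \emph{same} clause $\clau$ by applying the consistency inequality to the minimizing companion literal $\lyt$ rather than to $\lit$ itself; once this is seen, the remaining estimates are routine.
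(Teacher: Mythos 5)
Your proof is correct and is essentially the paper's own argument: both rest on invoking the consistency inequality~(\ref{eq:inequality}) a second time at the literal of $\clau\setminus\{\lit\}$ attaining $\min\val_{\nt\lxt}$ (your $\lyt$, the paper's $\liit$), and both use the rejection hypothesis only for that minimizing literal. The sole difference is presentational: the paper argues by contradiction, assuming $\lit$ not accepted and contradicting the rejection of $\liit$, whereas you rearrange the same two inequalities into a direct derivation of the acceptance bound $\val_\lit-\val_{\nt\lit}>\mg$.
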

\begin{proof}\hskip.5em
Recall that the consistency of $\val$ is equivalent to its satisfying 
the inequalities (\ref{eq:inequality}).
In accordance with the definition of definite consistency, we must show that
for each $\clau\in\doct$ and every $\lit\in\clau$, if all $\lxt\in\clau\setminus\{\lit\}$ are rejected for the decision of margin $\mg$ associated with $\val$, then $\lit$ is accepted for this decision.
Assume the contrary: $\lit$ is not accepted, that is
\begin{equation}
\label{eq:pnotaccepted}
\val_{\nt\lit} \,\ge\, \val_\lit-\mg.
\end{equation}
From (\ref{eq:inequality}) it follows that 
\begin{equation}
\label{eq:ineqp}
\val_\lit \,\ge\, \min_{\latop{\lxt\in\clau}{\lxt\neq\lit}} \val_{\nt\lxt} \,=\, \val_{\nt\liit},
\end{equation}
for some $\liit\in\clau\setminus\{\lit\}$.
Let us fix such a $\liit$.
By combining (\ref{eq:pnotaccepted}) and (\ref{eq:ineqp}) we get
\begin{equation}
\label{eq:pcomb}
\val_{\nt\lit} \,\ge\, \val_{\nt\liit} - \mg.
\end{equation}

Now, since $\liit\neq\lit$, the hypothesis that $\liit$ is rejected is saying that
\begin{equation}
\label{eq:qrejected}
\val_{\nt\liit} - \mg \,>\, \val_\liit.
\end{equation}
On the other hand, (\ref{eq:inequality}) entails that
\begin{equation}
\label{eq:ineqq}
\val_\liit \,\ge\, \min_{\latop{\lxt\in\clau}{\lxt\neq\liit}} \val_{\nt\lxt}
\,=\, \min\Big(\,\val_{\nt\lit}\,\,,\min_{\latop{\lxt\in\clau}{\lxt\not\in\{\lit,\liit\}}} \val_{\nt\lxt}\,\Big),
\end{equation}
from which (\ref{eq:pcomb}) and (\ref{eq:ineqp}) ---and the hypothesis that $\mg\ge 0$--- allow to conclude that
\begin{equation}
\label{eq:qcomb}
\val_\liit \,\ge\, \val_{\nt\liit} - \mg,
\end{equation}
in contradiction with (\ref{eq:qrejected}).
\end{proof}

\medskip
\begin{corollary}\hskip.5em
\label{st:dec_cor}
For any $\mg$ in the interval $0\cd\le\mg\cd\le1$, 
the decision of margin~$\mg$ associated with 
the upper revised valuation~$\urv$
---that is the partial truth assignment~$\mu_\mg(\urv)$---
is always definitely consistent.
\end{corollary}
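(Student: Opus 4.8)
The plan is to recognize this corollary as an immediate specialization of Theorem~\ref{st:dec}. First I would recall that, by its very construction, the upper revised valuation $\urv$ is the invariant state eventually reached by the non-decreasing iteration $\ntv{0}=\orv,\ \ntv{1}=\utv,\dots$, so that it satisfies the fixed-point equation $\urv{}'=\urv$. This is exactly the invariance property already recorded in Theorem~\ref{st:char}. According to the definition of consistency introduced in \secpar{3.2}, satisfying $\urv{}'=\urv$ is precisely what it means for the valuation $\urv$ to be consistent with the doctrine $\doct$.

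Having thus identified $\urv$ as a consistent valuation, the conclusion follows by applying Theorem~\ref{st:dec} to the particular choice $\val=\urv$. That theorem guarantees that the consistency of a valuation $\val$ entails the definite consistency of the associated decision $\mu_\mg(\val)$ for every $\mg$ in the interval $0\le\mg\le1$. Specializing to $\val=\urv$ yields the definite consistency of $\mu_\mg(\urv)$ for all such $\mg$, which is exactly the assertion of the corollary.

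There is essentially no obstacle here, since all the substantive work has been carried out in the preceding theorem; the corollary is a one-line deduction. The only point deserving any care is the verification that $\urv$ genuinely satisfies the fixed-point equation $\urv{}'=\urv$ rather than merely the inequality $\orv\le\urv$ of Theorem~\ref{st:rev}.c. But this is guaranteed by the definition of $\urv$ as the eventual invariant valuation of the iterative process --- which terminates because, by Lemma~\ref{st:step}.d, every iterate takes values in the finite set $\{\,\orv_\lxt\mid\lxt\in\piset\,\}$ --- and it is in any case restated in Theorem~\ref{st:char}.
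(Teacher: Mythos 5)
Your proposal is correct and follows exactly the paper's own argument: the corollary is an immediate application of Theorem~3.6 to $\val=\urv$, whose consistency (the fixed-point property $\urv{}'=\urv$) is guaranteed by its construction as the invariant state of the iteration. The extra care you take to verify the fixed-point equation is sound but merely restates what the paper establishes in \S\,3.1 and Theorem~3.3.
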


\begin{proof}
It is simply a matter of recalling that~$\urv$
satisfies the hypothesis of the preceding theorem.
\end{proof}

\paragraph{3.3}
\textbf{Respect for consistent majority decisions and respect for unanimity.}\ensep
A very natural property to ask for is the following:
If the majority criterion applied to the original valuation~$\orv$
decides on each issue and is consistent with the doctrine,
then this decision should hold.
This property of \textbf{respect for consistent majority decisions}
will be obtained as a consequence of the following fact:

\medskip
\begin{proposition}\hskip.5em
\label{st:uv}
For any valuation $\orv$, and any consistent truth assignment $u$, one has
\begin{equation}
\label{eq:umax}
\max_{u_\lit=1}\, \urv_{\nt\lit} \,\le\, \max_{u_\liit=1}\, \orv_{\nt\liit}.
\end{equation}
\end{proposition}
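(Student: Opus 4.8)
The plan is to exploit the finite iterative construction of $\urv$ and reduce the whole statement to a monotonicity property of a single step of the transformation $\val\mapsto\val'$ defined by (\ref{eq:vprime}). Let $F=\{\,\lyt\in\piset\mid u_\lyt=0\,\}$ be the set of literals declared false by $u$. Since $u$ is a truth assignment, for each atom exactly one of its two literals is true, so $F=\{\,\nt\lit\mid u_\lit=1\,\}$; hence the two sides of (\ref{eq:umax}) are nothing but $\max_{\lyt\in F}\urv_\lyt$ and $\max_{\lyt\in F}\orv_\lyt$. Writing $m(\val):=\max_{\lyt\in F}\val_\lyt$ for an arbitrary valuation $\val$, it therefore suffices to prove that $m$ does not increase along one step, i.e. $m(\val')\le m(\val)$, since $\urv$ is reached from $\orv$ after finitely many such steps.

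The heart of the argument is this single-step inequality, and it is exactly where the consistency of $u$ is used. Fix a literal $\lyt\in F$ and any clause $\clau\in\doct$ with $\lyt\in\clau$. Because $u$ is consistent with the doctrine, the clause $\clau$ is true under $u$, so it contains some literal $\lxt_0$ with $u_{\lxt_0}=1$. As $\lyt$ is false we have $\lxt_0\neq\lyt$, and $u_{\lxt_0}=1$ forces $u_{\nt{\lxt_0}}=0$, that is $\nt{\lxt_0}\in F$. Consequently the term $\val_{\nt{\lxt_0}}$ is one of the arguments of the inner $\min$ of (\ref{eq:vprime}), giving $\min_{\lxt\in\clau,\,\lxt\neq\lyt}\val_{\nt\lxt}\le\val_{\nt{\lxt_0}}\le m(\val)$. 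This bound holds for every clause containing $\lyt$, so taking the outer maximum yields $\val'_\lyt\le m(\val)$; maximizing over $\lyt\in F$ then gives $m(\val')\le m(\val)$, as required.

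To assemble the result I would recall that $\urv=\ntv{N}$ for some $N$, with $\ntv{0}=\orv$ and each $\ntv{n+1}$ obtained from $\ntv{n}$ by (\ref{eq:vprime}). Applying the single-step inequality to $\val=\ntv{n}$ repeatedly gives the chain $m(\urv)=m(\ntv{N})\le m(\ntv{N-1})\le\dots\le m(\ntv{0})=m(\orv)$, which is precisely (\ref{eq:umax}).

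I expect the only delicate point to be the realization that one must follow the maximum over the whole fixed set $F$ rather than argue literal by literal. Indeed, the fixed-point identity $\urv{}'=\urv$ by itself does not forbid a $u$-false literal from acquiring a large revised value through a cycle of implications; what rules this out is the grounding supplied by the finite iteration, which is why the single-step estimate above---and not merely invariance under the transformation---is the natural vehicle for the proof. Note also that consistency of $u$ enters in precisely one place: it guarantees that every clause meeting $F$ does so again through the negation of one of its $u$-true literals.
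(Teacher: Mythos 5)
Your proof is correct and takes essentially the same route as the paper: both reduce (\ref{eq:umax}) to the single-step estimate that the maximum of the valuation over the set of $u$-false literals cannot increase under $\val\mapsto\val'$, using the consistency of $u$ to locate in each relevant clause a true literal whose negation lies again in that set, so that the inner $\min$ in (\ref{eq:vprime}) is bounded by that maximum. Your packaging via the fixed set $F$ and the quantity $m(\val)$ is only a notational variant of the paper's argument, and your closing chain over the iterates $\ntv{n}$ is the same finite induction the paper invokes when it says it suffices to verify the property for one application of the transformation.
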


\begin{proof}\hskip.5em
Since $\urv$ is obtained by iterating the transformation $\orv\mapsto\utv$,
it~suffices to show that this transformation has the following property analogous to (\ref{eq:umax}): \,$\utv_{\nt\lit} \,\le\, \max_{u_\liit=1} \orv_{\nt\liit}$\, whenever $u_\lit=1$. This follows immediately from (\ref{eq:vprime}) because the consistency of $u$ requires every clause $\clau$ that contains $\nt\lit$ with $u_\lit=1$ to contain also some $\liit$ with $u_\liit=1$, which ensures that $\min_{\lxt\in\clau, \lxt\neq\nt\lit}\,\orv_{\nt\lxt} \le \max_{u_\liit=1} \orv_{\nt\liit}$.
\end{proof}


\smallskip
\noindent
The property of respect for consistent majority decisions corresponds to the case $\theta\ge\onehalf\ge\theta-\mg$ of the following more general result:

\medskip
\begin{theorem}\hskip.5em
\label{st:majority}%
Assume that there exist $\theta\in(0,1)$ and $\mg\in[0,1)$ such that every $p\in\piset$ satisfies either $\orv_\lit>\theta\ge\theta-\mg>\orv_{\nt\lit}$ or, contrarily, $\orv_{\nt\lit}>\theta\ge\theta-\mg>\orv_\lit$. If the decision associated with $\orv$
(which contains no undecidedness and has a margin $\mg$) 
is consistent with the doctrine, then it agrees with the decision of margin $\mg$ associated with the upper revised valuation $\urv$.
\end{theorem}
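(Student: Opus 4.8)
The plan is to prove the two decisions coincide literal by literal, and to do so it is cleanest to first read off the shape of the decision $u:=\mu_\mg(\orv)$. The hypothesis says that for every $\lit\in\piset$ exactly one of $\lit,\nt\lit$ has $\orv$-value above $\theta$ and the other has value below $\theta-\mg$; hence $\mu_\mg(\orv)$ carries no undecidedness and is characterized by $u_\lit=1\iff\orv_\lit>\theta\iff\orv_{\nt\lit}<\theta-\mg$. By assumption $u$ is consistent with the doctrine, so it is a consistent truth assignment in the sense of Proposition~\ref{st:truthass}. Since every literal is either accepted or rejected in $u$, and $\lit$ is rejected exactly when $\nt\lit$ is accepted, it suffices to show that each \emph{accepted} literal, i.e.\ each $\lit$ with $u_\lit=1$, satisfies $\urv_\lit-\urv_{\nt\lit}>\mg$: applying this with $\lit$ replaced by $\nt\lit$ then takes care of the rejected literals, and no undecidedness can survive, so $\mu_\mg(\urv)=u$.

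For an accepted literal $\lit$ I would establish the required gap from two one-sided bounds. The lower bound is immediate from monotonicity: Theorem~\ref{st:rev}.c gives $\orv\le\urv$, whence $\urv_\lit\ge\orv_\lit>\theta$. The crux is a \emph{uniform} upper bound on $\urv_{\nt\lit}$ over the accepted literals, and this is exactly what Proposition~\ref{st:uv} delivers. Applied to the consistent truth assignment $u$ it yields $\max_{u_\lit=1}\urv_{\nt\lit}\le\max_{u_\liit=1}\orv_{\nt\liit}$. By the characterization of $u$, every $\liit$ with $u_\liit=1$ has $\orv_{\nt\liit}<\theta-\mg$; as $\piset$ is finite and (assuming $\pset\neq\emptyset$) nonempty, the right-hand maximum is a maximum of finitely many numbers each strictly below $\theta-\mg$, hence is itself $<\theta-\mg$. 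Therefore $\urv_{\nt\lit}<\theta-\mg$ for every accepted $\lit$.

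Combining the two bounds, for each accepted $\lit$ one gets $\urv_\lit-\urv_{\nt\lit}>\theta-(\theta-\mg)=\mg$, which is precisely the claim. The only points needing attention are the strictness of the inequalities, since both $\urv_\lit>\theta$ and $\urv_{\nt\lit}<\theta-\mg$ are strict their combination does give the strict $>\mg$ required for acceptance, and the harmless remark that the maxima in Proposition~\ref{st:uv} range over a nonempty finite set so that the strict upper bound is preserved. I expect the genuine step to be the recognition that Proposition~\ref{st:uv} is the right instrument for controlling $\urv_{\nt\lit}$ from above; once that is in place the argument is pure bookkeeping with the definition of $\mu_\mg$ and the monotonicity $\orv\le\urv$.
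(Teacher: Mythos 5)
Your proof is correct and follows essentially the same route as the paper's: both identify Proposition~\ref{st:uv}, applied to the consistent truth assignment $u=\mu_\mg(\orv)$, as the key instrument for bounding $\urv_{\nt\lit}$ above, and combine it with $\orv\le\urv$ (Theorem~\ref{st:rev}.c) to obtain $\urv_\lit-\urv_{\nt\lit}>\mg$ for every accepted literal. The paper phrases the two bounds as a single inequality $\min_{u_\lit=1}\orv_\lit-\max_{u_\lit=1}\orv_{\nt\lit}>\mg$ rather than splitting at $\theta$ and $\theta-\mg$, but this is only a difference in bookkeeping, not in substance.
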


\begin{proof}\hskip.5em
Let $u$ be the truth assignment that corresponds to the decision associated with $\orv$, that is:
\begin{equation}
u_\lit = \begin{cases}
1,&\text{if\, $\orv_\lit>\theta\ge\theta-\mg>\orv_{\nt\lit}$;}\\
0,&\text{if\, $\orv_{\nt\lit}>\theta\ge\theta-\mg>\orv_\lit$.}
\end{cases}
\end{equation}
From this definition it follows that
\begin{equation}
\label{eq:new_omaxlessthanmin}
\min_{u_\lit=1}\, \orv_\lit \,-\, \max_{u_\lit=1}\, \orv_{\nt\lit} \,>\, \mg.
\end{equation}
By combining this inequality with (\ref{eq:umax}), which holds because of the consistency of $u$, and part~(c) of Theorem~\ref{st:rev}, it follows that
\begin{equation}
\min_{u_\lit=1}\, \urv_\lit \,-\, \max_{u_\lit=1}\, \urv_{\nt\lit}\,>\, \mg,
\end{equation}
which implies $\urv_\lit - \urv_{\nt\lit} > \mg$ whenever $u_\lit=1$.
\end{proof}

\remark
When $\orv$ is balanced,
the condition $\orv_\lit > \onehalf > \orv_{\nt\lit}$ is equivalent to requiring simply that $\orv_\lit > \orv_{\nt\lit}$.
Generally speaking, however, the latter condition cannot be substituted for the former in the preceding result.

\smallskip
\bigskip
According to part~(c) of Theorem~\ref{st:rev}, $\orv_\lit=1$ implies $\urv_\lit=1$. The following result shows that the converse implication holds when $\orv$ is an aggregate of 
definitely consistent partial truth assignments.
By an aggregate we mean a convex combination of several components $\orvk$, as in \textup{(\ref{eq:convcomb})}.
\smallskip
\begin{proposition}\hskip.5em
\label{st:unampro}
Assume that $\orv$ is an aggregate of
definitely consistent partial truth assignments.
In this case, having $\urv_{\lit} = 1$ implies $\orv_{\lit} = 1$. 
\end{proposition}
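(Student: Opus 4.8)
The plan is to track, throughout the entire iteration that produces~$\urv$, the set of literals whose value is exactly~$1$, and to show that this set never grows. Write $U = \{\,\lit\in\piset \mid \orv_\lit = 1\,\}$. Since $\orv_\lit = \sum_k \pesk\,\orvk_\lit$ is a convex combination of the component values (\ref{eq:convcomb}), each of which is at most~$1$, we have $\orv_\lit = 1$ precisely when $\orvk_\lit = 1$ for every component~$k$ with $\pesk>0$. Hence the desired conclusion $\urv_\lit = 1 \Rightarrow \orv_\lit = 1$ is equivalent to the inclusion $\{\,\lit\mid\urv_\lit=1\,\} \subseteq U$; combined with the trivial reverse inclusion coming from $\orv\le\urv$ (Theorem~\ref{st:rev}.c), this in fact yields equality.

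Recall that $\urv = \ntv{n}$ for $n$ large enough, where $\ntv{0} = \orv$ and $\ntv{n+1}$ is the image of $\ntv{n}$ under the transformation~(\ref{eq:vprime}). I would prove by induction on~$n$ that $\{\,\lit\mid\ntv{n}_\lit = 1\,\} = U$. The base case is the definition of~$U$. For the inductive step, the inclusion $U\subseteq\{\,\lit\mid\ntv{n+1}_\lit=1\,\}$ is immediate from the sequence $\ntv{n}$ being non-decreasing. The substance lies in the reverse inclusion: suppose $\ntv{n+1}_\lit = 1$. By~(\ref{eq:vprime}), since every argument of the $\max$ and $\min$ is at most~$1$, there must be a clause $\clau\in\doct$ with $\lit\in\clau$ and $\ntv{n}_{\nt\lxt} = 1$ for every $\lxt\in\clau\setminus\{\lit\}$; by the inductive hypothesis each such $\nt\lxt$ lies in~$U$, so $\orvk_{\nt\lxt} = 1$, equivalently $\orvk_\lxt = 0$ (the components being balanced), for every such $\lxt$ and every~$k$.

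This is exactly the situation that the definite consistency of the components resolves, and it is where the hypothesis on~$\orv$ is used: applying the definite-consistency implication for the clause $\clau$ and the literal~$\lit$ to each component $\orvk$ forces $\orvk_\lit = 1$ for every~$k$, whence $\orv_\lit = 1$ and $\lit\in U$. I expect the only real obstacle to be the temptation to argue by ``re-applying the statement to~$\utv$'': the one-step transform $\utv$ need not itself be an aggregate of definitely consistent partial truth assignments (by Lemma~\ref{st:step}.d its values lie among those of~$\orv$, which need not be confined to $\{0,\onehalf,1\}$), and the fixed-point equation $\urv{}'=\urv$ used on its own would make the reasoning circular, since it would only relate $\urv_\lit=1$ to $\urv_{\nt\lxt}=1$ rather than to the data~$\orv$. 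Casting the induction in terms of the invariant set~$U$ and reducing back to the \emph{original} components at every step---whose definite consistency is a fixed, available hypothesis---sidesteps this cleanly.
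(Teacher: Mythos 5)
Your proposal is correct and takes essentially the same route as the paper's own proof: the paper also argues by induction on the iterates $\ntv{n}$, showing that $\ntv{n}_\lit=1$ implies $\orv_\lit=1$ (your set-equality invariant $\{\,\lit\mid\ntv{n}_\lit=1\,\}=U$ is that same statement plus the trivial monotonicity inclusion), and its inductive step likewise extracts a clause realizing the maximum in~(\ref{eq:vprime}), passes to the components $\orvk$ via convexity, and uses balancedness plus definite consistency to force $\orvk_\lit=1$. The only cosmetic difference is that the paper first splits off the case $\ntv{n-1}_\lit=1$ so that the extracted clause is a proper one (not a \textit{tertium non datur} clause), whereas you handle all clauses uniformly; both variants work.
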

\begin{proof}\hskip.5em
We will proceed by induction on the iterates~$\ntv{n}$. More specifically, we aim at showing that 
\begin{equation}
\label{eq:unamn}
\text{for any $\lit\in\piset$, \,the equality $\ntv{n}_{\lit} = 1$ \,implies\, $\orv_{\lit} = 1,$}
\stepcounter{equation}
\tag{\theequation$\smash{{}_n}$}
\end{equation}
\newcommand\equnamn[1]{\theequation$\smash{}_{#1}$}%
\newcommand\sqz{\kern-.175pt\ }%
\newcommand\sqzz{\kern-.875pt\ }%
where\sqz we\sqz emphasize\sqz that\sqz the\sqz statement\sqz includes\sqz the\sqz quantifier\sqzz ``for\sqzz any\sqzz $\lit\cd\in\piset$''\!.\linebreak
For $n=0$ this is trivially true since $\smash{\ntv{0}=\orv}$.
For $n\ge1$, (\ref{eq:unamn}) can be obtained from (\equnamn{n-1}) in the following way:
Let us assume that $\smash{\ntv{n}_{\lit} = 1}$.
If~$\smash{\ntv{n-1}_{\lit} = 1}$, the conclusion follows directly from (\equnamn{n-1}).
Otherwise, the~inequality $\smash{1 = \ntv{n}_{\lit} > \ntv{n-1}_{\lit}}$ entails the existence of a proper clause~$\clau$, \ie a clause different from the ones associated with the \textit{tertium non datur} principle, such that $\ntv{n-1}_{\nt\lxt}=1$ for any $\lxt\in\clau\setminus\{\lit\}$. Now, (\equnamn{n-1}) ensures that $\orv_{\nt\lxt}=1$.
\linebreak
Notice that this implies $\utv_\lit=1$, which entails $\orv_\lit=1$ once (\equnamn{1}) is established; however, establishing (\equnamn{n}) for any $n\ge1$ requires making use of the fact that $v$ has the form~(\ref{eq:convcomb}), where the $\orvk$ are definitely consistent partial truth assignments. Since $\pesk\ge0$ and $\sum_k\pesk=1$, it is clear that the equality $\orv_{\nt\lxt}=1$ entails $\orvk_{\nt\lxt}=1$ whenever $\pesk>0$,
\ie for any $k$ that matters.
Now, since
partial truth assignments are balanced,
it~follows that $\orvk_\lxt=0$. On the other hand, since the $\orvk$ are
definitely consistent with the doctrine,
and the preceding conclusion is valid for any $\lxt\in\clau\setminus\{\lit\}$, it follows that $\orvk_\lit=1$. Finally, this certainly implies $\orv_\lit=1$.
\end{proof}

\smallskip
\noindent
An important consequence of the preceding result is the following property of \textbf{respect for unanimity}:

\smallskip
\begin{theorem}\hskip.5em
\label{st:unanimity}
Assume that $\orv$ is an aggregate of
definitely consistent partial truth assignments.
In this case, having $\orv_\lit = 1$ implies that $\lit$ is accepted by the basic decision associated with the upper revised valuation~$\urv$.
\end{theorem}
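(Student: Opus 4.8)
The plan is to reduce the statement to a single strict inequality and then dispose of it using the proposition that immediately precedes it. By the definitions of \secpar{2.2}, $\lit$ is accepted by the basic decision (margin $\mg=0$) associated with $\urv$ precisely when $\urv_\lit - \urv_{\nt\lit} > 0$, i.e.\ when $\urv_\lit > \urv_{\nt\lit}$. So I first observe that the hypothesis $\orv_\lit = 1$, combined with part~(c) of Theorem~\ref{st:rev} (which gives $\orv \le \urv$) and the fact that valuations take values in the interval from $0$ to $1$, forces $\urv_\lit = 1$. It therefore suffices to prove the strict inequality $\urv_{\nt\lit} < 1$.

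To establish $\urv_{\nt\lit} < 1$ I would argue by contradiction. Suppose $\urv_{\nt\lit} = 1$. Since $\piset$ is closed under negation, I can apply Proposition~\ref{st:unampro} with $\nt\lit$ in the role of $\lit$; this yields $\orv_{\nt\lit} = 1$. Together with the hypothesis $\orv_\lit = 1$, this says that both a literal and its negation receive the maximal original degree of belief.

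The final step exploits the aggregate structure of $\orv$. Writing $\orv = \sum_k \pesk\,\orvk$ as in~(\ref{eq:convcomb}), with $\pesk \ge 0$, $\sum_k \pesk = 1$, and each $\orvk$ a partial truth assignment (hence taking values in $\{0,\onehalf,1\}$), the equalities $\orv_\lit = 1$ and $\orv_{\nt\lit} = 1$ each force $\orvk_\lit = 1$ and $\orvk_{\nt\lit} = 1$ for every index $k$ with $\pesk > 0$; and at least one such index exists because $\sum_k \pesk = 1$. But partial truth assignments are balanced, so $\orvk_\lit + \orvk_{\nt\lit} = 1$, which is incompatible with $\orvk_\lit = \orvk_{\nt\lit} = 1$. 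This contradiction shows $\urv_{\nt\lit} < 1 = \urv_\lit$, which is exactly what is needed.

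I expect no serious obstacle here: all the substantive work is carried by Proposition~\ref{st:unampro}, whose induction already converts a unit revised belief into a unit original belief. The present statement is essentially a repackaging of that result, the only new ingredients being the trivial passage from $\orv_\lit=1$ to $\urv_\lit=1$ via monotonicity and the elementary balancedness argument that rules out $\orv_\lit = \orv_{\nt\lit} = 1$ occurring simultaneously.
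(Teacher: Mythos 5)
Your proof is correct and follows essentially the same route as the paper's: reduce to showing $\urv_{\nt\lit}<1$, then argue by contradiction via Proposition~\ref{st:unampro} applied to $\nt\lit$, using the balancedness inherited from the component partial truth assignments. The only (immaterial) difference is that you derive the final contradiction component-by-component, whereas the paper sums the components' balance equalities to get $\orv_\lit+\orv_{\nt\lit}=1$ directly.
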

\begin{proof}\hskip.5em
We must show that $\urv_\lit>\urv_{\nt\lit}$. In the present case we know that $\urv_\lit = 1$, so that the proof reduces to showing that $\urv_{\nt\lit}<1$. This inequality is easily obtained by contradiction. In fact, according to Proposition~\ref{st:unampro}, $\urv_{\nt\lit} = 1$ would imply $\orv_{\nt\lit} = 1$, that is incompatible with $\orv_\lit = 1$ because $\orv_\lit+\orv_{\nt\lit}=1$, which derives from the analogous equality satisfied by the components $\orvk$. 
\end{proof}


\paragraph{3.4}
\textbf{Monotonicity.}\ensep
In this subsection we look at the effect of raising the original value of a particular literal $\lit$ 
\emph{without any change in the others.}
We~aim at showing that in these circumstances the acceptability of $\lit$ either increases or stays constant.

\begin{lemma}\hskip.5em
\label{st:monoLem}
Consider the dependence of $\urv_\liit$ on $\orv_\lit$ when $\orv_\lxt$ is kept constant for all $\lxt\neq\lit$.
For any $\liit\in\piset$, this dependence has the following form:
there exist $a$ and $b$ with $0\le a\le b\le 1$ such that
\begin{equation}
\label{eq:monoLem}
\urv_\liit \,=\,
\begin{cases}
a, &\text{for $0\le\orv_\lit\le a$,}\\
\orv_\lit, &\text{for $a\le\orv_\lit\le b,$}\\
b, &\text{for $b\le\orv_\lit\le 1$.}
\end{cases}
\end{equation}
\end{lemma}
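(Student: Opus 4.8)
The plan is to regard $\urv_\liit$ as a function $f(t)$ of the single variable $t=\orv_\lit$, with all the other entries of $\orv$ frozen, and to read off the shape \eqref{eq:monoLem} from three structural properties of $f$. First, $f$ is continuous and non-decreasing: this is immediate from parts~(a) and~(b) of Theorem~\ref{st:rev}, since $t\mapsto\orv$ is affine and raising $\orv_\lit$ alone yields a pointwise-larger valuation. Second, $f$ is $1$-Lipschitz. This is not stated verbatim above, so I would derive it from the explicit formula~\eqref{eq:vprime}: because $|\max_i a_i-\max_i b_i|\le\max_i|a_i-b_i|$ and likewise for $\min$, the step map satisfies $\|\utv-\utvbis\|_\infty\le\|\orv-\orvbis\|_\infty$; iterating preserves this bound, and since the iteration stabilizes it passes to the limit, giving $\|\urv-\urvbis\|_\infty\le\|\orv-\orvbis\|_\infty$. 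Applying this to two valuations that differ only in the coordinate $\lit$, say by $t_1$ and $t_2$, yields $|f(t_1)-f(t_2)|\le|t_1-t_2|$. Third, by part~(d) of Theorem~\ref{st:rev} the value $f(t)=\urv_\liit$ lies in the image set of $\orv$, which is $\{t\}\cup c$ with $c=\{\orv_\lxt\mid\lxt\neq\lit\}$ a \emph{fixed finite} set, independent of $t$.

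Next I would locate the diagonal piece. Let $S=\{t\in[0,1]: f(t)=t\}$. It is nonempty, because $f(0)-0\ge0\ge f(1)-1$ (values stay in $[0,1]$) forces the continuous function $t\mapsto f(t)-t$ to vanish somewhere. It is closed by continuity, and it is an interval: if $t_1<t_2$ lie in $S$ and $t\in(t_1,t_2)$, then monotonicity gives $t_1\le f(t)\le t_2$, while the $1$-Lipschitz estimate applied from $t_1$ and from $t_2$ gives $f(t)\le t$ and $f(t)\ge t$ respectively, so $f(t)=t$. Hence $S=[a,b]$ with $0\le a\le b\le1$, and on $[a,b]$ we have $f(t)=t$, the middle line of \eqref{eq:monoLem}.

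Finally I would dispose of the two flat pieces. On $[0,a)$ no point lies in $S$, so there $f(t)\neq t$ and therefore $f(t)\in c$ by the third property; since $f$ is continuous, $[0,a)$ is connected, and $c$ is finite, $f$ must be constant on $[0,a)$, and continuity at $a$ forces that constant to equal $f(a)=a$. Thus $f\equiv a$ on $[0,a]$. The identical argument on the connected set $(b,1]$ gives $f\equiv b$ on $[b,1]$. Assembling the three pieces reproduces \eqref{eq:monoLem} exactly; the degenerate case $a=b$, where $S$ is a single point, is covered with no change.

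The step I expect to be the crux is the $1$-Lipschitz property together with its double role: it is what forces the diagonal region $S$ to be a single interval (monotonicity and continuity alone would permit $f$ to leave and rejoin the diagonal), and, in combination with the finite-image property, it is what collapses each of the two remaining regions to a constant. Everything else is bookkeeping on the results already proved for $\orv\mapsto\urv$.
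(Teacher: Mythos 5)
Your proof is correct, and it reproduces the same overall shape as the paper's argument (a diagonal middle piece flanked by two constant pieces, glued together by continuity), but the technical route is genuinely different. The paper's proof is only two sentences: by part~(d) of Theorem~\ref{st:rev} the graph of the function $f\colon \orv_\lit\mapsto\urv_\liit$ is contained in the union of the fixed horizontal lines $\urv_\liit=\orv_\lxt$ (for $\lxt\neq\lit$) and the diagonal $\urv_\liit=\orv_\lit$, by part~(a) it is continuous, and these two constraints are asserted to leave no possibility other than the pattern~(\ref{eq:monoLem}). The implicit argument behind that assertion is the one you yourself use for the flat pieces: on any connected set where $f(t)\neq t$, the value $f(t)$ lies in a fixed finite set, so $f$ is constant there by connectedness, and matching that constant by continuity to the diagonal value at \emph{both} endpoints of an interior component of $\{t : f(t)\neq t\}$ would force two distinct numbers to be equal; hence the fixed-point set is a single interval. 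You instead prove that the fixed-point set $S$ is an interval via a non-expansiveness estimate, $\|\urv-\urvbis\|_\infty\le\|\orv-\orvbis\|_\infty$, which you derive correctly from~(\ref{eq:vprime}): $\max$ and $\min$ are non-expansive, the bound survives iteration, and the iteration stabilizes after finitely many steps. This Lipschitz property appears nowhere in the paper, and it is not strictly needed --- your own finite-image-plus-connectedness argument for the flat pieces would also settle the interval structure of $S$, so your proof could be compressed by reusing it there, and indeed monotonicity (part~(b)) is likewise dispensable. What your route buys is a clean quantitative strengthening of the continuity statement in Theorem~\ref{st:rev}(a), interesting in its own right (it shows the revision operator $\orv\mapsto\urv$ is non-expansive in the sup norm, not merely continuous); what it costs is an extra lemma in a place where the paper's counting-free argument already suffices.
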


\begin{proof}\hskip.5em
According to part~(d) Theorem~\ref{st:rev}, the graph of $\urv_\liit$ as a function of~$\orv_\lit$ is contained in the union of the horizontal lines $\urv_\liit=\orv_\lxt$ ($\lxt\in\piset\setminus\{\lit\}$) together with the diagonal one $\urv_\liit=\orv_\lit$. On the other hand, part~(a) of that theorem ensures that the function $\orv_\lit\mapsto\urv_\liit$ is continuous. These constraints leave no other possibility than the pattern~(\ref{eq:monoLem}).
\end{proof}


In order to analyse the effect of raising the value of $\orv_\lit$ we will use the following notation and terminology:
$\orvz$ denotes a modified valuation that differs from $\orv$ only in that $\orvz_\lit > \orv_\lit$.
The objects associated with $\orvz$ will be referred to by means of a tilde.
For any magnitude $\magn$ that depends on~$\orv_\lit$,
the statement 
``$\magn$~\dfc{stays constant}'' means that 
there exists $\ptit>0$ such that 
for~any~$\orvz_\lit$ in the interval
$\orv_\lit\le\orvz_\lit\le\orv_\lit+\ptit$ 
one has $\magnz=\magn$;\ensep
similarly, the~statement 
``$\magn$~\dfc{increases}'' means that 
there exists $\ptit>0$ such that 
for~any~$\orvz_\lit$ in the interval
$\orv_\lit<\orvz_\lit\le\orv_\lit+\ptit$ 
one has $\magnz>\magn$.\ensep
Notice that
these definitions 
consider only values of $\orvz_\lit$ at the right of~$\orv_\lit$.

\smallskip
\begin{lemma}\hskip.5em
\label{st:monoLemm}
Let us allow $\orv_\lit$ to grow while $\orv_\lxt$ is kept constant for $\lxt\neq\lit$.
If there exists $\liit\in\piset$ such that $\urv_\liit$ increases, then $\urv_\lit=\orv_\lit$.
\end{lemma}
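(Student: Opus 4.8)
The plan is to prove the contrapositive by contradiction, exploiting the characterization of $\urv$ as the \emph{lowest} fixed point above $\orv$ (Theorem~\ref{st:char}). Part~(c) of Theorem~\ref{st:rev} already gives $\orv_\lit\le\urv_\lit$, so it suffices to exclude the possibility $\urv_\lit>\orv_\lit$; once that is ruled out we get $\urv_\lit=\orv_\lit$ as claimed. Accordingly, I would assume that $\urv_\lit>\orv_\lit$ and derive that \emph{no} coordinate of $\urv$ can increase when $\orv_\lit$ grows, contradicting the hypothesis.

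The heart of the argument is to set $\ptit=\urv_\lit-\orv_\lit>0$ and show that, for every modified valuation $\orvz$ differing from $\orv$ only in that $\orv_\lit<\orvz_\lit\le\orv_\lit+\ptit=\urv_\lit$, one has $\urvz=\urv$. First I would check that $\orvz\le\urv$: indeed $\orvz_\lit\le\urv_\lit$ by the choice of $\ptit$, while for $\lxt\neq\lit$ we have $\orvz_\lxt=\orv_\lxt\le\urv_\lxt$ by Theorem~\ref{st:rev}.c. Since $\urv$ satisfies the equation $\urv{}'=\urv$ and lies above $\orvz$, the characterization in Theorem~\ref{st:char} (applied with $\orvz$ as the original valuation) forces $\urvz\le\urv$. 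On the other hand, $\orv\le\orvz$ together with the monotonicity of Theorem~\ref{st:rev}.b gives $\urv\le\urvz$. Combining the two inequalities yields $\urvz=\urv$.

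Consequently $\urvz_\liit=\urv_\liit$ for \emph{every} $\liit\in\piset$ and every admissible $\orvz_\lit$ in the nonempty interval $(\,\orv_\lit,\,\urv_\lit\,]$. This is incompatible with the hypothesis that some $\urv_\liit$ increases: that hypothesis supplies a $\ptit'>0$ with $\urvz_\liit>\urv_\liit$ for all $\orvz_\lit\in(\,\orv_\lit,\,\orv_\lit+\ptit'\,]$, and any $\orvz_\lit$ in the overlap $(\,\orv_\lit,\,\min(\orv_\lit+\ptit',\urv_\lit)\,]$ would then satisfy both $\urvz_\liit>\urv_\liit$ and $\urvz_\liit=\urv_\liit$. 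This contradiction shows $\urv_\lit=\orv_\lit$.

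The step I expect to require the most care is verifying $\orvz\le\urv$, since it is precisely the slack $\urv_\lit-\orv_\lit>0$ that guarantees $\urv$ remains a \emph{feasible} fixed point above the perturbed valuation $\orvz$; without this the characterization could not be invoked. The remaining delicacy is purely bookkeeping: matching the one-sided thresholds $\ptit$ and $\ptit'$ so that the two conclusions clash on a common nonempty set of values of $\orvz_\lit$ lying to the right of $\orv_\lit$, in accordance with the definitions of ``stays constant'' and ``increases.''
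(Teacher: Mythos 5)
Your proof is correct and follows essentially the same route as the paper's: both argue by contradiction from $\urv_\lit>\orv_\lit$, and both rest on the observation that raising $\orv_\lit$ to any value in $(\orv_\lit,\urv_\lit]$ leaves the upper revised valuation unchanged, which then contradicts the assumed increase of $\urv_\liit$. The only cosmetic difference is that you obtain the key identity $\urvz=\urv$ by citing the minimality characterization of Theorem~\ref{st:char} together with Theorem~\ref{st:rev}.b, whereas the paper re-derives the same fact inline, sandwiching through the auxiliary valuation whose $\lit$-value is set to $\urv_\lit$ and using monotonicity of the revision operator plus the idempotence $\urv{}^*=\urv$.
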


\begin{proof}\hskip.5em
We will proceed by contradiction. Let us assume that $\urv_\lit\neq\orv_\lit$. According to part~(c) of Theorem~\ref{st:rev}, it must be $\urv_\lit>\orv_\lit$. This allows to choose $\orvz_\lit$ so that $\orv_\lit<\orvz_\lit<\urv_\lit$. The hypothesis that $\urv_\liit$ increases ensures that $\urv_\liit<\urvz_\liit$ (this is true not only
for~$\orvz_\lit$ in a neighbourhood at the right of $\orv_\lit$,
but also for any $\orvz_\lit>\orv_\lit$,
since part~(b) of Theorem~\ref{st:rev} ensures that $\urvz_\liit$ is a non-decreasing function of $\orv_\lit$).
Let us consider now the valuation $\orvbis$ that coincides with $\orv$ for all literals except possibly $\lit$, for which we set $\orvbis_\lit=\urv_\lit$. Since $\orvz_\lit<\urv_\lit$, part~(b) of Theorem~\ref{st:rev} ensures that $\urvz_\liit\le\urvbis_\liit$. We will arrive at contradiction by showing that $\urvbis_\liit=\urv_\liit$. In fact, by combining this equality with some of the preceding inequalities, namely $\urv_\liit<\urvz_\liit\le\urvbis_\liit$, one would arrive at the false conclusion that $\urv_\liit<\urv_\liit$.

The claim that $\urvbis_\liit=\urv_\liit$ is again a consequence of part~(b) of Theorem~\ref{st:rev}: Since $\orv\le\orvbis\le\urv$, that result ensures that $\urv\le\urvbis\le\urv{}^*$. On the other hand, the invariance of $\urv$ by the transformation $u\mapsto u'$ entails $\urv{}^*=\urv$.
\end{proof}

\smallskip
\begin{theorem} 
\label{st:monoThm}
Let us allow $\orv_\lit$ to grow while $\orv_\lxt$ is kept constant for $\lxt\neq\lit$.
In these circumstances, the acceptability of $\lit$, \,
that is the difference $\urv_\lit-\urv_{\nt\lit}$, \,
either increases or stays constant.
\end{theorem}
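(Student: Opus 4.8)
The plan is to fix every coordinate $\orv_\lxt$ with $\lxt\neq\lit$ and to study the two scalar functions $\orv_\lit\mapsto\urv_\lit$ and $\orv_\lit\mapsto\urv_{\nt\lit}$, and then to analyse their difference. Lemma~\ref{st:monoLem} tells me that each of these functions has the three-piece shape (flat, then a diagonal of slope~$1$ along $\urv_\liit=\orv_\lit$, then flat again), and Theorem~\ref{st:rev} supplies the two facts I will lean on throughout: the functions are non-decreasing (part~(b)), and $\urv_\lit$ lies on or above the diagonal, since $\urv_\lit\ge\orv_\lit$ (part~(c)). A first consequence of the shape in Lemma~\ref{st:monoLem} is that, at each value of $\orv_\lit$, the immediate right-hand behaviour of either function is governed by a single linear piece, so each function does exactly one of two things to the right: it \emph{increases} or it \emph{stays constant}, in the precise senses defined just above the statement. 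I would therefore split the argument according to the right-hand behaviour of $\urv_{\nt\lit}$.

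If $\urv_{\nt\lit}$ stays constant at the value under consideration, the conclusion is immediate: on a small right-neighbourhood $\urv_{\nt\lit}$ is constant while $\urv_\lit$ either increases or stays constant (Lemma~\ref{st:monoLem}), so their difference $\urv_\lit-\urv_{\nt\lit}$ differs from $\urv_\lit$ by a locally constant quantity and hence likewise increases or stays constant.

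The substantive case is when $\urv_{\nt\lit}$ increases, and this is where I expect the real work. The danger is exactly that $\urv_{\nt\lit}$ could climb its diagonal while $\urv_\lit$ sits on a flat piece, which would make the acceptability \emph{decrease}; ruling this out is the heart of the theorem. The tool is Lemma~\ref{st:monoLemm}: applied with $\liit=\nt\lit$, the bare fact that $\urv_{\nt\lit}$ increases forces $\urv_\lit=\orv_\lit$. The one delicate step—the point I would write out most carefully—is upgrading this single equality into the statement that $\urv_\lit$ tracks the diagonal on a whole right-neighbourhood. Here the constraint $\urv_\lit\ge\orv_\lit$ does the job: if $\urv_\lit$ turned flat immediately to the right of a point where $\urv_\lit=\orv_\lit$, then just to the right it would drop strictly below $\orv_\lit$, contradicting Theorem~\ref{st:rev}.c; combined with the three-piece shape of Lemma~\ref{st:monoLem}, this pins $\urv_\lit=\orv_\lit$ throughout a right-neighbourhood.

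To finish, I combine the two observations in this case. On a common right-neighbourhood one has $\urv_\lit=\orv_\lit$ (just established) and also $\urv_{\nt\lit}=\orv_\lit$, the latter because $\urv_{\nt\lit}$ being on the \emph{increasing} piece of its graph means, by Lemma~\ref{st:monoLem}, that it lies on the diagonal $\urv_{\nt\lit}=\orv_\lit$ there. Consequently the acceptability $\urv_\lit-\urv_{\nt\lit}$ equals $\orv_\lit-\orv_\lit=0$ on that neighbourhood, so it stays constant. Collecting the two cases yields the dichotomy asserted in the statement, with everything beyond the upgrade step being routine bookkeeping with the shape lemma and the monotonicity of the revision.
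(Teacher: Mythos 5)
Your proof is correct and follows essentially the same route as the paper's: the same case split on whether $\urv_{\nt\lit}$ stays constant or increases, the same invocation of Lemma~\ref{st:monoLemm} (with $\liit=\nt\lit$) to force $\urv_\lit=\orv_\lit$, and the same combination of Theorem~\ref{st:rev}.c with the shape of Lemma~\ref{st:monoLem} to conclude that in the increasing case both $\urv_\lit$ and $\urv_{\nt\lit}$ track the diagonal, making the acceptability identically zero nearby. The only cosmetic differences are that you derive the increase/stay-constant dichotomy from Lemma~\ref{st:monoLem} rather than from Theorem~\ref{st:rev}.b, and you phrase the diagonal-tracking step as a contradiction instead of the paper's direct inequality $\urvz_\lit\ge\orvz_\lit>\orv_\lit$.
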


\begin{proof}\hskip.5em
By part~(b) of Theorem~\ref{st:rev}, $\urv_\lit$ and $\urv_{\nt\lit}$ have the property that each of them either increases or stays constant. When $\urv_{\nt\lit}$ stays constant we are done. Now, when $\urv_{\nt\lit}$ increases, Lemma~\ref{st:monoLemm} ensures that $\urv_\lit=\orv_\lit$. Since\linebreak 
$\urvz_\lit\ge\orvz_\lit>\orv_\lit$, it follows that $\urv_\lit$ increases too. However, according to Lemma~\ref{st:monoLem} the only possible way for both $\urv_\lit$ and $\urv_{\nt\lit}$ to increase is by having $\urvz_\lit=\urvz_{\nt\lit}=\orvz_\lit$ for any $\orvz_\lit$ in an interval of the form $\orv_\lit\le\orvz_\lit\le\orv_\lit+\ptit$ with $\ptit>0$, which entails that $\urv_\lit-\urv_{\nt\lit}$ stays constant (and equal to zero).
\end{proof}

\smallskip
\begin{corollary}\hskip.5em
\label{st:monoCorr}
Assume that the valuation \,$\orv$\, is modified into a new one \,$\orvz$\, such that
\begin{equation}
\label{eq:mona}
\orvz_\lit \,>\, \orv_\lit,\qquad \orvz_\liit \,=\, \orv_\liit,\quad \forall q\in\piset\setminus\{\lit\},
\end{equation}
and let $\mg$ be any number in the interval $0\le\mg\le1$. If $\lit$ is accepted \textup[resp.~not rejected\,\textup] in the decision of margin $\mg$ associated with~$\urv$, then it is also accepted \textup[resp.~not rejected\,\textup] in~the decision of margin $\mg$ associated with~$\urvz$.
\end{corollary}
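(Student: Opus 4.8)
The plan is to deduce everything from a single inequality comparing acceptabilities, namely
\[
\urvz_\lit - \urvz_{\nt\lit} \,\ge\, \urv_\lit - \urv_{\nt\lit},
\]
and then to read off the two implications from the thresholds that define the decision of margin~$\mg$. Recall from \secpar{2.2} that $\lit$ is accepted exactly when its acceptability exceeds $\mg$, while $\lit$ is rejected exactly when $\nt\lit$ is accepted, that is when the acceptability of $\lit$ is below $-\mg$; hence $\lit$ is not rejected exactly when its acceptability is at least $-\mg$. Thus, once the displayed inequality is known, $\urv_\lit-\urv_{\nt\lit}>\mg$ forces $\urvz_\lit-\urvz_{\nt\lit}>\mg$, and $\urv_\lit-\urv_{\nt\lit}\ge-\mg$ forces $\urvz_\lit-\urvz_{\nt\lit}\ge-\mg$, which are precisely the two assertions.

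The displayed inequality says exactly that the acceptability of $\lit$ does not decrease when $\orv_\lit$ is raised with all other original values held fixed, evaluated at the two values $\orv_\lit$ and $\orvz_\lit>\orv_\lit$. This monotonicity is the content of Theorem~\ref{st:monoThm}; the one delicate point---and what I would treat as the crux---is that Theorem~\ref{st:monoThm} is stated through the one-sided, local notions ``increases'' and ``stays constant'' of \secpar{3.4}, so applied at a single base value it only asserts that the acceptability fails to drop on some right-neighbourhood of that value. To convert this into the genuine finite step from $\orv_\lit$ to $\orvz_\lit$, I would note that the dichotomy of Theorem~\ref{st:monoThm} holds at every value of $\orv_\lit$ in $[0,1]$, and combine it with the continuity of the acceptability as a function of $\orv_\lit$, which follows from part~(a) of Theorem~\ref{st:rev}. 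A continuous function on an interval that at each point fails to decrease on some right-neighbourhood is non-decreasing: taking $c$ to be the supremum of the points up to which the acceptability stays at least its value at the left endpoint, continuity places $c$ in this set and the local property forbids $c$ from lying strictly inside the interval, so $c$ is the right endpoint and the inequality holds throughout.

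With the acceptability established to be non-decreasing in $\orv_\lit$, the displayed inequality is immediate from $\orvz_\lit>\orv_\lit$, and the two implications of the corollary follow from the threshold comparisons of the first paragraph. The main obstacle is thus not computational but the local-to-global passage just described; everything else is bookkeeping with the definitions of acceptance and rejection.
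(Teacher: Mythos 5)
Your proof is correct and follows exactly the route the paper intends: the corollary is stated there without any proof, as an immediate consequence of Theorem~\ref{st:monoThm}, and your argument supplies precisely the step that this glosses over, namely the passage from the one-sided local dichotomy (``increases or stays constant'', which holds at every base value of $\orv_\lit$) to the global inequality $\urvz_\lit-\urvz_{\nt\lit}\ge\urv_\lit-\urv_{\nt\lit}$, using the continuity guaranteed by part~(a) of Theorem~\ref{st:rev} together with the standard supremum argument. The threshold bookkeeping identifying ``accepted'' with acceptability above $\mg$ and ``not rejected'' with acceptability at least $-\mg$ is also handled correctly, so nothing is missing.
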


\section{Technical issues and supplementary topics}

\paragraph{4.1}
\textbf{Which conjunctive normal form?}\ensep
In this subsection we look at the effect of the conjunctive normal form $\Phi$ that expresses the doctrine. In~this connection, we will be led to make a particular choice, namely the so-called Blake canonical form.

In~the following $\Phiz$ stands for an alternative conjunctive normal form (with literals in the same set $\piset$) and we systematically use a tilde to denote the objects associated with $\Phiz$. In particular, $\urv$ and $\urvz$ mean the upper revised valuations obtained from a given $\orv$ by using respectively $\Phi$ and $\Phiz$. For the moment, $\Phi$ and $\Phiz$ need not be logically equivalent to each other.

\bigskip
If one has $\urv=\urvz$ for any $\orv$ we will say that $\Phi$ and $\Phiz$ are \dfc{$\ast$-equivalent.}\ensep

\smallskip
\begin{proposition}\hskip.5em
\label{st:equivalence}
$\ast$-Equivalence implies logical equivalence.
\end{proposition}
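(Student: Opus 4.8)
The plan is to argue by contraposition: assuming $\Phi$ and $\Phiz$ fail to be logically equivalent, I will produce a single valuation $\orv$ at which the two upper revised valuations differ, so that $\ast$-equivalence cannot hold. The decisive idea is to feed the construction the all-or-none valuations coming from truth assignments, since Proposition~\ref{st:truthass} tells us exactly how such valuations behave under the one-step transformation for each of the two forms.

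First I would exploit the failure of logical equivalence. By definition it yields a truth assignment that satisfies one of the two forms but not the other; interchanging the names of $\Phi$ and $\Phiz$ if necessary, I may take a truth assignment whose all-or-none valuation $u$ is consistent with $\Phi$ but not with $\Phiz$. This relabelling is legitimate because both $\ast$-equivalence and logical equivalence are symmetric in the two forms.

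Next I would evaluate the two constructions on the valuation $\orv=u$. For $\Phi$, Proposition~\ref{st:truthass} gives $u'=u$, so the non-decreasing iteration defining the upper revised valuation is stationary from the start and therefore $\urv=u$. For $\Phiz$, the same proposition (read in the opposite direction, and applied to $\Phiz$) gives $\widetilde u'\neq u$ for the corresponding one-step transform; since part~(c) of Lemma~\ref{st:step}, applied to $\Phiz$, forces $\widetilde u'\ge u$, the inequality must be strict at some literal. Because the iteration producing $\urvz$ is non-decreasing and begins at $u$, it follows that $\urvz$ dominates $\widetilde u'$ and hence strictly exceeds $u=\urv$ at that literal. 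Thus $\urv\neq\urvz$, which is what contraposition requires.

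I do not anticipate a serious obstacle; the whole argument rests on using Proposition~\ref{st:truthass} in both directions together with the monotone convergence of the defining iteration (which holds for any conjunctive normal form obeying the standing assumptions, in particular for $\Phiz$). The only points needing a little care are the reduction to the single labelled case, i.e.\ checking that the situation really is symmetric under swapping $\Phi$ and $\Phiz$, and, more pedantically, confirming that $\Phiz$ also satisfies assumptions {\small(D1, D2, D3)} so that its construction and Proposition~\ref{st:truthass} are applicable.
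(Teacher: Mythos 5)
Your proposal is correct and follows essentially the same route as the paper's own proof: contraposition, a truth assignment $u$ consistent with $\Phi$ but not with $\Phiz$, and Proposition~\ref{st:truthass} applied in both directions to conclude $\urv=u\neq\urvz$. Your version merely spells out the step the paper leaves implicit, namely that $\widetilde u{}'\neq u$ together with the monotone non-decreasing iteration (via Lemma~\ref{st:step}.c) forces $\uurvz\neq u$.
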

\begin{proof}\hskip.5em
$\Phi$ and $\Phiz$ not being logically equivalent means that there exists a truth assignment $u$ that makes one of them, say $\Phi$, true, and the other, $\Phiz$, false. According to Proposition~\ref{st:truthass}, this entails that $u^*=u$ but $\uurvz\neq u$, so~that $\Phi$ and $\Phiz$ are not $\ast$-equivalent.
\end{proof}

\noindent
However, \emph{logical equivalence does not imply $\ast$-equivalence}.
As a simple example, we can take $\Phi=(\lit\lor\liit)\land(\lit\lor\liit\lor\liiit)$ and $\Phiz = \lit\lor\liit$, for which one easily checks that
$\urvz_\liiit = \orv_\liiit < \urv_\liiit$ as soon as $\orv_\liiit < \min(\orv_{\nt\lit},\orv_{\nt\liit})$.
\ensep
The following result can be seen as a generalization of what happens in this example.

\smallskip
\begin{proposition}\hskip.5em
\label{st:opent}
If all clauses of $\Phiz$ are present also in $\Phi$, \ie $\doctz\sbseteq\doct$, then $\urvz\le\urv$.
\end{proposition}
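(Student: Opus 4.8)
The plan is to reduce the statement to two elementary facts about the single-step transformation (\ref{eq:vprime}) and then lift them to the fixed points $\urv$ and $\urvz$ by induction. Throughout, let $\ntv{n}$ and $\ntvz{n}$ denote the $n$-th iterates of the single-step transformation applied to the common starting valuation $\orv$, computed respectively with the clause collection $\doct$ and with $\doctz$; thus $\ntv{0}=\ntvz{0}=\orv$, and both sequences stabilize, at $\urv$ and $\urvz$ respectively, by the argument given after Lemma~\ref{st:step} (which applies equally to $\Phiz$, since parts~(c) and~(d) of that lemma hold for it too).

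First I would establish the one-step comparison: for every valuation $w$ and every $\lit\in\piset$, the $\Phiz$-image of $w$ lies pointwise below its $\Phi$-image. This is immediate from (\ref{eq:vprime}): the two expressions differ only in the range of the outer $\max$, which runs over $\{\clau\in\doct:\lit\in\clau\}$ in one case and over $\{\clau\in\doctz:\lit\in\clau\}$ in the other, while the inner $\min$ terms depend only on $w$ and on the clause, not on the ambient doctrine. Since $\doctz\sbseteq\doct$, the second index set is contained in the first, so the second maximum is taken over a subset of the same quantities and can only be smaller. Both maxima are over non-empty sets, because the tertium non datur clauses of (D3) belong to both doctrines.

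Next I would combine this with monotonicity. Both doctrines satisfy (D1,D2,D3), so part~(b) of Lemma~\ref{st:step} applies to each single-step transformation separately. I then prove $\ntvz{n}\le\ntv{n}$ for all $n$ by induction. The case $n=0$ is the equality $\ntvz{0}=\ntv{0}=\orv$. For the inductive step, apply the $\Phiz$-transformation to the inequality $\ntvz{n}\le\ntv{n}$: by monotonicity of that transformation, $\ntvz{n+1}$ is at most the $\Phiz$-image of $\ntv{n}$, and the one-step comparison of the previous paragraph bounds this $\Phiz$-image by the $\Phi$-image of $\ntv{n}$, which is exactly $\ntv{n+1}$; hence $\ntvz{n+1}\le\ntv{n+1}$.

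Passing to the limit in $\ntvz{n}\le\ntv{n}$ then gives $\urvz\le\urv$. There is no genuine obstacle here; the only point needing care is keeping the two transformations straight in the inductive step, where monotonicity must be applied to the $\Phiz$-transformation while the clause-inclusion bound is applied to the single common input $\ntv{n}$. This mixed use of monotonicity (within one doctrine) and the inclusion bound (between the two doctrines) is precisely what makes the induction close.
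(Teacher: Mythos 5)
Your proof is correct and is essentially the paper's own argument: both establish $\ntvz{n}\le\ntv{n}$ by induction on the iterates, using exactly the two ingredients you name (the clause-inclusion bound ``more clauses entail a higher maximum'' and the monotonicity of Lemma~\ref{st:step}.b). The only difference is immaterial: you compare through the intermediate valuation obtained by applying the $\Phiz$-transformation to $\ntv{n}$ (monotonicity of the $\Phiz$-transformation, then clause inclusion), whereas the paper compares through the $\Phi$-transformation applied to $\ntvz{n-1}$ (clause inclusion, then monotonicity of the $\Phi$-transformation) --- the two diagonals of the same commuting square.
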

\begin{proof}\hskip.5em
We will proceed by induction along the iteration that eventually gives the upper revised valuation $\urv$. As in \secpar{3.1}, we denote the iterates by~$\ntv{n}$. We aim at showing that $\ntvz{n}\le\ntv{n}$ for any $n\ge0$. For $n=0$ this is satisfied as an equality, since we are starting from the same valuation $\orv$. In~order to go from $n-1$ to $n$, it suffices to compare $\ntvz{n}$ and $\ntv{n}$ through the valuation $\val$ defined by
\begin{equation}
\val_\lit \,=\, \max_{\latop{\clau\in\doct}{\clau\ni\lit}}\, \min_{\latop{\lxt\in\clau}{\lxt\neq\lit}} \,\ntvz{n-1}_{\nt\lxt},
\end{equation}
(with a tilde in $\ntvz{n-1}$, but not in $\doct$).
In fact, on the one hand, by comparing the preceding expression with the one that defines $\ntvz{n}$ in terms of $\ntvz{n-1}$, one easily sees that $\ntvz{n}\le\val$, since
more clauses entail a higher maximum. 
On~the other hand, Lemma~\ref{st:step}.b allows to derive the inequality $\val\le\ntv{n}$ from the hypothesis that $\ntvz{n-1}\le\ntv{n-1}$.
\end{proof}

\medskip
Let us look at the meaning of the hypothesis contained in principle~{\small(\teof)},
namely the satisfiability (within the doctrine under consideration)
of the conjunction $\bigwedge_{\lxt\in\spiset}\lxt$ that implies $\lit$.
Not requiring this satisfiability would lead to such non-senses as giving a certain degree of belief to any $\lit$ just because of the tautological but `improper' implication $\lit\leftarrow(\liit\land\nt\liit)$ (for an arbitrary $\liit$).
\ensep
Now, the conjunction that appears in~(\ref{eq:pimplicant}) is $\bigwedge_{\lxt\in\clau,\,\lxt\neq\lit} \nt\lxt$, whose
not being satisfiable is equivalent to $\bigvee_{\lxt\in\clau,\,\lxt\neq\lit} \lxt$ being always satisfied, in~which case the clause $\bigvee_{\lxt\in\clau} \lxt$ could have been replaced by the stronger one $\bigvee_{\lxt\in\clau,\,\lxt\neq\lit} \lxt$. 
So, the hypothesis that the right-hand side of~(\ref{eq:pimplicant}) is satisfiable whenever $\lit\in\clau\in\doct$ amounts to say that every clause has the property that  the corresponding set $\clau\in\doct$ has no proper subset $\clau'$ with $\phi(\clau')=\bigvee_{\lxt\in\clau'} \lxt$ already entailed by the doctrine. 
In other words, every clause should be a \dfc{prime implicate} of the doctrine under consideration.
In~\cite{dl10b, lipo, dl} one uses the equivalent terminology of saying that $\nt\clau = \{\,\nt\lxt\mid\lxt\in\clau\,\}$ is a `minimal inconsistent set', whereas in \cite{np08, np10} $\nt\clau$ is said to be a `critical family'.
From now on we add the following assumption to those adopted in \secpar{2.1}:

\smallskip
\iim{D4} The conjunctive normal form that expresses the doctrine is made of prime implicates.
\smallskip

\noindent
Such conjunctive normal forms will be referred to as \dfd{prime conjunctive normal forms}.
As we have seen at the beginning of this paragraph,
they are the only ones that propagate belief in a proper way, without making it up.

\bigskip
Among all the prime conjunctive normal forms equivalent to a given doctrine,
we will take as a natural choice
the \dfd{Blake canonical form}, so called after Archie Blake, who introduced it in 1937~\cite{blake}.
Given a formula $f$, the Blake canonical form of~$f$ is the conjunctive normal form which is made of all the prime implicates of $f$. 
The following statement gives its main property
in connection with the upper revised valuation~$\urv$:

\smallskip
\begin{proposition}\hskip.5em
\label{st:bcfmax}
Among all the prime conjunctive normal forms logically equivalent to a given doctrine,
the Blake canonical form has the property of
giving the greatest possible upper revised valuation.
\end{proposition}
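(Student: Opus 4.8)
The plan is to identify the Blake canonical form with the distinguished form $\Phi$ of \secpar{4.1}, so that $\urv$ denotes its upper revised valuation and $\doct$ its collection of clauses, namely all the prime implicates of the doctrine. Everything then reduces to Proposition~\ref{st:opent}: if I can show that every prime conjunctive normal form $\Phiz$ logically equivalent to the doctrine satisfies $\doctz\sbseteq\doct$, that proposition immediately yields $\urvz\le\urv$ for every original valuation~$\orv$, which is precisely the asserted maximality.

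First I would recall the two standard facts about the Blake canonical form: it consists of \emph{all} the prime implicates of the doctrine, and the conjunction of all prime implicates is logically equivalent to the doctrine. The latter guarantees that $\Phi$ is itself a member of the family being maximized over, so that the greatest value is actually attained and not merely approached.

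The heart of the argument is the inclusion $\doctz\sbseteq\doct$. Let $\Phiz$ be an arbitrary prime conjunctive normal form logically equivalent to the doctrine. By assumption~(D4), each clause of $\Phiz$ is a prime implicate of $\Phiz$; since entailment, and hence the property of being a prime implicate, depends only on logical content, and $\Phiz$ is logically equivalent to the doctrine, each such clause is in fact a prime implicate of the doctrine. As $\doct$ collects every prime implicate, each clause of $\Phiz$ lies in $\doct$, giving $\doctz\sbseteq\doct$. The \textit{tertium non datur} clauses demanded by~(D3) are present on both sides: once the reduction following~(D2) has eliminated any literal that the doctrine forces, neither $\lit$ nor $\nt\lit$ is entailed, so $\lit\lor\nt\lit$ is itself a tautological prime implicate and belongs to~$\doct$. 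Proposition~\ref{st:opent} then completes the proof.

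The only point requiring care is the identification just made: a clause of a prime conjunctive normal form equivalent to the doctrine must be recognized as a prime implicate \emph{of the doctrine}, not merely of $\Phiz$. This is immediate because logically equivalent formulas have exactly the same implicates, and hence the same prime implicates. With that observation in place there is no computation to perform; the statement is essentially a one-line consequence of the monotonicity supplied by Proposition~\ref{st:opent} together with the fact that the Blake form exhausts the prime implicates.
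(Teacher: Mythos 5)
Your proof is correct and follows exactly the paper's own route: the paper proves Proposition~\ref{st:bcfmax} in one line, noting that it ``follows immediately from the definitions by virtue of Proposition~\ref{st:opent}'', which is precisely your argument that every clause of a logically equivalent prime conjunctive normal form is a prime implicate of the doctrine, so $\doctz\sbseteq\doct$ and Proposition~\ref{st:opent} applies. Your expansion (the equivalence-invariance of prime implicates and the presence of the \textit{tertium non datur} clauses on both sides) merely makes explicit what the paper leaves implicit in ``the definitions''.
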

\begin{proof}\hskip.5em
It follows immediately from the definitions by virtue of Proposition\,\ref{st:opent}.
\end{proof}




\bigskip
The next theorem gives a systematic method for obtaining the Blake canonical form of a formula~$f$. The procedure will start from any conjunctive normal form $\Phiz$ logically equivalent to $f$ ---which is quite easy to obtain--- and it will apply  repeatedly the two following operations, where $\phi,\psi$ stand for generic disjunctions of literals:

\newcommand\ditm[2]{\hangindent2\parindent\hangafter1\hskip0pt\hbox to6.3em{\textbf{#1}:\hss}\strut#2\strut}

\smallskip
\ditm{Absorption}{replacing\, $\phi\land(\phi\lor\psi)$ \,by\, $\phi$.}

\halfsmallskip
\ditm{Resolution}{replacing\, $(\phi\lor\lit)\land(\psi\lor\nt\lit)$ \,by\, $(\phi\lor\lit)\land(\psi\lor\nt\lit)\land(\phi\lor\psi)$ under the proviso that $\phi$ and $\psi$ do not contain respectively a literal and its negation, 
and that \,$\phi\lor\psi$\, is not absorbed by a clause already present.}

\smallskip
\noindent
In the dual context of disjunctive normal forms, the operations analogous to resolution and absorption are known respectively as consensus and subsumption.

\smallskip
\begin{theorem}[Blake, 1937; Samson, Mills, 1954; Quine, 1955--59; see \cite{brown,marquis}]\hskip.25em 
\label{st:resabs}%
\leavevmode\hbox{In~order} to obtain the Blake canonical form of a formula $f$ it suffices to take any conjunctive normal form $\Phiz$ logically equivalent to $f$ and to transform it by applying repeatedly the operations of absorption and resolution until no further application is possible (which happens after a finite number of steps).
\end{theorem}


\remarks
1. In order to check a given conjunctive normal form~$\Phiz$ for its being prime, it is a matter of deriving the Blake canonical form $\Phi$ by following the procedure of absorption and resolution and checking whether $\Phi$ contains all the clauses of $\Phiz$; in other words, checking that none of the original clauses disappears by absorption in the process.

2. According to the special convention {\small(D3)} adopted at the end of \secpar{2.1}, we~systematically supplement the Blake canonical form with all the \textit{tertium non datur} clauses (which are neutral elements for the operation of resolution, and are prime under the assumption {\small(D2)} of absence of unit clauses).
Under this convention, in the definition of the operation of resolution one can drop the proviso that $\phi$ and $\psi$ must not contain respectively a literal and its negation; in fact, in this case $\phi\lor\psi$ is absorbed by one the \textit{tertium non datur} clauses.
The context will always make clear whether we mean the standard Blake canonical one or the supplemented one.

\xtra
\paragraph{4.2}
\textbf{Disjoint resolution.}\ensep
In some cases the Blake canonical form can be quite long.
In fact, it is well known that generally speaking 
its length can grow exponentially in the number of variables.
This raises the question whether there are shorter prime conjunctive normal forms 
$\ast$-equiv\-alent to~it. 
Generally speaking, it need not be so.
A~simple example is the conjunctive normal form
$\Phiz = (\nt\lit\lor\nt\liit\lor\liiit)\land 
(\nt\lit\lor\nt\liiit\lor\livt)$,
for~which one can easily check that the Blake canonical form is
$\Phi = (\nt\lit\lor\nt\liit\lor\liiit)\land 
(\nt\lit\lor\nt\liiit\lor\livt)\land(\nt\lit\lor\nt\liit\lor\livt)$,
and that 
$\urv_{\nt\lit} > \urvz_{\nt\lit}$ as soon as $\min(\orv_\liit,\orv_{\nt\livt}) > \max(\orv_\liiit, \orv_{\nt\liiit})$.
\ensep
Having said that, the next results identify certain situations where shorter conjunctive normal forms can be shown to be \hbox{$\ast$-equivalent} to the Blake canonical form.
In this connection, we will use the following terminology:
A \dfc{disjoint resolution} means a resolution operation as above 
but with $\phi$ and $\psi$ having no literals in common. A conjunctive normal form will be called \dfc{disjoint-resolvable} when a suitable choice of which clauses to resolve at every stage allows to arrive at the Blake canonical form by only making use of disjoint resolution and absorption.

\smallskip
\begin{proposition}\hskip.5em
\label{st:resolution}
Let $\Phi$ and $\Phiz$ be 
conjunctive normal forms.
If $\Phi$ is obtained from $\Phiz$ by a single disjoint resolution operation,
then $\urv = \urvz$.
\end{proposition}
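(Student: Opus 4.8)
The plan is to prove the two inequalities $\urvz\le\urv$ and $\urv\le\urvz$ separately. The first is immediate: since $\Phi$ is $\Phiz$ together with the single resolvent clause, we have $\doctz\sbseteq\doct$, and Proposition~\ref{st:opent} gives $\urvz\le\urv$ at once. All the effort goes into the reverse inequality $\urv\le\urvz$, for which I would invoke the characterization of Theorem~\ref{st:char}: $\urv$ is the \emph{lowest} valuation lying above~$\orv$ that is invariant under the transformation~(\ref{eq:vprime}) attached to $\doct$. As $\urvz\ge\orv$ by Theorem~\ref{st:rev}.c (applied to~$\Phiz$), it suffices to show that $\urvz$ is itself $\doct$-invariant; then $\urv\le\urvz$ follows and, combined with the easy direction, the proof is complete.

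To check that invariance, write $\clau=\phi\lor\psi$ for the resolvent, identify $\phi$ and $\psi$ with their sets of literals, and recall $\doct=\doctz\cup\{\clau\}$. The $\doct$-transform of $\urvz$ can differ from its $\doctz$-transform only at literals $\lit\in\clau$, where the new clause contributes the single extra argument $\min_{\latop{\lyt\in\clau}{\lyt\neq\lit}}\urvz_{\nt\lyt}$ to the maximum in~(\ref{eq:vprime}). Since $\urvz$ is already $\doctz$-invariant, which by~\secpar{3.2} amounts to $\urvz$ satisfying the inequalities~(\ref{eq:inequality}) for every clause of $\doctz$, proving $\doct$-invariance reduces to the single family
\[
\min_{\latop{\lyt\in\clau}{\lyt\neq\lit}}\urvz_{\nt\lyt}\,\le\,\urvz_\lit,\qquad\forall\,\lit\in\clau,
\]
so that the extra argument never raises the maximum.

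The heart of the argument is to derive this family from the consistency of $\urvz$ with the two parent clauses $\clau_1=\phi\lor\lit$ and $\clau_2=\psi\lor\nt\lit$ of $\doctz$, where I keep the notation of the resolution rule and write $\lit$ for the pivot. By disjointness every literal of $\clau$ lies in exactly one of $\phi,\psi$; take $\lxt\in\phi$, the case $\lxt\in\psi$ being symmetric. Consistency with $\clau_1$ at $\lxt$ gives $\urvz_\lxt\ge\min\bigl(\min_{\latop{\lyt\in\phi}{\lyt\neq\lxt}}\urvz_{\nt\lyt},\,\urvz_{\nt\lit}\bigr)$, and consistency with $\clau_2$ at the pivot literal $\nt\lit$ gives $\urvz_{\nt\lit}\ge\min_{\lyt\in\psi}\urvz_{\nt\lyt}$; substituting the latter into the former yields $\urvz_\lxt\ge\min_{\latop{\lyt\in\clau}{\lyt\neq\lxt}}\urvz_{\nt\lyt}$. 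The main obstacle --- and precisely where disjointness is indispensable --- is this final identification of the two minima with the minimum over $\clau\setminus\{\lxt\}$: it works because $\lxt\notin\psi$, so deleting $\lxt$ leaves $\psi$ intact and the bound obtained for $\urvz_{\nt\lit}$ involves no $\urvz_{\nt\lxt}$ term. Were $\phi$ and $\psi$ to share a literal, such a term would re-enter and weaken the chain, in line with the non-disjoint counterexample of~\secpar{4.2}. The symmetric case $\lxt\in\psi$ (using $\clau_2$ at $\lxt$ and $\clau_1$ at the pivot $\lit$) completes the family of inequalities, and with it the proof.
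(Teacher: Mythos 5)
Your proposal is correct and takes essentially the same approach as the paper: the easy inequality $\urvz\le\urv$ via Proposition~\ref{st:opent}, then the characterization of Theorem~\ref{st:char} to reduce the reverse inequality to checking that the resolvent clause's minimum never exceeds $\urvz_\lit$, which you obtain from the consistency of $\urvz$ with the two parent clauses (applied at the literal in question and at the pivot), with disjointness entering exactly where the two minima are identified with the minimum over the resolvent. The only difference is organizational --- you substitute the pivot bound into the parent-clause inequality, while the paper decomposes the resolvent minimum and recombines --- but the underlying algebra is identical.
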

\begin{proof}\hskip.5em
Clearly, $\doctz\sbseteq\doct$. 
So Proposition~\ref{st:opent} ensures that $\urvz\le\urv$.
In order to prove the equality we will make use of the characterization of $\urv$ given by Theorem~\ref{st:char}: $\urv$ is the lowest valuation $\val$ that satisfies $\valt=\val\ge\orv$.
So, we will be done if we show that the latter conditions are satisfied when we put $\val=\urvz$. From now on we fix $\val$ to mean $\urvz$.
What we know is that $\valtz=\val\ge\orv$, where in principle $\valtz$ differs from $\valt$ in that they use different conjunctive normal forms.
So the problem reduces to showing that $\valt=\valtz$.

Now, the two conjunctive normal forms in consideration differ only in that $\doct = \doctz \cup \{\clauiii\}$, where $\clauiii = (\clauzi \setminus \{\liit\}) \cup (\clauzii \setminus \{\nt\liit\})$ with $\clauzi,\clauzii\in\doctz$ containing respectively $\liit$ and $\nt\liit$.
Clearly, $\valt_\lit$ cannot differ from $\valtz_\lit$ except for $\lit\in\clauiii$, in which case we have
\begin{equation}
\label{eq:dri}
\valt_\lit \,\,=\,\, \max\bigg( \valtz_\lit\,, \min_{\latop{\lxt\in\clauiii}{\lxt\neq\lit}} \,\val_{\nt\lxt} \bigg).
\end{equation}
So we should show that
\begin{equation}
\label{eq:drii}
\min_{\latop{\lxt\in\clauiii}{\lxt\neq\lit}} \,\val_{\nt\lxt} \,\,\le\,\, \valtz_\lit,\qquad\forall\lit\in\clauiii.
\end{equation}
The hypothesis of disjoint resolution means that either $\lit\in\clauzi\setminus\{\liit\}$ or $\lit\in\clauzii\setminus\{\nt\liit\}$, but not both.
The two cases are analogous to each other, so~we will consider only the second one.
In order to obtain (\ref{eq:drii}) we begin by noticing that
\begin{equation}
\label{eq:driii}
\min_{\latop{\lxt\in\clauiii}{\lxt\neq\lit}} \,\val_{\nt\lxt}
\,\,=\,\,
\min\bigg(
\min_{\latop{\latop{\lxt\in\clauzzi}{\lxt\neq\liit}}{\lxt\neq\lit}} \,\val_{\nt\lxt}
\,\,,\,
\min_{\latop{\latop{\lxt\in\clauzzii}{\lxt\neq\nt\liit}}{\lxt\neq\lit}} \,\val_{\nt\lxt}
\bigg).
\end{equation}
Now, since we are assuming $\lit\notin\clauzi$, we have
\begin{equation}
\label{eq:driv}
\min_{\latop{\latop{\lxt\in\clauzzi}{\lxt\neq\liit}}{\lxt\neq\lit}} \,\val_{\nt\lxt}
\,\,=\,\,
\min_{\latop{\lxt\in\clauzzi}{\lxt\neq\liit}} \,\val_{\nt\lxt}
\,\,\le\,\,
\valtz_\liit
\,\,=\,\,
\val_\liit,
\end{equation}
where we used that $\valtz = \val$. Finally, by plugging (\ref{eq:driv}) into (\ref{eq:driii}) we get
\begin{equation}
\label{eq:drv}
\min_{\latop{\lxt\in\clauiii}{\lxt\neq\lit}} \,\val_{\nt\lxt}
\,\,\le\,\,
\min\bigg( \val_\liit\,,
\min_{\latop{\latop{\lxt\in\clauzzii}{\lxt\neq\nt\liit}}{\lxt\neq\lit}} \,\val_{\nt\lxt}
\bigg)
\,\,=\,\,
\min_{\latop{\lxt\in\clauzzii}{\lxt\neq\lit}} \,\val_{\nt\lxt}
\,\,\le\,\,
\valtz_\lit,
\end{equation}
as desired.
\end{proof}

\halfsmallskip
\begin{corollary}\hskip.5em
Any disjoint-resolvable prime conjunctive normal form is $\ast$-equi\-valent to the Blake canonical form.
\end{corollary}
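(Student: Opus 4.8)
The plan is to prove, for an arbitrary original valuation $\orv$, the two inequalities $\urvz\le\urv$ and $\urv\le\urvz$, where $\Phiz$ denotes the given disjoint-resolvable prime conjunctive normal form and $\Phi$ its Blake canonical form. Their combination is exactly the $\ast$-equivalence of $\Phiz$ and $\Phi$.

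The inequality $\urvz\le\urv$ I would dispatch immediately from primeness. Since $\Phiz$ is prime, each of its clauses is a prime implicate of the doctrine; and since $\Phi$ is the Blake canonical form, it contains \emph{every} prime implicate. Hence $\doctz\sbseteq\doct$, and Proposition~\ref{st:opent} gives $\urvz\le\urv$ at once. (This is of course the direction already guaranteed by the fact that the Blake canonical form yields the greatest upper revised valuation.)

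The substance lies in the reverse inequality $\urv\le\urvz$, which is where the hypothesis of disjoint-resolvability enters. By assumption there is a finite sequence of disjoint resolutions and absorptions carrying $\Phiz$ to $\Phi$. The key idea is to separate the two kinds of operation. Let $\Phi^+$ be the conjunctive normal form obtained by performing exactly the same disjoint resolutions, in the same order, but skipping every absorption. Skipping absorptions only makes more clauses available at each stage, so each resolution that was applicable before is still applicable; and since Proposition~\ref{st:resolution} applies to any single disjoint resolution (its proof does not invoke the proviso forbidding $\phi\lor\psi$ from being absorbed), applying it once per resolution step shows that $\Phi^+$ has the same upper revised valuation as $\Phiz$, namely $\urvz$. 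On the other hand, every clause of $\Phi$ is either an original clause of $\Phiz$ or a resolvent generated along the way, and absorptions only delete clauses; so every clause of $\Phi$ belongs to $\Phi^+$, \ie $\doct\sbseteq\doct^+$. Proposition~\ref{st:opent} then yields $\urv\le\urvz$, and combining the two inequalities gives $\urv=\urvz$.

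The step I expect to be the main obstacle is the treatment of absorption, because a single absorption need \emph{not} preserve the upper revised valuation: deleting a clause $\clau_2\spseteq\clau_1$ can strictly lower $\utv_\lit$ precisely when $\lit\in\clau_2\setminus\clau_1$, for then $\clau_2$ contributes to the maximum in~(\ref{eq:vprime}) while $\clau_1$ does not. The passage to $\Phi^+$ is exactly the device that avoids analysing absorptions one by one: I never assert that an absorption preserves $\urv$, but only that absorptions merely delete clauses, so that $\Phi$ sits below $\Phi^+$ in clause inclusion. After that, the monotonicity of Proposition~\ref{st:opent}, together with Proposition~\ref{st:resolution} for the resolution steps, does all the work.
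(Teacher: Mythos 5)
Your proof is correct, and it rests on the same three pillars as the paper's: primeness plus Proposition~\ref{st:opent} (equivalently, Proposition~\ref{st:bcfmax}) for $\urvz\le\urv$, and Propositions~\ref{st:resolution} and~\ref{st:opent} for $\urv\le\urvz$. Where you diverge is in how the absorption steps are treated in the second inequality. The paper handles them in-line: along the given chain from $\Phiz$ to $\Phi$, each disjoint resolution preserves the upper revised valuation (Proposition~\ref{st:resolution}), and each absorption, being a mere deletion of a clause, can only lower it (Proposition~\ref{st:opent}); composing these steps gives $\urv\le\urvz$ directly. So the ``main obstacle'' you identify --- that a single absorption need not preserve $\urv$ --- is in fact no obstacle for this direction: the possible drop goes exactly the way the desired inequality points. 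Your device of the absorption-free form $\Phi^+$ reaches the same conclusion, but at the cost of two extra verifications that the in-line treatment never needs: (i) that the same resolutions remain performable when nothing is deleted (clause sets only grow along your modified chain, so the parent clauses stay available), and (ii) that Proposition~\ref{st:resolution} remains valid when the resolvent is absorbed by a clause already present, i.e., that its proof does not invoke the proviso in the definition of resolution. Observation (ii) is genuine and correct --- that proof uses only the set-theoretic form of the resolvent and the disjointness hypothesis --- so your route is sound; it is simply heavier machinery than the statement requires, whereas the paper's interleaved argument exploits the favorable monotonicity of deletions and closes in two lines.
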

\begin{proof}\hskip.5em
Let $\Phi$ and $\Phiz$ denote respectively the Blake canonical form and a disjoint-resolvable prime conjunctive normal form. Since we can go from $\Phiz$ to $\Phi$ via disjoint resolution and absorption, Propositions~\ref{st:resolution} and~\ref{st:opent} ensure that $\urv \le \urvz$.\ensep
On the other hand, the reverse inequality is guaranteed by Proposition~\ref{st:bcfmax}, 
since the disjoint resolvability of $\Phiz$ certainly ensures that $\Phiz$ is logically equivalent to $\Phi$.
\end{proof}

\smallskip
\xtra
Related to the preceding results, one can see that disjoint resolvability ensures not only $*$-equivalence, but also equivalence with respect to the property of definite consistency of a partial truth assignment:
\smallskip
\begin{proposition}\hskip.5em
\label{st:intrinsic}
A partial truth assignment is definitely consistent
for a disjoint-resolvable conjunctive normal form
\,if and only if\,
it~is definitely consistent for the corresponding Blake canonical form.
\end{proposition}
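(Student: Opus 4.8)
The plan is to show that the two elementary operations that lead from a conjunctive normal form to the Blake canonical one, namely absorption and disjoint resolution, each leave unchanged the family of partial truth assignments that are definitely consistent with it; the proposition then follows by induction along the sequence of operations witnessing the disjoint-resolvability of $\Phiz$. Throughout I would rely on the reformulation recorded in \secpar{2.2}: a clause is definitely consistent under a partial truth assignment $u$ exactly when it contains a literal with $u$-value $1$ or at least two literals with $u$-value $\onehalf$, and a whole conjunctive normal form is definitely consistent under $u$ if and only if each of its clauses is.

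Absorption deletes a clause $\clau_2$ while keeping a clause $\clau_1\sbseteq\clau_2$. Deleting a clause only removes constraints, so definite consistency is trivially preserved when $\clau_2$ is dropped; for the reverse direction it suffices to note that, under the reformulation above, both the presence of an accepted literal and the presence of two undecided literals are inherited by any superset of $\clau_1$, so that $\clau_1$ being definitely consistent forces $\clau_2$ to be definitely consistent as well. Hence absorption does not alter the set of definitely consistent $u$'s.

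The substance of the proof is the disjoint resolution step, which adjoins the resolvent $\clauiii=(\clauzi\setminus\{\liit\})\cup(\clauzii\setminus\{\nt\liit\})$, where $\clauzi\ni\liit$, $\clauzii\ni\nt\liit$, and---this being the disjointness hypothesis---the remainders $A:=\clauzi\setminus\{\liit\}$ and $B:=\clauzii\setminus\{\nt\liit\}$ share no literal. The direction in which $\clauiii$ is dropped is again immediate, so the task is to show that if both parents $\clauzi$ and $\clauzii$ are definitely consistent under $u$, then so is $\clauiii=A\cup B$. I would argue by contradiction, assuming $\clauiii$ has no literal valued $1$ and at most one literal valued $\onehalf$, and then splitting on $u_\liit$. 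If $u_\liit=0$, then $\clauzi=A\cup\{\liit\}$ has no accepted literal and at most one undecided one (since $A\sbseteq\clauiii$), contradicting its definite consistency; the case $u_\liit=1$ is symmetric and contradicts the definite consistency of $\clauzii=B\cup\{\nt\liit\}$ through $\nt\liit$ being valued $0$. In the remaining case $u_\liit=\onehalf=u_{\nt\liit}$, the definite consistency of $\clauzi$, which has no accepted literal, forces a second undecided literal besides $\liit$, hence some literal of $A$ valued $\onehalf$; symmetrically some literal of $B$ is valued $\onehalf$. As $A$ and $B$ are disjoint, these are two distinct literals of $\clauiii$ valued $\onehalf$, against the standing assumption. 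Disjointness is exactly what is needed here---without it a single shared $\onehalf$-literal could witness the definite consistency of both parents while leaving $\clauiii$ with only one undecided literal---and this is precisely the mechanism behind the non-disjoint example of \secpar{4.2}.

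With both elementary operations shown to preserve the family of definitely consistent partial truth assignments, the conclusion follows: by disjoint-resolvability there is a finite chain of absorptions and disjoint resolutions carrying $\Phiz$ into the Blake canonical form, and since each link of the chain leaves that family intact, a partial truth assignment is definitely consistent for $\Phiz$ if and only if it is definitely consistent for the Blake canonical form. The only real obstacle is the resolvent step just described; the rest is bookkeeping.
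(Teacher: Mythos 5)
Your proof is correct. The forward direction---definite consistency for $\Phiz$ implies definite consistency for the Blake canonical form---is exactly the paper's argument: the paper merely asserts as ``easily checked'' that definite consistency is preserved by absorption and disjoint resolution, and your case analysis on $u_\liit$ (using that partial truth assignments are balanced, and that disjointness of the two remainders produces two \emph{distinct} undecided literals in the resolvent) is precisely the verification it has in mind. Where you genuinely diverge is the converse. You obtain it by running the chain of operations backwards, which requires superset inheritance of clause-wise definite consistency (to undo an absorption) and the trivial remark that deleting the resolvent is harmless. The paper instead bypasses the chain entirely: by the definition of the Blake canonical form, every clause of $\Phiz$ contains, as a set of literals, some clause of $\Phi$ (every implicate contains a prime implicate), so superset inheritance gives the converse in a single step. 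The paper's route buys a strictly stronger fact---that direction of the equivalence holds for \emph{any} conjunctive normal form logically equivalent to the doctrine, with no disjoint-resolvability needed---whereas yours keeps the entire proof inside one self-contained induction along the witnessing chain. Both are complete; your closing observation that disjointness is exactly what blocks the shared-$\onehalf$-literal failure mode, tying it to the non-disjoint example of \secpar{4.2}, is a worthwhile point the paper leaves implicit.
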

\begin{proof}\hskip.5em
Let $\Phi$ and $\Phiz$ denote respectively the Blake canonical form and a disjoint-resolvable conjunctive normal form. 
\ensep
The claim that definite consistency with $\Phiz$ implies definite consistency with $\Phi$ follows from the easily checked fact that definite consistency 
is preserved by the operations of absorption and disjoint resolution.
\ensep
The converse implication is a simple consequence of the 
definition of the Blake canonical form, which ensures that 
every clause of $\Phiz$ contains, as a set of literals, some clause of $\Phi$.
\end{proof}

\smallskip
\remark The definition of disjoint resolvability implies going through (a special form of) the process of generating the Blake canonical form. On the other hand,
the latter is also required for checking whether a given conjunctive normal form is prime.
It would certainly be very interesting to be able to recognise these properties without having to go through the process of generating
the Blake canonical form.


\paragraph{4.3}
\textbf{Getting there in a single step.}\ensep
Since the Blake canonical form of a doctrine contains all of its prime implicates,
one might think that this conjunctive normal form will always lead to $\urv$ in a single step,
that is $\urv = \utv$.\linebreak
Generally speaking, however, it need not be so.

\smallskip
A simple example is the doctrine
\begin{equation}
\label{eq:doubleopposition}
(p\lor q\lor r) \,\land\, (\nt q\lor\nt r).
\end{equation}
This conjunctive normal form is already the Blake canonical form, since the two clauses in it are related to each other by a double opposition, which precludes the operation of resolution. The one-step revision transformation is as follows:
\begin{alignat}{4}
\label{eq:one-step-do-p}
&\utv_p \,&&=\, \max\,\big(\orv_p, \min(\orv_{\nt q},\orv_{\nt r})\big),\qquad
&&\utv_{\nt p} \,&&=\, \orv_{\nt p},
\\
\label{eq:one-step-do-q}
&\utv_q \,&&=\, \max\,\big(\orv_q, \min(\orv_{\nt p}, \orv_{\nt r})\big),\qquad
&&\utv_{\nt q} \,&&=\, \max\,(\orv_{\nt q}, \orv_r),
\\
\label{eq:one-step-do-r}
&\utv_r \,&&=\, \max\,\big(\orv_r, \min(\orv_{\nt p}, \orv_{\nt q})\big),\qquad
&&\utv_{\nt r} \,&&=\, \max\,(\orv_{\nt r}, \orv_q),
\end{alignat}
Using these equations, one easily checks, for instance, that the valuation
$(\orv_p, \orv_q, \orv_r; \orv_{\nt p}, \orv_{\nt q}, \orv_{\nt r}) = (.0, .4, .0; .0, .6, .3)$
results in
$(\utv_p, \utv_q, \utv_r; \utv_{\nt p}, \utv_{\nt q}, \utv_{\nt r}) = (.3, .4, .0; .0, .6, .4)$
and 
$(\utv'_p, \utv'_q, \utv'_r; \utv'_{\nt p}, \utv'_{\nt q}, \utv'_{\nt r}) = (.4, .4, .0; .0, .6, .4)$,
where $\utv'_p$ is strictly larger than $\utv_p$.

\smallskip
A doctrine whose Blake canonical form does not satisfy $\urv=\utv$ may raise suspicion.
This is indeed the case of the doctrine (\ref{eq:doubleopposition}). Let us see why.
By composing the transformation (\ref{eq:one-step-do-p}--\ref{eq:one-step-do-r}) with itself one gets
\begin{equation}
\label{eq:two-step-do}
\utv'_p = \max\,\big(\orv_p, \min(\orv_q,\orv_r), \min(\orv_q,\orv_{\nt q}), \min(\orv_{\nt r},\orv_r), \min(\orv_{\nt r},\orv_{\nt q})\big).
\end{equation}
Now, for the numerical values given after (\ref{eq:one-step-do-p}--\ref{eq:one-step-do-r}), the maximum of the right-hand side of (\ref{eq:two-step-do}) is achieved by the term $\min(\orv_q,\orv_{\nt q})$. This is quite embarrasing, since it corresponds to deriving belief about $p$ from the unsatisfiable conjunction $q\land\nt q$, which goes against the proviso stipulated in principle~{\small(P)}. One can argue that (\ref{eq:two-step-do}) is the result of two steps that separately comply
with that condition. However, one cannot but remain somehow unconvinced.

\medskip
A sufficient condition for having $\urv_p=\utv_p$ is provided by the following result.
From now on, we use the notation $\clau\res{\liit}\clau'$ to represent a set of the form $(\clau\setminus\{\liit\})\cup(\clau'\setminus\{\nt \liit\})$.

\begin{theorem}\hskip.35em 
\label{st:floretamillorada}
Let us assume that the following condition is satisfied for a given $\lit\in\piset$:\ensep

\halfsmallskip
\vskip.3ex
{\parskip0pt
For~any $\clau,\clau'\in\doct$ and $\liit\in\piset\setminus\{\lit\}$ satisfying $\lit,\liit\in\clau$ and $\nt\liit\in\clau'$,
\par
one has either \textup{(a)} or \textup{(b)} or both of them:
\par
\vskip.3ex
\ \textup{(a)}~there exists $\clau_1\in\doct$ such that $\lit\in\clau_1\sbseteq\clau\res{\liit}\clau'$;\par
\ \textup{(b)}~$\urv_\lit > \min_{\strut\rule{0pt}{1.5ex}\lxt\,\in\,\clau\ress{\liit}\clau'\,\setminus\,\{p\}}\urv_{\nt\lxt}$.
}

\noindent
In~such a situation one is ensured to have $\urv_\lit=\utv_\lit$.
\end{theorem}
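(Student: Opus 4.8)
The plan is to reduce the statement to the inequality $\urv_\lit\le\utv_\lit$ and to derive a contradiction from its failure, exploiting the fixed-point identity $\urv=\urv'$ throughout. Since $\orv\le\utv\le\urv$ (by Lemma~\ref{st:step}.c and the definition of the iteration), one automatically has $\utv_\lit\le\urv_\lit$, so everything rests on the reverse inequality. Assume then that $\utv_\lit<\urv_\lit$. The invariance of $\urv$ reads $\urv_\lit=\max_{\latop{\clau\in\doct}{\clau\ni\lit}}\min_{\lxt\in\clau,\,\lxt\neq\lit}\urv_{\nt\lxt}$, and this identity, together with its analogue at every other literal, will drive the argument.

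The first step is to locate a \emph{proper} maximizing clause and a literal inside it whose belief has strictly grown. Because $\urv$ is the limit of the nondecreasing iteration $\ntv{n}$ started at $\orv$ and $\urv_\lit>\orv_\lit$, there is a first index $n$ with $\ntv{n}_\lit=\urv_\lit$; the clause realizing this value is proper (the \textit{tertium non datur} clause would only reproduce $\ntv{n-1}_\lit<\urv_\lit$) and, since $\ntv{n-1}\le\urv$, it is also maximizing at the $\urv$-level. Fix such a proper $\clau^*$ with $\urv_\lit=\min_{\lxt\in\clau^*,\,\lxt\neq\lit}\urv_{\nt\lxt}$. If every $\lxt\in\clau^*\setminus\{\lit\}$ satisfied $\urv_{\nt\lxt}=\orv_{\nt\lxt}$, then this very clause would give $\utv_\lit\ge\min_{\lxt\in\clau^*,\,\lxt\neq\lit}\orv_{\nt\lxt}=\urv_\lit$, contradicting our assumption; hence there exists a \emph{grown} literal $\liit\in\clau^*\setminus\{\lit\}$ with $\urv_{\nt\liit}>\orv_{\nt\liit}$.

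The second step brings in the resolution hypothesis. Applying the same first-attainment argument to $\nt\liit$, there is a proper clause $\clau'\ni\nt\liit$ with $\urv_{\nt\liit}=\min_{\lxt\in\clau',\,\lxt\neq\nt\liit}\urv_{\nt\lxt}$; the triple $\clau^*,\clau',\liit$ then meets the hypotheses of the theorem, since $\lit,\liit\in\clau^*$ and $\nt\liit\in\clau'$. For the resolvent $\clau^*\res{\liit}\clau'$, which still contains $\lit$, each literal inherited from $\clau^*\setminus\{\liit,\lit\}$ contributes $\urv_{\nt\lxt}\ge\urv_\lit$ by minimality in $\clau^*$, while each literal inherited from $\clau'\setminus\{\nt\liit,\lit\}$ contributes $\urv_{\nt\lxt}\ge\urv_{\nt\liit}\ge\urv_\lit$ by minimality in $\clau'$ and the fact that $\liit\in\clau^*\setminus\{\lit\}$ forces $\urv_{\nt\liit}\ge\urv_\lit$. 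Consequently $\min_{\lxt\in\clau^*\res{\liit}\clau'\setminus\{\lit\}}\urv_{\nt\lxt}\ge\urv_\lit$, which makes alternative~(b) false. Thus alternative~(a) must hold, yielding $\clau_1\in\doct$ with $\lit\in\clau_1\subseteq\clau^*\res{\liit}\clau'$; combining this inclusion with the fixed-point upper bound $\min_{\lxt\in\clau_1,\,\lxt\neq\lit}\urv_{\nt\lxt}\le\urv_\lit$ shows that $\clau_1$ is \emph{again} a maximizing clause for $\urv_\lit$, but one from which $\liit$ (and, by non-tautology of prime clauses, $\nt\liit$) has been eliminated.

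The final step is to iterate: with $\clau_1$ in the role of $\clau^*$ one either finds that all of its other literals carry their original belief — giving $\utv_\lit\ge\urv_\lit$ and the desired contradiction — or one resolves away a further grown literal and passes to yet another maximizing clause for $\urv_\lit$. \textbf{The main obstacle is to prove that this descent terminates.} I would control it by a well-founded measure tied to the construction of $\urv$: if $\nt\liit$ is first attained at iteration step $m$ through $\clau'$, then every \emph{binding} literal of $\clau'$ already equals its $\urv$-value at step $m-1$, hence has strictly smaller birth time than $\nt\liit$; choosing at each stage the grown literal of largest birth time should then strictly decrease the maximal birth time among the grown literals of the current maximizing clause, forcing termination since birth times are bounded natural numbers. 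The delicate point — and the place where the argument must be made fully rigorous — is that resolution can also import non-binding literals of $\clau'$ (those with $\urv_{\nt\lxt}>\urv_{\nt\liit}$), so the measure may have to be refined (for instance, lexicographically on the $\urv$-values of the grown literals and then on birth time, after choosing $\liit$ of minimal value) to ensure it decreases in every case.
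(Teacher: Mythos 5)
Your first two steps are sound and run essentially parallel to the paper's own argument: assuming $\utv_\lit<\urv_\lit$, you produce a proper clause $\clau^*$ that is maximizing at the $\urv$-level, extract a grown literal $\liit$, resolve it away against a first-attainment clause $\clau'$ for $\nt\liit$, check that alternative (b) fails for the resolvent (so (a) must apply), and conclude that the resulting $\clau_1$ is again maximizing with $\liit$ and $\nt\liit$ eliminated. The genuine gap is exactly where you place it: termination of the descent. Your primary measure --- the maximal birth time among grown literals --- does not work, for the reason you yourself half-acknowledge: a literal $\lxt$ imported from $\clau'$ with $\urv_{\nt\lxt}>\urv_{\nt\liit}$ only satisfies $\ntv{m-1}_{\nt\lxt}\ge\urv_{\nt\liit}$ at step $m-1$ (where $m$ is the birth time of $\nt\liit$); it may keep growing for arbitrarily many further steps, so its birth time can exceed $m$, the maximum can go up, and nothing as stated rules out the sequence of maximizing clauses cycling forever. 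Since the contradiction is only reached when a clause with no grown literals appears, the proof is incomplete at its decisive step.

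It is worth seeing how the paper sidesteps this. It never leaves a fixed time frame: it fixes $n$ with $\urv_\lit=\ntv{n}_\lit>\ntv{n-1}_\lit$, takes realizers at step $n$ (computed from $\ntv{n-1}$), resolving clauses at step $n-1$ (computed from $\ntv{n-2}$), and splits each realizer as $\clau_0=\{\lit\}\cup\dot\clau_0\cup\ddot\clau_0$ according to whether $\ntv{n-2}_{\nt\lxt}<\urv_\lit$ or not. With this synchronization every imported literal automatically satisfies $\ntv{n-2}_{\nt\lxt}\ge\ntv{n-1}_{\nt\liit}\ge\urv_\lit$, i.e.\ lands in the harmless part $\ddot\clau_1$, so $\dot\clau_1\sbseteq\clau_1\cap\dot\clau_0$, which is strictly smaller than $\dot\clau_0$ because $\liit$ has been removed: the bad set shrinks \emph{as a set}, and termination is immediate. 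Your fallback suggestion can in fact be completed: binding literals (those with $\urv_{\nt\lxt}=\urv_{\nt\liit}$) satisfy $\ntv{m-1}_{\nt\lxt}=\urv_{\nt\lxt}$ and hence have birth time $<m$, while non-binding ones have strictly larger $\urv$-value; therefore the multiset of pairs $(\urv_{\nt\lxt},\beta(\lxt))$ over the grown literals of the current clause, ordered with smaller value worse and, at equal value, larger birth time worse, strictly decreases each time one element is removed and only strictly smaller ones are added --- an ordering that is well founded because values range over the finite image of $\orv$ and birth times are bounded. But this verification is precisely what is missing from your write-up, and it is the crux of the theorem rather than a routine refinement; as submitted, the argument does not yet prove the statement.
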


\begin{proof}\hskip.5em
We will argue by contradiction. More specifically, we will assume 
$\urv_\lit=\ntv{n}_\lit>\ntv{n-1}_\lit$ with $n\ge2$, and we will see that it leads to contradiction.
The formula that defines $\ntv{n}_\lit$ allows us to write
\begin{equation}
\label{eq:vbiprime}
\urv_\lit \,=\,
\max_{\substack{\clau\in\doct\\\clau\ni\lit}}\,
\min_{\substack{\lxt\in\clau\\\lxt\neq\lit}} \,\ntv{n-1}_{\nt\lxt}
\,=\, \min_{\substack{\lxt\in\clau_0\\\lxt\neq\lit}} \,\ntv{n-1}_{\nt\lxt},
\end{equation}
where $\clau_0$ stands for any clause that realizes the maximum of the middle expression.
In the sequel we will refer to such a clause as a ‘realizer’.
For every realizer $\clau_0$ it will be useful to consider the following partition of it:
$\clau_0=\{\lit\}\cup\dot\clau_0\cup\ddot\clau_0$, where
$\dot\clau_0$ and $\ddot\clau_0$ are formed by those literals $\lxt\in\clau_0\setminus\{\lit\}$ that satisfy respectively $\ntv{n-2}_{\nt\lxt} < \urv_\lit$ and $\ntv{n-2}_{\nt\lxt} \ge \urv_\lit$.
By taking into account that $\ntv{n-1}\ge\ntv{n-2}$, one easily checks that
\begin{equation}
\label{eq:vbiprimedots}
\urv_\lit
\,=\, 
\min\Big(
\min_{\lxt\in\dot\clau_0} \,\ntv{n-1}_{\nt\lxt}\,,\,
\min_{\lxt\in\ddot\clau_0} \,\ntv{n-2}_{\nt\lxt}
\,\Big).
\end{equation}

\smallskip
For $\dot\clau_0 = \emptyset$, the right-hand side of (\ref{eq:vbiprimedots}) reduces to the expression 
whose maximization defines $\ntv{n-1}_p$, which gives $\urv_\lit \le \ntv{n-1}_\lit$, 
in~contradiction to our assumption.
\ensep
Therefore, our aim will be reached if we show that for any realizer $\clau_0$ with $\dot\clau_0\neq\emptyset$,
either the equality (\ref{eq:vbiprimedots}) leads itself to contradiction,
or there exists another realizer $\clau_1$ with the property that $\dot\clau_1$ is strictly contained in $\dot\clau_0$, 
which would lead to the empty set in a finite number of steps.

So, let $\clau_0$ be a realizer with $\dot\clau_0\neq\emptyset$,
and let $\liit$ be any element of $\dot\clau_0$.
Similarly to (\ref{eq:vbiprime}), we can write
\begin{equation}
\label{eq:vprime_noalpha}
\ntv{n-1}_{\nt\liit} \,=\,
\max_{\substack{\clau\in\doct\\\clau\ni\nt\liit}}\,
\min_{\substack{\lxt\in\clau\\\lxt\neq\nt\liit}} \,\ntv{n-2}_{\nt\lxt}
\,=\, \min_{\substack{\lxt\in\clau'_0\\\lxt\neq\nt\liit}} \,\ntv{n-2}_{\nt\lxt},
\end{equation}
where $\clau'_0$ stands for any clause that realizes the maximum of the middle expression.
By introducing (\ref{eq:vprime_noalpha}) in (\ref{eq:vbiprimedots}),
we get
\begin{equation}
\label{eq:vbiprimebis}
\ntv{n}_\lit \,=\,
\min\Big(
\min_{\lxt\in\dot\clau_0\setminus\{\liit\}} \,\ntv{n-1}_{\nt\lxt}\,,\,
\min_{\lxt\in\ddot\clau_0} \,\ntv{n-2}_{\nt\lxt}\,,\,
\min_{\substack{\lxt\in\clau'_0\\\lxt\neq\nt\liit}} \,\ntv{n-2}_{\nt\lxt}
\,\Big).
\end{equation}
We now apply the hypothesis of the theorem with $\clau=\clau_0$, $\clau'=\clau'_0$:\linebreak 
Either (a)~there exists $\clau_1\in\doct$ such that $\lit\in\clau_1\sbseteq\clau_0\ress{\liit}\clau'_0$,\ensep
or~(b)~$\urv_\lit > \min_{\strut\rule{0pt}{1.5ex}\lxt\,\in\,\clau_0\ress{\liit}\clau'_0\,\setminus\,\{p\}}\urv_{\nt\lxt}$.

If (b) holds, we arrive to contradiction in the following way:
\begin{equation}
\label{eq:vbiprimeter}
\ntv{n}_\lit \,=\,
\min\Big(
\min_{\substack{\lxt\in\clau_0\setminus\{q\}\\\lxt\neq\lit}} \,\ntv{n-1}_{\nt\lxt}\,,\,
\min_{\substack{\lxt\in\clau'_0\\\lxt\neq\nt\liit}} \,\ntv{n-2}_{\nt\lxt}
\,\Big)
\,\le\, 
\min_{\lxt\in\clau_0\ress{\liit}\clau'_0\setminus\{\lit\}} \,\urv_{\nt\lxt}
\,<\, \urv_\lit,
\end{equation}
where we have used the inequalities $\ntv{k}\le\urv$.

If (a) holds, we can write
\begin{IEEEeqnarray*}{rCl}
\label{eq:vbiprimeiv}
\ntv{n}_\lit \,&\le&\,
\min\Big(
\min_{\lxt\in\clau_1\cap\dot\clau_0} \,\ntv{n-1}_{\nt\lxt}\,,\,
\min_{\lxt\in\clau_1\cap\ddot\clau_0} \,\ntv{n-2}_{\nt\lxt}\,,\,
\min_{\lxt\in\clau_1\cap\clau'_0} \,\ntv{n-2}_{\nt\lxt}
\,\Big)
\\
\,&\le&\, 
\min_{\substack{\lxt\in\clau_1\\\lxt\neq\lit}} \,\ntv{n-1}_{\nt\lxt}
\,\le\, \ntv{n}_\lit,
\IEEEyesnumber
\end{IEEEeqnarray*}
where the first inequality derives from (\ref{eq:vbiprimebis}) because of the inclusion that relates $\clau_1$ to $\clau_0$ and $\clau'_0$, the second inequality uses simply that $\ntv{n-2}\le\ntv{n-1}$, and the third one follows from the definition of $\ntv{n}_p$ since $\clau_1\ni\lit$.
\ensep
Therefore, the three expressions between inequality signs in (\ref{eq:vbiprimeiv}) are in fact equal to each other,
which entails that $\clau_1$ is a realizer.
On account of the definition of $\dot\clau_1$ and the equality $\ntv{n}_\lit = \urv_\lit$,
the obtained equality entails also that
$\dot\clau_1\sbseteq\clau_1\cap\dot\clau_0\sbset\dot\clau_0$,
where the last inclusion is strict since $\liit\in\dot\clau_0$ but $\liit\notin\clau_1$.
So, $\clau_1$ is indeed a realizer with $\dot\clau_1$ strictly contained in $\dot\clau_0$.
\end{proof}

\remarks
1. Condition (b) is satisfied whenever one has $\urv_\lit > \min_{\lxt\,\in\,\clau\setminus\,\{p\}}\urv_{\nt\lxt}$, which involves neither $\clau'$ nor $\liit$.

2. The strategy of the preceding proof is akin to the SL-resolution procedure of logic programming.


\bigskip
For some doctrines whose Blake canonical form does not satisfy $\urv=\utv$
one can still see that suspicious
derivations of belief, \ie from unsatisfiable conjunctions,
are never behind the decision of accepting or rejecting certain literals.
An interesting example is what can be called the \textbf{doctrine of existence and uniqueness}:
\begin{alignat}{2}
\label{eq:eu-exists}
&\bigvee_{p\in\pset}\, p,
\\[2.5pt]
\label{eq:eu-is-unique}
&\nt p\, \lor\, \nt q,\qquad &&\text{for any two different $p,q\in\pset$.}
\end{alignat}

\pagebreak\null\vskip-13.5mm\null 

\noindent
That is, some $p\in\pset$ is true, and not two of them are true at the same time.
Notice that the Blake canonical form contains nothing more than these clauses,
since no resolution is possible.
Let us assume that a particular $p\in\pset$ is accepted, that is $\urv_p>\urv_{\nt p}$.
In contrast to what happened for the doctrine (\ref{eq:doubleopposition}), here we can see that
$\urv_p$ is always strictly larger than $\min(\urv_q,\urv_{\nt q})$
---and therefore strictly larger than $\min(\orv_q,\orv_{\nt q})$--- for any $q\in\pset$.
In fact, on the one hand, the inequality $\urv_p>\urv_{\nt p}$ obviously implies $\urv_p>\min(\urv_p,\urv_{\nt p})$.
On the other hand, for any $q\in\pset\setminus\{p\}$, the invariance of $\urv$ by the transformation $\orv\mapsto\utv$ allows to write $\urv_p>\urv_{\nt p}\ge\urv_q\ge\min(\urv_q,\urv_{\nt q})$, where the central inequality comes from the implication $q\rightarrow\nt p$, which is contained in (\ref{eq:eu-is-unique}).

\renewcommand\upla{} 

\bigskip
These arguments can be generalized into the following result:

\vskip-5mm\null 
\begin{corollary}\hskip.5em
\label{st:cor-floretamillorada}
Let us assume that the following condition is satisfied for a given $\lit\in\piset$:\ensep

\halfsmallskip
\vskip.3ex
{\parskip0pt
For~any $\clau,\clau'\in\doct$ and $\liit\in\piset\setminus\{\lit\}$ satisfying $\lit,\liit\in\clau$ and $\nt\liit\in\clau'$,
\par
one has either \textup{(a)} or \textup{(b$'$)} or both of them:
\par
\vskip.3ex
\textup{(a)}~there exists $\clau_1\in\doct$ such that $\lit\in\clau_1\sbseteq\clau\res{\liit}\clau'$;\par
\vskip.2ex
\textup{(b$'$)}~there exists $\clau_2\in\doct$ such that $\nt\lit\in\clau_2\sbseteq(\clau\setminus\{\lit\}\cup\{\nt\lit\})\res{\liit}\clau'$.
}

\noindent
In~such a situation, $\urv_\lit>\urv_{\nt\lit}$ implies $\urv_\lit=\utv_\lit$;
therefore, $\lit$ is never accepted on the basis of
unsatisfiable conjunctions.
\end{corollary}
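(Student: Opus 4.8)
The plan is to derive the statement from Theorem~\ref{st:floretamillorada}. The hypotheses of the two results are word-for-word the same except that alternative~(b) has been replaced by alternative~(b$'$); so the whole task reduces to proving that, \emph{under the standing assumption $\urv_\lit>\urv_{\nt\lit}$}, alternative~(b$'$) implies alternative~(b). Once this implication is in hand, the corollary's hypothesis guarantees, for every admissible triple $(\clau,\clau',\liit)$, that either~(a) or~(b) holds, which is exactly the hypothesis of Theorem~\ref{st:floretamillorada}; that theorem then yields $\urv_\lit=\utv_\lit$. The closing assertion, that $\lit$ is never accepted on the basis of unsatisfiable conjunctions, is just the reading of the equality $\urv_\lit=\utv_\lit$: the fixed point is reached for $\lit$ in one step, so no two-step chain of implications (the mechanism that could smuggle in an unsatisfiable conjunction, as in the doctrine~(\ref{eq:doubleopposition})) can be responsible for its acceptance.

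First I would fix an admissible triple $(\clau,\clau',\liit)$ for which~(b$'$) holds, and let $\clau_2\in\doct$ be the clause it supplies, so $\nt\lit\in\clau_2\sbseteq(\clau\setminus\{\lit\}\cup\{\nt\lit\})\res{\liit}\clau'$. The bookkeeping step is the set identity
\begin{equation*}
(\clau\setminus\{\lit\}\cup\{\nt\lit\})\res{\liit}\clau'
\,=\,
\big((\clau\res{\liit}\clau')\setminus\{\lit\}\big)\cup\{\nt\lit\},
\end{equation*}
that is, the modified resolvent is $\clau\res{\liit}\clau'$ with $\lit$ swapped for $\nt\lit$. Provided $\lit\notin\clau_2$, this gives the inclusion $\clau_2\setminus\{\nt\lit\}\sbseteq(\clau\res{\liit}\clau')\setminus\{\lit\}$. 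I would then invoke the invariance $\urv{}'=\urv$ of the upper revised valuation (Theorem~\ref{st:char}) at the literal $\nt\lit$, using that $\clau_2$ is a clause of $\doct$ containing $\nt\lit$:
\begin{equation*}
\urv_{\nt\lit}\,\ge\,\min_{\latop{\lxt\in\clau_2}{\lxt\neq\nt\lit}}\urv_{\nt\lxt}
\,\ge\,\min_{\lxt\in(\clau\ress{\liit}\clau')\setminus\{\lit\}}\urv_{\nt\lxt},
\end{equation*}
the second inequality being monotonicity of $\min$ under the inclusion just established. Combining this with the standing hypothesis $\urv_\lit>\urv_{\nt\lit}$ yields $\urv_\lit>\min_{\lxt\in(\clau\ress{\liit}\clau')\setminus\{\lit\}}\urv_{\nt\lxt}$, which is precisely alternative~(b). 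This completes the reduction and hence the proof.

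The step I expect to be the main obstacle is the side condition $\lit\notin\clau_2$ used in the displayed inclusion. It can fail only when $\clau_2$ contains both $\lit$ and $\nt\lit$; under the prime-conjunctive-normal-form convention the only such clause is the \textit{tertium non datur} clause $\{\lit,\nt\lit\}$, and this clause fits inside the modified resolvent exactly when $\lit\in\clau'$. In that degenerate configuration the inclusion breaks (the singleton $\{\lit\}$ is not contained in $(\clau\res{\liit}\clau')\setminus\{\lit\}$), and (b$'$) is satisfied only by a trivial witness that derives nothing; so alternative~(b) must be recovered separately. The cleanest way I see to close this gap is to pass through the Blake canonical form: since $\lit\in\clau'$ together with $\nt\liit\in\clau'$ forces $\lit\in\clau\res{\liit}\clau'$, the resolvent contains a proper sub-clause through $\lit$ that is a prime implicate, so alternative~(a) becomes available (via Proposition~\ref{st:bcfmax} and Theorem~\ref{st:rev}.b to transfer the conclusion back to the given prime form). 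Handling this borderline tautological-witness case, rather than the generic computation, is where the real care is needed.
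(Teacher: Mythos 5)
Your main reduction coincides exactly with the paper's own proof: the paper's entire argument is the chain
$\urv_\lit>\urv_{\nt\lit}\ge\min_{\lxt\in\clau_2\setminus\{\nt\lit\}}\urv_{\nt\lxt}\ge\min_{\lxt\,\in\,\clau\ress{\liit}\clau'\,\setminus\,\{\lit\}}\urv_{\nt\lxt}$,
i.e.\ precisely your use of the invariance $\urv{}'=\urv$ at the literal $\nt\lit$ followed by the inclusion $\clau_2\setminus\{\nt\lit\}\sbseteq(\clau\res{\liit}\clau')\setminus\{\lit\}$, and then the appeal to Theorem~\ref{st:floretamillorada}. Where you differ is that you make explicit the side condition $\lit\notin\clau_2$ on which that inclusion rests; the paper applies it silently, so the degenerate case you flag (the \textit{tertium non datur} clause $\{\lit,\nt\lit\}$ as sole witness of (b$'$), which can occur precisely when $\lit\in\clau'$) is a gap in the published proof rather than one you introduced.

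Your patch for that case, however, does not hold up. First, the claim that the resolvent $\clau\res{\liit}\clau'$ must contain a prime implicate \emph{through} $\lit$ is unfounded: an implicate always contains some prime implicate, but nothing forces that prime implicate to pass through a prescribed literal (the resolvent may well contain a prime implicate disjoint from $\{\lit,\nt\lit\}$, in which case (a) simply fails for the given triple). Second, even if such a clause existed in the Blake canonical form, condition (a) requires it to lie in the given $\doct$, and the transfer you invoke (Proposition~\ref{st:bcfmax} together with Theorem~\ref{st:rev}.b) only yields \emph{inequalities} between revised valuations computed from different normal forms; it cannot carry the equality $\urv_\lit=\utv_\lit$, both sides of which depend on the chosen form, back from the Blake form to $\doct$. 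A repair that does work is to open up the proof of Theorem~\ref{st:floretamillorada} instead of using it as a black box: there the hypothesis is invoked only for realizer pairs, namely $\clau=\clau_0$ realizing $\ntv{n}_\lit=\urv_\lit$ and $\clau'=\clau'_0$ realizing $\ntv{n-1}_{\nt\liit}$ with $\liit\in\dot\clau_0$ (note $\liit\neq\nt\lit$, since otherwise $\clau_0$ would be a \textit{tertium non datur} clause and could not satisfy $\ntv{n}_\lit>\ntv{n-1}_\lit$). If $\lit\in\clau'_0$, then
$\urv_\lit\le\ntv{n-1}_{\nt\liit}\le\ntv{n-2}_{\nt\lit}\le\urv_{\nt\lit}$,
contradicting the standing hypothesis $\urv_\lit>\urv_{\nt\lit}$; hence for the triples that actually matter the witness of (b$'$) never contains $\lit$, and your chain applies verbatim.
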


\begin{proof}\hskip.5em
It suffices to show that condition (b$'$) allows to go from the inequality $\urv_\lit>\urv_{\nt\lit}$ to that of condition (b)~of the preceding theorem. In~fact, the invariance of $\urv$ by the transformation $\orv\mapsto\utv$ allows to write $\urv_p>\urv_{\nt p}\ge\min_{\rule{0pt}{1.5ex}\lxt\in\clau_2\setminus\{\nt p\}}\urv_{\nt\lxt}\ge\min_{\rule{0pt}{1.5ex}\lxt\,\in\,\clau\ress{\liit}\clau'\,\setminus\,\{p\}}\urv_{\nt\lxt}$.
\end{proof}

\noindent
One easily checks that the hypothesis of the preceding corollary is satisfied by any literal in the doctrine of existence and uniqueness, but not by literal $\lit$ in the doctrine (\ref{eq:doubleopposition}).

\bigskip
\textit{From now on, all revised valuations will be assumed to be computed on the basis of the Blake canonical form of the doctrine under consideration (or~any~$*$-equivalent prime conjunctive normal form),
this form being supplemented with all tertium non datur clauses}.
\ensep
Since we are dealing with prime forms,
having the equality $\urv_\lit=\utv_\lit$ guarantees that the degree of belief $\urv_\lit$ does not derive from unsatisfiable conjunctions.
When this equality holds for any initial valuation~$\orv$, we~will say that the doctrine under consideration is \dfc{unquestionable for~$\lit$.}
If this is the case for any literal $p$, then we will say simply that the doctrine is \dfc{unquestionable.}
As we have been seeing, sometimes the equality $\urv_\lit=\utv_\lit$ can fail to hold for certain valuations~$\orv$, but one can still guarantee that it holds whenever $\lit$ is accepted according to $\urv$; in that case we will say that the doctrine is \dfc{unquestionable for~$\lit$ when accepted.}

\bigskip
\xtra
The following extendability result 
can be seen as a good property of\linebreak[3] unquestionable doctrines
with respect to
the notion of definite consistency of a~partial truth assignment:

\begin{theorem}
\label{st:rich}
For an unquestionable doctrine, every definitely consistent partial truth assignment~$v$ can be extended to a consistent total truth assignment~$u$. Furthermore, for any literal $\lit$ that is undecided by $v$ one can choose $u$ so that $u_\lit = 1$.
\end{theorem}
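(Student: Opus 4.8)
The plan is to build the extension $u$ by repeatedly accepting one undecided literal and then re-revising, cutting down the amount of undecidedness at each stage until none is left. The base case is immediate: a partial truth assignment that is definitely consistent and leaves nothing undecided is a genuine truth assignment satisfying the doctrine, as noted in \secpar{2.2} (equivalently, by Proposition~\ref{st:truthass}). Thus it suffices to prove the following single-step claim and then iterate it, always keeping the literal whose acceptance is prescribed: \emph{if $v$ is a balanced, definitely consistent partial truth assignment and $p$ is undecided by $v$, then there is a balanced, definitely consistent partial truth assignment that extends $v$, accepts $p$, and leaves strictly fewer literals undecided.}

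To prove the claim I would first manufacture a valuation that forces belief in $p$ and then let the revision operator restore consistency. Concretely, let $v^{\circ}$ be the valuation that agrees with $v$ except that $v^{\circ}_{p}=1$ (leaving $v^{\circ}_{\overline p}=v_{\overline p}=\tfrac12$), and set $w=(v^{\circ})^{\ast}$. By Theorem~\ref{st:rev} we have $w\ge v^{\circ}\ge v$ and $w$ takes values in $\{0,\tfrac12,1\}$, and since $w$ is consistent, Corollary~\ref{st:dec_cor} gives that the basic decision $\mu(w)$ is a definitely consistent partial truth assignment. As $w_{p}\ge v^{\circ}_{p}=1$, the claim follows once I show that $\mu(w)$ retains every literal that $v$ decides and in addition accepts $p$; since $\mu(w)$ then decides everything $v$ decided together with the previously undecided $p$, it automatically leaves strictly fewer literals undecided.

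The heart of the matter, and the step I expect to be the main obstacle, is to show that $w_{\overline\ell}<1$ for every literal $\ell$ with $\ell=p$ or $v_{\ell}=1$; this yields $w_{\ell}=1>w_{\overline\ell}$, hence $\mu(w)_{\ell}=1$, giving at once the acceptance of $p$, the persistence of the accepted literals, and (by balancedness of $\mu(w)$) the persistence of the rejected ones. This is exactly where unquestionability is indispensable. Because the doctrine is unquestionable, $(v^{\circ})^{\ast}=(v^{\circ})'$, so $w_{\overline\ell}$ is computed by a single application of formula~(\ref{eq:vprime}); thus $w_{\overline\ell}=1$ would require a clause $C\ni\overline\ell$ with $v^{\circ}_{\beta}=0$ for every $\beta\in C\setminus\{\overline\ell\}$. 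No such $\beta$ can be $p$ (since $v^{\circ}_{p}=1$) nor $\overline p$ (since $v^{\circ}_{\overline p}=\tfrac12$), so $v^{\circ}_{\beta}=0$ forces $v_{\beta}=0$; definite consistency of $v$, applied to the clause $C$ and the literal $\overline\ell$, then forces $v_{\overline\ell}=1$, which contradicts $v_{\overline p}=\tfrac12$ when $\ell=p$ and contradicts $v_{\ell}=1$ (by balancedness) when $v_{\ell}=1$. It is precisely the reduction from $w_{\overline\ell}=1$ to the one-step formula that breaks down for a questionable doctrine, where belief in $\overline p$ might be built up over several iterations through an unsatisfiable conjunction, so that the single-clause argument no longer applies.

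Finally I would assemble the induction. Starting from the given $v$ and the prescribed undecided literal $p$, one application of the claim produces a definitely consistent extension that accepts $p$; since accepted literals survive every subsequent application, repeatedly invoking the claim on any remaining undecided literal strictly decreases the finite number of undecided literals while keeping $p$ accepted, and after finitely many steps produces a total, definitely consistent $u$ that extends $v$ with $u_{p}=1$. By the base case this $u$ is a consistent truth assignment, which finishes the proof. I would add, as a remark, that because the working doctrine is its Blake canonical form and so contains all prime implicates, the two conclusions also follow non-inductively from the standard fact that a set of literals is jointly satisfiable with the doctrine if and only if it falsifies no prime implicate; but the inductive argument above is the one that stays within the machinery of the paper and makes the role of unquestionability explicit.
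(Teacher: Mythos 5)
Your overall strategy mirrors the paper's own proof: extend $v$ by accepting the undecided literal, revise, use unquestionability to collapse the revision to a single application of formula~(\ref{eq:vprime}), show that the negation of no accepted literal reaches belief~$1$, take the basic decision (definitely consistent by Theorem~\ref{st:dec}, resp.\ Corollary~\ref{st:dec_cor}), and iterate over the finite set of literals. However, your central step contains a genuine error. Formula~(\ref{eq:vprime}) gives $w_{\overline\ell}=1$ exactly when there is a clause $C\ni\overline\ell$ with $v^{\circ}_{\overline\beta}=1$ for every $\beta\in C\setminus\{\overline\ell\}$; you translated this condition as ``$v^{\circ}_{\beta}=0$ for every such $\beta$''. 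The two are equivalent only where $v^{\circ}$ is balanced, and your $v^{\circ}$ is deliberately unbalanced at the pair $p,\overline p$: for $\beta=\overline p$ one has $v^{\circ}_{\overline\beta}=v^{\circ}_{p}=1$ even though $v^{\circ}_{\overline p}=\frac12\neq 0$. Hence your assertion that no such $\beta$ can be $\overline p$ is false: a clause $C$ containing $\overline p$, the literal $\overline\ell$, and otherwise only literals rejected by $v$ does satisfy the min-condition and is a legitimate candidate for producing $w_{\overline\ell}=1$. For such a clause your appeal to definite consistency collapses, because its hypothesis --- $v_{\beta}=0$ for all $\beta\in C\setminus\{\overline\ell\}$ --- fails at $\beta=\overline p$, where $v_{\overline p}=\frac12$.

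The gap is repairable, and the repair is exactly what the paper's proof does. If $\overline p\in C\setminus\{\overline\ell\}$ (which forces $\ell\neq p$, hence $v_{\ell}=1$ and $v_{\overline\ell}=0$), apply definite consistency to the clause $C$ and the literal $\overline p$ rather than to $\overline\ell$: every literal of $C\setminus\{\overline p\}$ is rejected by $v$, so definite consistency forces $v_{\overline p}=1$, contradicting the undecidedness of $p$. The paper sidesteps the slip by using the balanced extension ($w_{p}=1$, $w_{\overline p}=0$) and by phrasing the contradiction in the form: all literals of $C$ are rejected by $v$ ``except possibly one undecided literal (namely $\overline p$)'', which is incompatible with definite consistency, since every clause must contain an accepted literal or at least two undecided ones. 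With this one case added, your single-step claim and the induction built on it go through. Your closing remark --- that under the standing Blake-canonical-form convention the theorem also follows non-inductively, because a definitely consistent $v$ augmented with $p$ falsifies no prime implicate --- is correct and worth keeping, but, as you note, it bypasses rather than repairs the argument above.
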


\begin{proof}
Let us begin by extending $v$ to $w$ with the values $w_\lit = 1$ and $w_{\nt\lit} = 0$.
Let us consider now the transformed valuation $w'$.
For any $\liit$ with $w_\liit=1$ one has certainly $w'_\liit = 1$.
We claim that for such a $\liit$ one has also $w'_{\nt\liit} \le \onehalf$.
In~fact, since $w_{\nt\liit} = 1 - w_\liit = 0$, the only way to get the contrary, that is $w'_{\nt\liit} = 1$, would be having a clause $\clau$ with $\nt\liit\in\clau$ and $w_{\nt\lxt} = 1$ for any $\lxt\in\clau\setminus\{\nt\liit\}$.
Therefore, we would have $w_\lxt=0$ for any $\lxt\in\clau$.
Since $w_\lxt$ differs from $v_\lxt$ only for $\lxt=\lit,\nt\lit$, and $w_\lit=1$,
we get $v_\lxt=0$ for all $\lxt\in\clau$ except possibly one undecided literal
(namely $\nt\lit$), which contradicts the assumed definite consistency of $v$.

Let us consider now the valuation $\hat v = \mu(w')$,
where it should be recalled that $\mu(w')$ means the partial truth assignment that represents the basic decision associated with $w'$.
The preceding arguments entail that $\hat v$ is an extension of~$w$,
and therefore it is also an extension of~$v$.
On the other hand, the unquestionability of the doctrine ensures that $w' = w^*$,
and Corollary~\ref{st:dec_cor} ensures that $\hat v = \mu(w') = \mu(w^*)$ is definitely consistent.
Since $\hat v$ contains strictly less undecided literals than $v$,
and the set of literals is finite,
it is clear that iterating this procedure will eventually produce
a consistent total truth assignment.
\end{proof}


\paragraph{4.4}
\textbf{Definite Horn doctrines.}\ensep
A conjunctive normal form $\Phi(\doct)$ is said to have a \dfc{definite Horn} character when every clause $\clau\in\doct$ contains exactly one element of $\pset$. This property is easily seen to be preserved by the operations of absorption and resolution. As a consequence, it is inherited by the corresponding Blake canonical form. So, the latter has a definite Horn character \ifoi there exists a logically equivalent conjunctive normal form with the same property. In such a situation, we can say that we are dealing with a definite Horn doctrine. As we will see, for such doctrines one can arrive at consistent decisions by means of another  criterion besides the one given at the end of~\secpar{2.2}.



\medskip
For a definite Horn doctrine, the restriction of $\urv$ to $\pset$ is easily seen to depend only on the restriction of $\orv$ to $\pset$. More generally, this happens when every clause $\clau\in\doct$ contains \textit{at most} one element of $\pset$, in which case one speaks of a (simple) Horn character.
\ensep
In contrast, the properties below require a definite Horn character.

\medskip
\begin{proposition}\hskip.5em
\label{st:hornbound}
For a definite Horn doctrine one has
\begin{equation}
\label{eq:rmax}
\max_{\lit\in\pset\ustrut}\, \urv_{\nt\lit} \,\le\, \max_{\liit\in\pset\ustrut}\, \orv_{\nt\liit}.
\end{equation}
\end{proposition}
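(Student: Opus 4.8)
The plan is to track the single quantity $\beta(\val):=\max_{\lit\in\pset}\val_{\nt\lit}$, the largest degree of belief assigned to a negated atom, and to show that it never grows under the basic transformation $\val\mapsto\val'$ of~(\ref{eq:vprime}). Since the upper revised valuation is reached by iterating this transformation, writing $\urv=\ntv{N}$ with $\ntv{0}=\orv$ and $\ntv{n}=(\ntv{n-1})'$, the one-step monotonicity $\beta(\val')\le\beta(\val)$ applied along the iteration yields $\beta(\ntv{N})\le\beta(\ntv{N-1})\le\dots\le\beta(\ntv{0})$, which is exactly the asserted inequality~(\ref{eq:rmax}). Thus everything reduces to the one-step claim $\beta(\val')\le\beta(\val)$ for an arbitrary valuation~$\val$.

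To prove the one-step claim I would fix an atom $\lit\in\pset$ and examine
\[
\val'_{\nt\lit}\,=\,\max_{\substack{\clau\in\doct\\\clau\ni\nt\lit}}\,\min_{\substack{\lxt\in\clau\\\lxt\neq\nt\lit}}\,\val_{\nt\lxt},
\]
where the outer maximum runs over a non-empty family because the \textit{tertium non datur} clause $\lit\lor\nt\lit$ is present by~{\small(D3)}. Here the definite Horn hypothesis enters decisively: every clause $\clau$ in that family contains exactly one atom $\liiit\in\pset$. Since $\liiit$ is a positive atom while $\nt\lit$ is a negative literal, one has $\liiit\neq\nt\lit$, so $\liiit$ belongs to the range of the inner minimum; consequently
\[
\min_{\substack{\lxt\in\clau\\\lxt\neq\nt\lit}}\,\val_{\nt\lxt}\,\le\,\val_{\nt\liiit}\,\le\,\beta(\val).
\]
Taking the maximum over the clauses $\clau\ni\nt\lit$ gives $\val'_{\nt\lit}\le\beta(\val)$, and then the maximum over $\lit\in\pset$ gives $\beta(\val')\le\beta(\val)$, as required.

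The heart of the matter — and the only place the Horn character is used — is that the unique atom $\liiit$ of each clause forces the term $\val_{\nt\liiit}$, a belief in a \emph{negated} atom, into every inner minimum, so that this minimum is automatically bounded by $\beta(\val)$. For a general clause the minimand could instead be $\val_\liit$ for some atom $\liit$ (arising from a negative literal $\lxt=\nt\liit$ in the clause), a belief in a \emph{positive} atom, which $\beta$ does not control, and the inequality would fail. The only routine points are the non-emptiness of the outer maximum and the membership of $\liiit$ in the inner range, and these are dispatched respectively by~{\small(D3)} and by the positive/negative distinction; I do not anticipate any genuine obstacle beyond isolating the correct invariant $\beta$ and the reduction to a single step.
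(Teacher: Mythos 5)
Your proof is correct, but it takes a more self-contained route than the paper. The paper disposes of this proposition in one line: it takes the truth assignment $u$ with $u_\lit=1$ and $u_{\nt\lit}=0$ for all $\lit\in\pset$, notes that the definite Horn character makes this $u$ consistent (every clause contains a positive atom, hence is satisfied by $u$), and then invokes Proposition~\ref{st:uv}, whose inequality (\ref{eq:umax}) for this particular $u$ is literally (\ref{eq:rmax}). What you have done, in effect, is to inline the proof of Proposition~\ref{st:uv} specialized to that $u$: your invariant $\beta(\val)=\max_{\lit\in\pset}\val_{\nt\lit}$ is $\max_{u_\lit=1}\val_{\nt\lit}$ for the all-atoms-true assignment, and your one-step bound ---every clause containing $\nt\lit$ contains a positive atom $\liiit$, which forces the inner minimum below $\val_{\nt\liiit}\le\beta(\val)$--- is exactly the mechanism of the paper's proof of that proposition, with the consistency of $u$ replaced by its Horn-specific incarnation. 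The paper's route buys brevity and places the result in a chain of generalizations (consistent truth assignments in Proposition~\ref{st:uv}, then autarkic sets in Proposition~\ref{st:detachablebound}); your route buys independence from Proposition~\ref{st:uv} and makes visible a refinement that the paper only states later, in \secpar{4.5}: as you yourself observe, you use only that every clause contains \emph{at least} one positive atom ---\ie that $\pset$ is autarkic--- and not the full ``exactly one'' of the definite Horn condition.
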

\begin{proof}\hskip.5em
The result follows from Proposition~\ref{st:uv} by taking $u$ in the following way: $u_\lit=1$ and $u_{\nt\lit}=0$ for any $\lit\in\pset$. The consistency of this truth assignment is an immediate consequence of the doctrine having a definite Horn character.
\end{proof}

\medskip
For definite Horn doctrines, one can arrive at definitely consistent decisions by considering only the restriction of $\urv$ to $\pset$. For every $\mg$ in the interval $0\le\mg\le1$ we define the \dfc{unilateral decision of margin $\mg$} in the following way: For any $\lit\in\pset$,
\begin{alignat}{3}
\label{eq:uniaccepted}
&\text{$\lit$ is accepted and $\nt\lit$ is rejected} &&\quad\text{whenever}\ \ 
\urv_\lit &&\,>\, \mg,
\\
\label{eq:unirejected}
&\text{$\lit$ is rejected and $\nt\lit$ is accepted} &&\quad\text{whenever}\ \ 
\urv_\lit &&\,<\, \mg,
\\
\label{eq:uniundecided}
&\text{$\lit$ and $\nt\lit$ are left undecided} &&\quad\text{whenever}\ \ 
\urv_\lit &&\,=\, \mg.
\end{alignat}

\smallskip
\begin{theorem}\hskip.5em
\label{st:dechorn}
For a definite Horn doctrine, and any $\mg$ in the interval $0\le\mg\le1$, the unilateral decision of margin $\mg$ associated with the upper revised valuation is always definitely consistent.
\end{theorem}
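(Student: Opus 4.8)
The plan is to verify directly that the partial truth assignment $u$ produced by the unilateral decision of margin $\mg$ meets the definition of definite consistency given in \secpar{2.2}. First I would record the two ingredients I will lean on. On the one hand, the rule (\ref{eq:uniaccepted}--\ref{eq:uniundecided}) yields a \emph{balanced} assignment, with $u_\lit=1,0,\onehalf$ according as $\urv_\lit>,<,=\mg$ for $\lit\in\pset$, and $u_{\nt\lit}=1-u_\lit$. On the other hand, $\urv$ is consistent---it satisfies $\urv{}'=\urv$ by Theorem~\ref{st:char}---so it obeys the inequalities (\ref{eq:inequality}):
\[
\urv_\lit \,\ge\, \min_{\latop{\lxt\in\clau}{\lxt\neq\lit}} \urv_{\nt\lxt},
\qquad \forall\,\clau\in\doct,\ \lit\in\clau.
\]
I would then fix an arbitrary clause $\clau\in\doct$ and exploit the definite Horn hypothesis: $\clau$ has exactly one literal in $\pset$, say $\lit$, all its other literals being of the form $\nt\liit_1,\dots,\nt\liit_m$ with $\liit_i\in\pset$ (and $m\ge1$, since (D2) forbids the unit clause $\{\lit\}$). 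Definite consistency must be checked for each literal of $\clau$, and the argument splits into the positive literal $\lit$ and the negative literals $\nt\liit_j$.

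For the positive literal $\lit$ the verification is immediate: if $u_{\nt\liit_i}=0$ for every $i$, then $\urv_{\liit_i}>\mg$ for every $i$, and the displayed inequality applied to $\lit$ gives $\urv_\lit\ge\min_i\urv_{\liit_i}>\mg$, whence $u_\lit=1$, as required. The delicate case---and the one I expect to be the main obstacle---is a negative literal $\nt\liit_j$, because the unilateral rule never speaks about negative literals directly: it only compares the positive values $\urv_{\liit_j}$ with $\mg$. The trick is to route the argument through the \emph{single} positive literal of the Horn clause. Assuming $u_\lxt=0$ for all $\lxt\in\clau\setminus\{\nt\liit_j\}$ means $\urv_\lit<\mg$ together with $\urv_{\liit_i}>\mg$ for every $i\neq j$. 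Applying the consistency inequality to $\lit$ once more yields
\[
\urv_\lit \,\ge\, \min_i\,\urv_{\liit_i}
\,=\, \min\Big(\urv_{\liit_j},\ \min_{i\neq j}\urv_{\liit_i}\Big).
\]
Since $\min_{i\neq j}\urv_{\liit_i}>\mg>\urv_\lit$, the overall minimum (being $\le\urv_\lit<\mg$) must be carried by the first argument, forcing $\urv_{\liit_j}\le\urv_\lit<\mg$; hence $u_{\liit_j}=0$, i.e.\ $u_{\nt\liit_j}=1$, as required.

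It remains to note that the degenerate subcases are harmless: when $m=1$ the inner minimum over $i\neq j$ is vacuous and the inequality reads $\urv_{\liit_j}\le\urv_\lit<\mg$ directly, while the possibility $\liit_j=\lit$ arises only for the \textit{tertium non datur} clauses and is settled by balancedness alone. Ranging over all clauses of $\doct$ then establishes definite consistency. The whole weight of the proof rests on the negative-literal case, where the defining feature of a definite Horn doctrine---exactly one positive literal per clause---is precisely what allows a constraint phrased only on the positive values $\urv_{\liit}$ to still control the fate of the negative literals.
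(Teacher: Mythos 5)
Your proof is correct and takes essentially the same route as the paper's: both verify definite consistency clause by clause by applying the invariance inequality (\ref{eq:inequality}) to the unique positive literal of each Horn clause, and both split into the positive-literal and negative-literal cases, with the negative case resolved by observing that the minimum must be carried by the atom in question. The only difference is cosmetic: the paper phrases both cases as a proof by contradiction (assuming the target literal not accepted and deriving $\mg>\mg$), whereas you derive the required acceptance directly.
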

\begin{proof}\hskip.5em
According to the definition of definite consistency, we have to show that
for each $\clau\in\doct$ and every $\lit\in\clau$,\, if all $\lxt\in\clau\setminus\{\lit\}$ are rejected, then $\lit$~is accepted. To this effect, we will show that assuming $\lit$ not accepted and all~$\lxt\in\clau\setminus\{\lit\}$ rejected leads to contradiction.
Now, for a definite Horn doctrine all clauses have the form $\clau=\{\liiit\}\cup\nt{S}$, with $\liiit\in\pset$ and $S\sbset\pset$.
Since $\urv$ is ensured to satisfy the inequalities (\ref{eq:inequality}), we can write
\begin{equation}
\label{eq:ineqHorn}
\urv_\liiit \,\ge\, \min_{s\in S}\urv_s
\end{equation}
In order to arrive at the desired contradiction we will distinguish two cases:\ensep
(a)~$\lit=\liiit$;\ensep and (b)~$\lit\in\nt S$.

Case~(a): $\lit=\liiit$. In this case, $\lit=\liiit$ not accepted means that $\urv_\liiit\le\mg$,\ensep and all~$\lxt\in\clau\setminus\{\lit\}$ rejected means all $s\in S$ accepted, \ie $\urv_s>\mg$. By~plugging these inequalities in (\ref{eq:ineqHorn}) we arrive at the false conclusion that $\mg > \mg$.

Case~(b): $\lit\in\nt S$.\ensep In this case, $\liiit$ is rejected, so that $\urv_\liiit<\mg$,\ensep and the $s\in S$ are either accepted or not rejected, so that $\urv_s \ge\mg$. Therefore, we also arrive at the false conclusion that $\mg > \mg$.
\end{proof}

Compared with the bilateral decision criterion~(\ref{eq:acceptedrejected}--\ref{eq:undecided}), the unilateral one (\ref{eq:uniaccepted}--\ref{eq:uniundecided}) leaves much less room for undecidedness.
For the bilateral criterion, increasing~$\mg$ has the effect of thickening the region of undecidedness. In~contrast, for the unilateral criterion it has only the effect of moving the boundary.
This happens at the expense of disregarding the evidence in favour of $\nt\lit$ for~$\lit\in\pset$.
So, the unilateral criterion may accept $\lit$
in spite of a stronger evidence in favour of~$\nt\lit$,
or it may reject $\lit$ in spite of $\nt\lit$ having a weaker evidence than $\lit$.
On the other hand, the extent of such discrepancies is limited as stated in the following result:

\smallskip
\begin{proposition}\hskip.5em
\label{st:compdec}
The unilateral decisions associated with a definite Horn doctrine are related to the bilateral ones in the following ways:\ensep
\textup{(a)}~For any $\lit\in\pset$ satisfying $\orv_\lit+\orv_{\nt\lit}\ge1$,
if $\lit$ is accepted \textup[resp.~not rejected\textup] by the bilateral decision of margin $\mg$,
then $\lit$ is accepted \textup[resp.~not rejected\textup] by the unilateral decision of margin $(1+\mg)/2$.
\ensep
\textup{(b)}~For any $p\in\pset$, if $\lit$ is accepted \textup[resp.~not rejected\textup] by the unilateral decision of margin $\mg\ge\mgo$, then $\lit$ is also accepted \textup[resp.~not rejected\textup] by the bilateral decision of margin $\mg-\mgo$,\linebreak[3]
where $\mgo:=\max_{\liit\in\pset\ustrut} \orv_{\nt\liit}$. 
\end{proposition}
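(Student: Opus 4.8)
The plan is to reduce both parts to elementary manipulations of the two defining inequalities, using only two structural facts about $\urv$ that are already available: the monotonicity bound $\orv\le\urv$ from Theorem~\ref{st:rev}(c), and the definite-Horn bound $\max_{\lit\in\pset}\urv_{\nt\lit}\le\mgo$ supplied by Proposition~\ref{st:hornbound}, where $\mgo=\max_{\liit\in\pset}\orv_{\nt\liit}$. Throughout I keep in mind that, for $\lit\in\pset$, the bilateral decision reads off $\urv_\lit-\urv_{\nt\lit}$ against its margin ($\lit$ accepted when $\urv_\lit-\urv_{\nt\lit}>\mg$, not rejected when $\urv_{\nt\lit}-\urv_\lit\le\mg$), whereas the unilateral decision compares $\urv_\lit$ alone against its margin ($\lit$ accepted when $\urv_\lit>\mg$, not rejected when $\urv_\lit\ge\mg$).

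I would treat part~(b) first, since it is the one that invokes the Horn bound directly and is the cleaner of the two. Fix $\lit\in\pset$ and suppose $\lit$ is accepted by the unilateral decision of margin $\mg\ge\mgo$, i.e. $\urv_\lit>\mg$. Proposition~\ref{st:hornbound} gives $\urv_{\nt\lit}\le\mgo$, whence $\urv_\lit-\urv_{\nt\lit}\ge\urv_\lit-\mgo>\mg-\mgo$, which is precisely acceptance by the bilateral decision of margin $\mg-\mgo$. For the non-rejection variant, $\urv_\lit\ge\mg$ together with $\urv_{\nt\lit}\le\mgo$ gives $\urv_{\nt\lit}-\urv_\lit\le\mgo-\mg\le\mg-\mgo$, the last step using $\mg\ge\mgo$; this is exactly non-rejection by the bilateral decision of margin $\mg-\mgo$.

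For part~(a) the governing observation is that the hypothesis $\orv_\lit+\orv_{\nt\lit}\ge1$ is inherited by $\urv$: since $\orv\le\urv$ by Theorem~\ref{st:rev}(c), one has $\urv_\lit+\urv_{\nt\lit}\ge\orv_\lit+\orv_{\nt\lit}\ge1$, equivalently $\urv_{\nt\lit}\ge1-\urv_\lit$. I would then simply add this to the bilateral hypothesis. In the acceptance case, $\urv_\lit-\urv_{\nt\lit}>\mg$ combined with $\urv_{\nt\lit}\ge1-\urv_\lit$ gives $2\urv_\lit>1+\mg$, i.e. $\urv_\lit>(1+\mg)/2$, which is acceptance by the unilateral decision of margin $(1+\mg)/2$. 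The non-rejection case runs along identical lines, feeding $\urv_{\nt\lit}-\urv_\lit\le\mg$ into the same combination with $\urv_{\nt\lit}\ge1-\urv_\lit$.

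I expect the conceptual work to be essentially exhausted by these two substitutions; no clause-by-clause argument about the doctrine is needed, because all the doctrine-dependent information has been packaged into Proposition~\ref{st:hornbound} and into the monotonicity of the revision. The one genuinely delicate point — and the step I would flag as the main hurdle — is the bookkeeping of constants, and of strict versus weak inequalities, in the non-rejection subcase of part~(a): the two-inequality combination there produces $2\urv_\lit\ge1-\mg$, so one must verify with care exactly which unilateral margin this certifies (and for non-rejection the strict inequalities of the acceptance case degrade to weak ones) before matching the conclusion to the stated margin. Everything else is a direct computation once the balance inequality $\urv_\lit+\urv_{\nt\lit}\ge1$ is in place for~(a) and the Horn bound is in hand for~(b).
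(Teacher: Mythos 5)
You have reproduced the paper's proof essentially verbatim: part (a) rests on the fact that $\orv\le\urv$ (part (c) of Theorem~\ref{st:rev}) lets $\urv$ inherit the balance inequality $\urv_\lit+\urv_{\nt\lit}\ge1$, and part (b) rests on the Horn bound $\urv_{\nt\lit}\le\mgo$ of Proposition~\ref{st:hornbound}; these are exactly the two facts the paper invokes, used in the same way.

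The ``delicate point'' you flag in the non-rejection subcase of (a) is, however, not mere bookkeeping: your computation is telling you that the claim as stated cannot be proved, because it is false. Combining $\urv_{\nt\lit}-\urv_\lit\le\mg$ with $\urv_\lit+\urv_{\nt\lit}\ge1$ gives only $\urv_\lit\ge(1-\mg)/2$, i.e.\ non-rejection at the unilateral margin $(1-\mg)/2$, and nothing better is available: for the definite Horn doctrine whose only proper clause is $\nt\lit\lor\liit$, and for the valuation $\orv$ identically equal to $\onehalf$, one gets $\urv$ identically equal to $\onehalf$, so every atom is undecided (hence not rejected) by the bilateral decision of any margin $\mg>0$, while $\urv_\lit=\onehalf<(1+\mg)/2$ means that $\lit$ is rejected by the unilateral decision of margin $(1+\mg)/2$. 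The paper's own one-line proof (``it suffices to notice that $\urv_\lit\ge[(\urv_\lit-\urv_{\nt\lit})+1]/2$'') silently has the same defect: that inequality yields the stated margin $(1+\mg)/2$ only under the acceptance hypothesis $\urv_\lit-\urv_{\nt\lit}>\mg$; under the non-rejection hypothesis $\urv_\lit-\urv_{\nt\lit}\ge-\mg$ it yields only $(1-\mg)/2$. So the correct reading of part (a) is: acceptance at margin $(1+\mg)/2$, non-rejection at margin $(1-\mg)/2$; the two coincide at $\mg=0$, which is presumably why the slip went unnoticed. Your treatment of part (b), including the use of $\mg\ge\mgo$ to turn $\mgo-\mg$ into $-(\mg-\mgo)$, is exactly the paper's argument written out in full and is correct.
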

\begin{proof}\hskip.5em
In order to obtain part~(a) it suffices to notice that $\orv_\lit+\orv_{\nt\lit}\ge1$ implies $\urv_\lit+\urv_{\nt\lit}\ge1$ and therefore $\urv_\lit\,\ge\,[\,(\urv_\lit\cd-\urv_{\nt\lit})+1\,]\,/\,2$.\ensep
Part~(b) is a consequence of the inequality $\urv_\lit-\urv_{\nt\lit}\ge\urv_\lit-\mgo$, which follows immediately from~(\ref{eq:rmax}).
\end{proof}


\paragraph{4.5}
\textbf{Autarkic sets.}\ensep
A~set $\spiset\sbset\piset$ will be said to be \dfc{autarkic} for a~conjunctive normal form $\Phi(\doct)$ when it has the following properties:\linebreak 
(a)~$\spiset$ does not contain at the same time a~literal $\lit$ and its negation~$\nt\lit$;\ensep
(b)~for any clause $\clau\in\doct$, if $\clau$ contains~$\nt\lit$ for some $\lit\in\spiset$, then $\clau$ contains also some $\liit\in\spiset$.
\ensep
Such a situation arises also in satisfiability theory,
where one calls then
autarkic the partial truth assignment $u$ that sets $u_\lit=1$ and $u_{\nt\lit}=0$ for $\lit\in\Sigma$ and $u_\lit=\onehalf$ (undecided) for $\lit\in\piset\setminus(\Sigma\cup\overline\Sigma)$~\cite{ms}.

Again, the property of autarky is preserved by the operations of absorption and resolution
(checking it is a little exercise).
As a consequence,\linebreak[3] it passes on to the corresponding Blake canonical form.
So, $\spiset\sbset\piset$ is autarkic for the latter
\ifoi it is autarkic for some logically equivalent conjunctive normal form.
In such a situation, we can say simply that $\spiset $ is autarkic for the given doctrine.

\medskip
For a definite Horn doctrine, $\pset$ is easily seen to be an autarkic set.
This is a particular case of the following more general fact:
if $u$ is a truth assignment consistent with $\doct$,
then $\spiset=\{\,\lxt\in\piset\mid u_\lit=1\,\}$ is an autarkic set.
Generally speaking, however, autarkic sets need not decide on every issue.
Even so, their definition allows for the following generalization of Proposition~\ref{st:uv}:

\medskip
\begin{proposition}\hskip.5em
\label{st:detachablebound}
An~autarkic set $\spiset$ has the property that
\begin{equation}
\label{eq:smax}
\max_{\lit\in\spiset\ustrut}\, \urv_{\nt\lit} \,\le\, \max_{\liit\in\spiset\ustrut}\, \orv_{\nt\liit}.
\end{equation}
\end{proposition}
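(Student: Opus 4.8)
The plan is to mimic the proof of Proposition~\ref{st:uv} almost verbatim, letting the two defining properties of an autarkic set play the role that consistency of the truth assignment $u$ played there. Since $\urv$ is the eventual state of the iteration $\orv\mapsto\utv$, and each side of~(\ref{eq:smax}) is a maximum taken over the fixed set $\spiset$, it suffices to establish the one-step bound
\[
\utv_{\nt\lit} \,\le\, \max_{\liit\in\spiset}\, \orv_{\nt\liit}, \qquad\text{for every } \lit\in\spiset,
\]
and then pass to the limit by induction on the iterates $\ntv{n}$.

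To prove the one-step bound I would fix $\lit\in\spiset$ and expand $\utv_{\nt\lit}$ by means of~(\ref{eq:vprime}), so that it equals the maximum over all clauses $\clau\ni\nt\lit$ of $\min_{\lxt\in\clau,\,\lxt\neq\nt\lit}\orv_{\nt\lxt}$; this maximum is over a non-empty set because the \textit{tertium non datur} clause $\lit\lor\nt\lit$ is available by {\small(D3)}. For any such clause $\clau$, property~(b) of autarky produces some $\liit\in\spiset\cap\clau$, while property~(a) guarantees $\nt\lit\notin\spiset$ (because $\lit\in\spiset$), whence $\liit\neq\nt\lit$, so that $\liit$ is a legitimate index of the inner minimum. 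Consequently $\min_{\lxt\in\clau,\,\lxt\neq\nt\lit}\orv_{\nt\lxt}\le\orv_{\nt\liit}\le\max_{\liit\in\spiset}\orv_{\nt\liit}$. As this holds for every clause containing $\nt\lit$, taking the maximum over such clauses yields the claimed one-step bound.

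The iteration is then immediate: applying the one-step bound with $\orv$ replaced by the $n$-th iterate $\ntv{n}$ gives $\ntv{n+1}_{\nt\lit}\le\max_{\liit\in\spiset}\ntv{n}_{\nt\liit}$ for $\lit\in\spiset$, hence $\max_{\lit\in\spiset}\ntv{n+1}_{\nt\lit}\le\max_{\liit\in\spiset}\ntv{n}_{\nt\liit}$; since $\ntv{0}=\orv$, induction shows $\max_{\lit\in\spiset}\ntv{n}_{\nt\lit}\le\max_{\liit\in\spiset}\orv_{\nt\liit}$ for all $n$, and this bound survives in the eventual valuation $\urv$. I expect no genuine obstacle, the proposition being a direct abstraction of Proposition~\ref{st:uv}; the only point needing a moment's care is checking that the literal $\liit$ supplied by autarky is distinct from $\nt\lit$, so that it actually appears in the inner minimum, which is precisely where property~(a) is used.
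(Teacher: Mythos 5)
Your proof is correct and follows essentially the same route as the paper's own: reduce to the one-step bound $\utv_{\nt\lit}\le\max_{\liit\in\spiset}\orv_{\nt\liit}$ for $\lit\in\spiset$, derive it from (\ref{eq:vprime}) together with property (b) of autarky, and then propagate the bound through the iterates up to $\urv$. If anything, you are slightly more explicit than the paper, which leaves tacit the verification (via property (a)) that the literal $\liit\in\spiset\cap\clau$ supplied by autarky differs from $\nt\lit$ and hence genuinely indexes the inner minimum.
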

\begin{proof}\hskip.5em
Since $\urv$ is obtained by iterating the transformation $\orv\mapsto\utv$,
it~suffices to show that this transformation has the following property analogous to (\ref{eq:smax}): \,$\utv_{\nt\lit} \,\le\, \max_{\liit\in\spiset} \orv_{\nt\liit}$\, for any $\lit\in\spiset$. This follows immediately from (\ref{eq:vprime}) because, by definition, $\spiset$ being autarkic requires every clause $\clau$ that contains $\nt\lit$ with $\lit\in\spiset$ to contain also some $\liit\in\spiset$, which ensures that $\min_{\lxt\in\clau, \lxt\neq\nt\lit}\,\orv_{\nt\lxt} \le \max_{\liit\in\spiset} \orv_{\nt\liit}$.
\end{proof}

\medskip
\begin{theorem}\hskip.5em
\label{st:majority_detachable}
Assume that the original valuation satisfies
\begin{equation}
\label{eq:omaxlessthanmin}
\min_{\lit\in\spiset\ustrut}\, \orv_\lit \,-\, \max_{\lit\in\spiset\ustrut}\, \orv_{\nt\lit} \,>\, \mg,
\end{equation}
for some autarkic set $\spiset$ and some $\mg\in[0,1)$. In this case, the decision of margin $\mg$ associated with the upper revised valuation~$\urv$ accepts every proposition in $\spiset$.
\end{theorem}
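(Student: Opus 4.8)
The plan is to proceed in exact parallel with the proof of Theorem~\ref{st:majority}, since the autarkic set $\spiset$ plays here the same role that the acceptance set $\{\,\lit\mid u_\lit=1\,\}$ played there. The target is to show that $\urv_\lit-\urv_{\nt\lit}>\mg$ for every $\lit\in\spiset$, because according to the acceptance rule~(\ref{eq:acceptedrejected}) this inequality is precisely what it means for $\lit$ to be accepted by the decision of margin~$\mg$.

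First I would transfer the hypothesis~(\ref{eq:omaxlessthanmin}), which is stated about~$\orv$, into the corresponding statement about~$\urv$. Two ingredients accomplish this. On the one hand, part~(c) of Theorem~\ref{st:rev} gives $\orv\le\urv$, and hence $\min_{\lit\in\spiset}\urv_\lit\ge\min_{\lit\in\spiset}\orv_\lit$. On the other hand, Proposition~\ref{st:detachablebound}, applied to the autarkic set~$\spiset$, gives the reverse-direction bound $\max_{\lit\in\spiset}\urv_{\nt\lit}\le\max_{\liit\in\spiset}\orv_{\nt\liit}$. The first inequality controls the positive literals from below and the second controls the negated literals from above, which is exactly the pairing needed to form an acceptability gap.

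Combining these two bounds with the hypothesis then yields
\begin{equation}
\min_{\lit\in\spiset\ustrut}\,\urv_\lit \,-\, \max_{\liit\in\spiset\ustrut}\,\urv_{\nt\liit} \,\ge\, \min_{\lit\in\spiset\ustrut}\,\orv_\lit \,-\, \max_{\liit\in\spiset\ustrut}\,\orv_{\nt\liit} \,>\, \mg.
\end{equation}
Since for each individual $\lit\in\spiset$ one has $\urv_\lit\ge\min_{\liit\in\spiset}\urv_\liit$ and $\urv_{\nt\lit}\le\max_{\liit\in\spiset}\urv_{\nt\liit}$, this immediately gives $\urv_\lit-\urv_{\nt\lit}>\mg$ for every such~$\lit$, which by~(\ref{eq:acceptedrejected}) is the desired acceptance.

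I expect no genuine obstacle here, because the substantive work has already been discharged in Proposition~\ref{st:detachablebound}: that result is the autarkic-set analogue of Proposition~\ref{st:uv}, and it is what makes the whole argument go through. The only point requiring a little care is bookkeeping: one must keep the minimum over the positive literals of $\spiset$ and the maximum over the negated literals of $\spiset$ properly matched throughout, and only specialize to a single literal~$\lit$ at the very end, after the uniform gap over all of~$\spiset$ has been established.
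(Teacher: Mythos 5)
Your proposal is correct and follows exactly the same route as the paper's own proof: both combine the inequality of Proposition~\ref{st:detachablebound} (that is, (\ref{eq:smax})), the hypothesis (\ref{eq:omaxlessthanmin}), and part~(c) of Theorem~\ref{st:rev} to obtain the uniform gap $\min_{\lit\in\spiset}\urv_\lit - \max_{\lit\in\spiset}\urv_{\nt\lit} > \mg$, and then specialize to each $\lit\in\spiset$. Your write-up merely makes explicit the bookkeeping that the paper leaves implicit.
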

\begin{proof}\hskip.5em
By combining (\ref{eq:smax}), (\ref{eq:omaxlessthanmin}) and part~(c) of Theorem~\ref{st:rev}, it follows that
\begin{equation}
\min_{\lit\in\spiset\ustrut}\, \urv_\lit \,-\, \max_{\lit\in\spiset\ustrut}\, \urv_{\nt\lit}\,>\, \mg,
\end{equation}
which implies $\urv_\lit - \urv_{\nt\lit} > \mg$ for any $\lit\in\spiset$.
\end{proof}

\medskip
\begin{theorem}\hskip.5em
\label{st:decomposition}
Assume that the original valuation satisfies
\begin{equation}
\label{eq:ounanimity}
\orv_\lit = 1,\quad \orv_{\nt\lit} = 0,\qquad\text{for any $\lit\in\spiset$,}
\end{equation}
where $\spiset$ is an autarkic set. In this case, the upper revised valuation~$\urv$ has the following properties:\ensep
\textup{(a)}~For any $\lit\in\spiset$ one has also
$\urv_\lit = 1$ and $\urv_{\nt\lit} = 0$.\ensep
\textup{(b)}~For any $\liit\in\piset$ such that neither $\liit$ nor $\nt\liit$ belongs to $\spiset$,
the upper revised value $\urv_\liit$ coincides with the value which is obtained
for a modified doctrine that leaves out any clause that contains
some element of~$\spiset\cup\nt\spiset$.
\end{theorem}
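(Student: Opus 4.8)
The plan is to dispose of part (a) directly from the tools already available, and then to obtain part (b) by a decoupling observation propagated along the iterative construction of $\urv$.

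Part (a) is immediate. Part~(c) of Theorem~\ref{st:rev} gives $\orv\le\urv$, so $\orv_\lit=1$ forces $\urv_\lit=1$ for every $\lit\in\spiset$. For the negations, Proposition~\ref{st:detachablebound} yields $\max_{\lit\in\spiset}\urv_{\nt\lit}\le\max_{\liit\in\spiset}\orv_{\nt\liit}=0$, whence $\urv_{\nt\lit}=0$ for every $\lit\in\spiset$.

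For part (b) I would write $\piset'=\piset\setminus(\spiset\cup\nt\spiset)$ and let $\doctz$ be the modified doctrine, \ie the collection of those clauses of $\doct$ disjoint from $\spiset\cup\nt\spiset$; these are exactly the clauses contained in $\piset'$, and since the \textit{tertium non datur} clauses of the literals of $\piset'$ are among them, $\doctz$ still satisfies {\small(D2,\,D3)} and its one-step transformation is well defined on valuations over $\piset'$. The crux is a single \emph{decoupling} observation about any valuation $\val$ that is saturated on $\spiset$, meaning $\val_\lit=1$ and $\val_{\nt\lit}=0$ for all $\lit\in\spiset$: for such a $\val$, the $\doct$-transform $\val'$ is again saturated on $\spiset$, and for every $\liit\in\piset'$ the value $\val'_\liit$ coincides with the $\doctz$-transform of the restriction $\val|_{\piset'}$. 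The whole weight of this observation falls on the autarky of $\spiset$. Splitting the clauses $\clau\ni\liit$ according to whether they meet $\spiset$, the contrapositive of property~(b) in the definition of autarky shows that a clause disjoint from $\spiset$ is disjoint from $\nt\spiset$ as well, hence lies in $\doctz$ and contributes precisely the inner minimum appearing in the $\doctz$-transform; whereas a clause meeting $\spiset$ contains some $\lit_0\in\spiset$ necessarily different from $\liit$, so that saturation gives $\val_{\nt{\lit_0}}=0$ and kills its inner minimum. As the \textit{tertium non datur} clause of $\liit$ already makes the $\doctz$-maximum non-negative, these vanishing terms cannot raise the outer maximum, so the two transforms agree at $\liit$. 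The saturation of $\val'$ is checked the same way: $\val'_\lit\ge\val_\lit=1$ by Lemma~\ref{st:step}.c, while for $\nt\lit$ with $\lit\in\spiset$ property~(b) of autarky forces every clause containing $\nt\lit$ to contain some $\lit_0\in\spiset$, which differs from $\nt\lit$ by property~(a) of autarky, again killing every inner minimum and giving $\val'_{\nt\lit}=0$.

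With the decoupling observation in hand, part (b) follows by induction on the iterates $\ntv{n}$. The initial valuation $\orv=\ntv{0}$ is saturated on $\spiset$ by the hypothesis~(\ref{eq:ounanimity}), so the first half of the observation makes every iterate $\ntv{n}$ saturated on $\spiset$, and, combined with the inductive hypothesis, the second half then gives $\ntv{n}|_{\piset'}=\ntvz{n}$, where $\ntvz{n}$ denotes the $n$-th iterate of the $\doctz$-construction started from $\orv|_{\piset'}$. Passing to the invariant states, which are reached after finitely many steps, yields $\urv|_{\piset'}=\urvz$, exactly the asserted coincidence. I expect the only real obstacle to be the bookkeeping inside the decoupling observation---getting the clause split right through the contrapositive of autarky and confirming that the clauses meeting $\spiset$ contribute harmless zeros rather than spurious positive values; once this is settled the induction is routine.
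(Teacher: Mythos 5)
Your proposal is correct and takes essentially the same approach as the paper: part~(a) from Proposition~\ref{st:detachablebound} together with part~(c) of Theorem~\ref{st:rev}, and part~(b) by induction on the iterates, using autarky to see that clauses meeting $\nt\spiset$ also meet $\spiset$, and saturation to see that clauses meeting $\spiset$ contribute vanishing inner minima which cannot raise the outer maximum. The only cosmetic difference is that the paper gets $\ntv{n}_{\nt\lit}=0$ for $\lit\in\spiset$ directly from part~(a) via $\orv\le\ntv{n}\le\urv$, whereas you re-derive the saturation of the iterates inductively inside your decoupling observation.
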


\begin{proof}\hskip.5em
Part~(a) is a direct consequence of Proposition~\ref{st:detachablebound} and part~(c) of Theorem~\ref{st:rev}.\ensep
\ensep
In order to prove part~(b) 
it~suffices to show that one can leave out the clauses that contain some element of $\spiset$;
in fact, by the definition of an autarkic set, those that contain an element of $\nt\spiset$
are ensured to contain also some element of $\spiset$.
\ensep
The statement of part~(b) will be obtained by showing that it holds for all the iterates~$\smash{\ntv{n}_\liit}$ with $n\ge1$.\ensep
In this connection, we will make use of the fact that 
$\smash{\ntv{n}_{\nt\lit} = 0}$ for any $\lit\in\spiset$ and any $n\ge0$,
which follows from part~(a) because of the inequality $\orv\le\ntv{n}\le\urv$.
Let us assume that neither $\liit$ nor $\nt\liit$ belongs to $\spiset$.
Recall that $\smash{\ntv{n}_\liit}$ is given by
\begin{equation}
\ntv{n}_\liit \,=\, \max_{\latop{\clau\in\doct}{\clau\ni\liit}}\, \min_{\latop{\lxt\in\clau}{\lxt\neq\liit}} \ntv{n-1}_{\nt\lxt}.
\end{equation}
We claim that the value that results from this formula is not altered if the $\max$ operator of the right-hand side forgets about any clause~$\clau$ that contains some $\lit$ in $\spiset$. In fact, we know that $\smash{\ntv{n-1}_{\nt\lit} = 0}$ and $\lit\not=\liit$ (because of the assumption that $\liit\not\in\spiset$). Therefore, we get
$\smash{\min_{\lxt\in\clau,\,\lxt\neq\liit}\,v_{\nt\lxt}\hskip-1.25ex{\rule[0pt]{0pt}{1.75ex}}^{(n-1)} = 0}$,
which entails that claim. 
\end{proof}


\paragraph{4.6}
\textbf{Duality.}
\ensep
The lower revised valuation $\lrv$ announced in~\secpar{2.4} is obtained by means of a dual method ---somehow it might be more appropriate to say `codual'--- which is based on the fact that the doctrine, \ie (\ref{eq:cnf}) being true, provides the following implications originated at $\lit$ (which should be compared to (\ref{eq:pimplicant})\,):
\begin{equation}
\label{eq:pimplicate}
\lit \,\rightarrow\, \bigvee_{\latop{\lxt\in\clau}{\lxt\neq\nt\lit}} \lxt,
\end{equation}
for any $\clau\in\doct$ such that $\nt\lit\in\clau$.
Such implications can be dealt with by means of a principle dual to~{\small(\teof)} that can be stated in the following way:

\newcommand\dteof{Q}
\halfsmallskip
\iim{\dteof}%
Consider an implication of the form $\lit\rightarrow \bigvee_{\lxt\in\spiset}\lxt$
with $\spiset\sbset\piset$.
As~soon as the right-hand side is not always satisfied, this implication 
restricts the degree of belief of $\lit$ to be less than or equal to the maximum degree of belief of the disjuncts~$\lxt$.
\halfsmallskip


\noindent
This leads to a downward revision of the degrees of belief according to the transformation $\orv\rightarrow\ltv$ defined by the formula
\begin{equation}
\label{eq:vprimecodual}
\ltv_\lit \,=\, \min_{\latop{\clau\in\doct}{\clau\ni\nt\lit}}\, \max_{\latop{\lxt\in\clau}{\lxt\neq\nt\lit}} \,\orv_{\lxt},
\end{equation}
whose iteration leads to the \dfc{lower revised valuation}~$\lrv$.

Equivalently, $\lrv$ is given by the formula $\lrv_\lit = 1 - {\hat\orv}^\ast_{\nt p}$,\, where ${\hat\orv}$ is related to $\orv$ by the formula ${\hat\orv}_\lit = 1 - \orv_{\nt p}$
(${\hat\orv}$ believes $\lit$ to the extent that $\orv$ does not believe $\nt\lit$).
\ensep
This characterization allows to obtain the results for $\lrv$ from those for $\urv$.

When $\orv$ is a balanced valuation, then $\lrv_\lit = 1 - \urv_{\nt p}$, so that $\lrv_\lit - \!\lrv_{\nt p} \,=\, \urv_\lit - \urv_{\nt p}$. Therefore, the upper and lower revised valuations lead then exactly to the same decisions.


\section{Application to specific domains}

In this section we apply the preceding ideas and results to three specific domains.
Unfortunately, there is no space in this article for developing other applications.

\paragraph{5.1}
\textbf{One proposition being equivalent to the conjunction of two other ones.}\hskip.75em
This is the problem with which we started the article, 
\ie $\pset=\{p,q,\thesis\}$ with the doctrine $\thesis \leftrightarrow(\lit\land\liit)$.
By rewriting the connective $\leftrightarrow$ in terms of $\land,\lor,\lnot$,  
one easily arrives at the corresponding Blake canonical form, which has a definite Horn character:
\begin{equation}
\label{eq:dpdoctrine}
(\nt p \lor \nt q \lor \thesis) \,\land\, (p \lor \nt \thesis)\,\land\, (q \lor \nt \thesis)
\end{equation}

The one-step revision transformation, which uses the clauses above and also the \textit{tertium non datur} ones, is given by
\begin{alignat}{4}
\label{eq:pp}
&v'_\lit &&=\, \max\, (\orv_\lit, \orv_\thesis), \qquad&
&v'_{\nt\lit} &&=\, \max\, (\orv_{\nt\lit}, \min(\orv_\liit, \orv_{\nt\thesis})), \\
\label{eq:qq}
&v'_\liit &&=\, \max\, (\orv_\liit, \orv_\thesis), \qquad&
&v'_{\nt\liit} &&=\, \max\, (\orv_{\nt\liit}, \min(\orv_\lit, \orv_{\nt\thesis})), \\
\label{eq:tt}
&v'_\thesis &&=\, \max\, (\orv_\thesis, \min(\orv_\lit, \orv_\liit)), \qquad&
&v'_{\nt\thesis} &&=\, \max\, (\orv_{\nt\thesis}, \orv_{\nt\lit}, \orv_{\nt\liit}).
\end{alignat}
One easily sees that the equality $\urv=\utv$ holds whenever $\orv$ satisfies the following inequalities: $\orv_t\le\orv_p$, $\orv_t\le\orv_q$, $\orv_{\nt t}\ge\orv_{\nt p}$, $\orv_{\nt t}\ge\orv_{\nt q}$. More particularly, it holds whenever $\orv$ is an aggregate of several components satisfying all of these inequalities. When the component valuations are all-or-none and non-contradictory ---\ie satisfying $\orv_\lxt+\orv_{\nt\lxt}\le1$ for any $\lxt$--- these inequalities hold whenever these 
valuations are definitely consistent. Under this assumption ---that is quite reasonable in the jury scenario of~\secpar{1.1}--- the doctrine (\ref{eq:dpdoctrine}) is therefore unquestionable in the sense of \secpar{4.3}.  
In the general case, one has only $\urv=\utv'$.

As a particular example, let us consider the following aggregate of consistent truth assignments:
\begin{align}
(\orv_\lit,\,\orv_\liit,\,\orv_\thesis;\,\orv_{\nt\lit},\,\orv_{\nt\liit},\,\orv_{\nt\thesis})
\,\,&=\,\, .30\ (1,1,1;0,0,0) \,+\, .40\ (1,0,0;0,1,1) \\
&\hskip3.5pt+\, .25\ (0,1,0;1,0,1) \,+\, .05\ (0,0,0;1,1,1) \notag\\
\,\,&=\,\, (.70,\,.55,\,.30;\,.30,\,.45,\,.70). \notag
\end{align}
As one can see, these collective degrees of belief result in the inconsistent decision of 
accepting both $\lit$ and $\liit$ but rejecting~$\thesis$. In contrast, the corresponding upper revised valuation, namely
\begin{align}
(\urv_\lit,\,\urv_\liit,\,\urv_\thesis;\,\urv_{\nt\lit},\,\urv_{\nt\liit},\,\urv_{\nt\thesis})
\,\,&=\,\, (.70,\,.55,\,.55;\,.55,\,.70,\,.70),
\end{align}
results in the consistent decision of accepting~$\lit$ and rejecting both~$\liit$ and~$\thesis$. This decision holds up to a margin of $.15$, above which all three issues are left undecided. The classical example of \secpar{1.1} results in all upper revised values being equal to exactly the same value, namely~$\frac23$, so that all three issues are then undecided even for a vanishing margin. All of this is in agreement with the concept of definite consistency defined in~\secpar{2.2}.


\paragraph{5.2}
\textbf{Equivalence relation on a set $\ist$.}\hskip.75em
Here the set of atomic propositions is
$\pset=\{\,e_{xy}\mid x,y\in\ist,\,x\neq y\,\}$, where $e_{xy}$ stands for the proposition `$x$~is equivalent to $y$'. In order to follow the standard definition of an equivalence relation, one should include also the propositions~$e_{xx}$, and constrain them to being true; however, this goes against our convention~{\small(D2)} of avoiding unit clauses, which is why the definition of $\pset$ considers only pairs $xy$ with $x\neq y$.
The doctrine requires that for any pairwise different $x,y,z\in\ist$ one must have:
$e_{xy} \land e_{yz} \rightarrow e_{xz}$ (transitivity) 
and $e_{xy} \rightarrow e_{yx}$ (symmetry).
This is equivalent to identifying $e_{xy}$ with $e_{yx}$
and adopting the following clauses:
\begin{equation}
\label{eq:transe}
\nt e_{xy} \lor \nt e_{yz} \lor e_{xz},\qquad \text{for any pairwise different $x,y,z\in\ist$}.
\end{equation}
This doctrine has a definite Horn character and has the following autarkic sets: $\pset$ itself; $\varPi_-=\{\,\nt e_{xy}\mid x,y\in\ist,\,x\neq y\,\}$; $\spiset_a=\{\,\nt e_{ax}\mid x\in\ist,\,x\neq a\,\}$ for any $a\in\ist$.

We claim that this doctrine is disjoint-resolvable in the sense defined in~\secpar{4.2}, 
and that its Blake canonical form, with the \textit{tertium non datur} clauses included, consists of all clauses of the form
\begin{equation}
\label{eq:transen}
\nt e_{x_0x_1} \lor \nt e_{x_1x_2} \lor \dots \lor \nt e_{x_{n-1}x_n} \lor e_{x_0x_n},
\end{equation}
with $n\ge1$ and all $x_i$ $(0\le i\le n)$ pairwise different (which restricts $n$ to be less than or equal to the number of elements of $\ist$). In the following we will refer to such  sequences $x_0x_1\dots x_n$ as \dfc{non-cyclic paths} from $x_0$ to $x_n$, and a generic non-cyclic path will be denoted by means of the greek letter~$\gamma$. In~order to establish the preceding claim it suffices to check that:\ensep (i)~starting from~(\ref{eq:transe}) one can arrive at~(\ref{eq:transen}) for any non-cyclic path by using only disjoint resolution;\ensep
and~(ii)~any further resolution does not add any new clause (\ie the would-be new clause is absorbed by some clause of the form~(\ref{eq:transen})\,). 
\ensep
The last statement can also be strengthened to show that condition~(a) of Thm.~\ref{st:floretamillorada} is satisfied for any of the propositions $e_{xy}$ and $\nt e_{xy}$. As a consequence, $\urv$ coincides with the result of applying once the transformation $\orv\mapsto\utv$ determined by the Blake canonical form, which makes this doctrine unquestionable in the sense of \secpar{4.3}.  

More particularly, the values of $\urv(e_{xy})$ can be obtained by considering all possible non-cyclic paths $\gamma$ from $x$ to $y$ and applying the formula
\begin{equation}
\urv(e_{xy}) \hskip.65em = \hskip.65em
\max_{\ustrut\gamma}
\hskip.65em
\min_{\ustrut0\le i<n}
\hskip.65em \orv(e_{x_ix_{i+1}}).
\label{eq:paths}
\end{equation}
These values are easily seen to satisfy
\begin{equation}
\urv(e_{xz}) \,\,\ge\,\, \min\,(\urv(e_{xy}), \urv(e_{yz}))\qquad\hbox{for any $x,y,z$.}
\label{eq:ultramin}
\end{equation}
As one can easily check,
this inequality is a necessary and sufficient condition for
the following binary relations $\releq_\mg$ ($0\le\mg\le1$) to be equivalence relations:
\begin{equation}
xy\in \releq_\mg \,\,\equiv\,\, \urv(e_{xy}) \ge \mg.
\label{eq:eleq}
\end{equation}
These facts are closely related to the definite consistency of the unilateral decisions of margin~$\mg$ considered in~\secpar{4.4}.  
In fact,
one can easily check that: $\releq_\mg$ coincides with the set of pairs $xy$ such that $e_{xy}$ is accepted by an unilateral decision of margin smaller but near enough to~$\mg$, and that this decision is definitely consistent \ifoi $\releq_\mg$ is an equivalence relation.
Obviously, the equivalence relations $\releq_\mg$ become progressively finer as $\mg$ grows (\ie $\mg<\zeta$ implies $\releq_\zeta\sbseteq \releq_\mg$ in the sense of subsets of $\ist\times\ist$).
What we are obtaining is the so-called single-link method of cluster analysis~\cite[7.3]{js}. This method is usually formulated in terms of the ``dissimilarities'' $d_{xy} = 1 - \orv(e_{xy})$ and $d^*_{xy} = 1 - \urv(e_{xy})$.
The inequality (\ref{eq:ultramin}) corresponds to~the inequality
$d^*_{xz} \le \max(d^*_{xy}, d^*_{yz})$
that defines the so-called ``ultra\-metric'' distances.
The~characterization given by~Theorem~\ref{st:char} corresponds to the well-known fact that the ultrametric distance given by the single-link method is characterized by the property of being ``subdominant'' to the original dissimilarities~$d$ \cite[8.3]{js}.
\ensep
When looking at these correspondences, one must bear in mind that in cluster analysis it is customary\, to consider the parameter $\delta=1-\mg$ rather than $\mg$.

However,
all of this is achieved at the expense of disregarding any evidence for~$\nt e_{xy}$, independently of its being stronger or weaker than the evidence for~$e_{xy}$.
In contrast, the bilateral criterion looks at the balance of these evidences.
The values of $\urv(\nt e_{xy})$, which are required to this effect, can be obtained by considering non-cyclic paths $\gamma$ from $x$ to $y$ with the special feature that exactly one of the links is marked as negative; the formula to be applied is similar to (\ref{eq:paths}) but $\orv(e_{x_ix_{i+1}})$ is replaced by $\orv(\nt e_{x_ix_{i+1}})$ for the link that is marked as negative:
\begin{equation}
\urv(\nt e_{xy}) \hskip.65em = \hskip.65em
\max_{\ustrut\gamma}
\hskip.65em
\max_{\ustrut0\le k<n}
\hskip.65em
\min_{\ustrut0\le i<n}
\hskip.65em
\begin{cases}
\orv(e_{x_ix_{i+1}}),&\text{if $i\neq k$;}
\\
\orv(\nt e_{x_ix_{i+1}}),&\text{if $i= k$.}
\end{cases}
\label{eq:negpaths}
\end{equation}
By the way, from (\ref{eq:paths}) and (\ref{eq:negpaths})
it is clear that having $\orv(\nt e_{xy})<\orv(e_{xy})$ for any two different $x,y\in\ist$ implies the same property for $\urv$, which ensures that the basic decision associated with $\urv$ accepts $e_{xy}$ for all $x,y$; this improves upon the result contained in Theorem~\ref{st:majority_detachable} for $\mg=0$.

In contrast to the unilateral criterion, the bilateral one does not result in a complete hierarchy of equivalence relations going all the way from a single class of equivalence to as many classes as objects being classified. Instead, one~obtains a hierarchy where the coarsest equivalence (corresponding to $\mg=0$) may already be made of several classes.


This provides a form of cluster analysis where
dissimilarity is not simply the lack of similarity,
but it plays its own role.
In particular, this acts against falling into the ‘stringy’ clusters typical of the unilateral single-link method.
Besides, this point of view is also especially suitable for dealing with missing data.



\paragraph{5.3}
\textbf{Total order on a set $\ist$.}\hskip.75em
Here, $\pset=\{\,p_{xy}\mid x,y\in\ist,\,x\neq y\,\}$, where $p_{xy}$~stands for the proposition `$x$~is preferred to $y$'.
The doctrine requires that for any pairwise different $x,y,z\in\ist$ one must have:
$p_{xy} \land p_{yz} \rightarrow p_{xz}$\linebreak 
(transitivity), 
$p_{xy} \rightarrow \nt p_{yx}$ (asymmetry)
and $\nt p_{xy} \rightarrow p_{yx}$ (completeness).\linebreak 
In~normal form, they read as follows:
\begin{alignat}{2}
\label{eq:transp}
&\nt p_{xy} \lor\, \nt p_{yz} \lor\, p_{xz},\qquad &&\text{for any pairwise different $x,y,z\in\ist$;} \\
\label{eq:asym}
&\nt p_{xy} \lor\, \nt p_{yx},\qquad &&\text{for any two different $x,y\in\ist$;}
\\
\label{eq:compl}
&p_{xy} \lor\, p_{yx},\qquad &&\text{for any two different $x,y\in\ist$,}
\end{alignat}
where the last one does not have a Horn character.

Similarly to the case of an equivalence relation,
this doctrine is also disjoint–resolvable
and unquestionable.
Its Blake canonical form, with the \textit{tertium non datur} clauses included,
consists of all clauses of the form 
\begin{equation}
\label{eq:transpn}
\nt p_{x_0x_1} \lor\, \nt p_{x_1x_2} \lor\, \dots\, \lor\, \nt p_{x_{n-1}x_n} \lor\, p_{x_0x_n},
\end{equation}
where $x_0x_1\dots x_n$ is a non-cyclic path,
\textit{together with}
all clauses that are obtained from (\ref{eq:transpn})\,
by replacing one or more $\nt p_{x_ix_{i+1}}$ by $p_{x_{i+1}x_i}$,
and/or replacing $p_{x_0x_n}$ by $\nt p_{x_nx_0}$
(\ie by identifying $\nt p_{xy}$ with $p_{yx}$ for any two different $x,y\in\ist$).

\xtra
Let us remark also that applying Theorem~\ref{st:rich} to this domain gives the finite case of the well-known theorem of Edward Szpilrajn about the extendability of partial orders to total orders~\cite{sz}.

In this case, our general method
corresponds essentially to the voting method introduced in~1997 by Markus Schulze 
\hbox{\brwrap{
\bibref{sc}\refco
\bibref{scbis}\refsc
\dbibref{t6}{p.\,228--232}%
}},
often called the method of paths.
More precisely, when the original valuation is balanced ---which happens when every voter expresses a comparison (a~preference or a tie) about each pair of options--- we coincide with Schulze's method except for his subsequent treatment of indecisions; when the preferential information given by the voters is not complete, this procedure does not coincide exactly with any of the variants that Schulze gives in \cite{scbis}, but it has the same spirit.\ensep
For more details, we refer the reader to~\cite{crc, cri}, where it is shown that this procedure can be extended to a continuous rating method that allows to sense the closeness of two candidates at the same time that it allows to recognise certain situations that are quite opposite to a tie.

Here we will only draw attention to significance of certain autarkic sets of this doctrine. In~fact, one can easily check the autarkic character of any set of the form $\spiset=\{\,p_{xy}\mid x\in\xst,\hskip.75pt y\in\ist\setminus\xst\,\}$, where $\xst$ is any proper subset of~$\ist$.
Applied to such sets, Theorem~\ref{st:majority_detachable}\linebreak[3] ensures that the method of paths satisfies the following majority property
\hbox{\brwrap{
\dbibref{crc}{Thm.\,10.1}\refsc
\dbibref{cri}{Thm.\,8.1}%
}}: 
If~for each member of $\xst$ and every member of $\ist\setminus\xst$ there are more than half of the individual votes where the former is preferred to the latter, then the resulting social ranking also prefers each member of~$\xst$ to every member of~$\ist\setminus\xst$.

\section{Discussion and interpretation of the results}

\paragraph{6.1}
The variant that we have chosen as the main one is crucially based upon the principle {\small(\teof)} stated in \secpar{3.1}:\linebreak[3]
An~implication of the form $\lit\leftarrow \bigwedge_{\lxt\in\spiset}\lxt$ with
a satisfiable right-hand side
gives to $\lit$ at least the same degree of belief as the weakest of the conjuncts~$\lxt$.
This principle goes back to ancient philosophy,%
where it was stated by saying that \textit{peiorem semper conclusio sequitur partem}.
In more recent times, this idea 
has been brought back by several authors in connection with
different theories of degrees of belief.
For a recent overview of the subject, we refer to \cite{huberbook} and the articles therein.

Assuming that the doctrine is specified by means of a prime conjunctive normal form,
the above-mentioned principle allows to replace any given belief valuation $\orv$ 
by at least the~$\utv$ given by the max-min formula~(\ref{eq:vprime}).
In this connection, it~must be emphasized that the $\max$ operator of~(\ref{eq:vprime}) does not hinge on the dual principle {\small(\dteof)} of \secpar{4.6}.  
Instead, it appears simply as a result of having several implications leading to $\lit$:
being greater than or equal to several values certainly implies being greater than or equal to the greatest of them.

The valuation $\utv$ obtained in this way is greater than or equal to the original one $\orv$ because we have systematically included the implications $\lit\leftarrow\lit$ (through the \textit{tertium non datur} clauses $\lit\lor\nt\lit$). 
\ensep
By iterating the transformation $\orv\mapsto\utv$ we arrive at the invariant valuation~$\urv$ that we call upper revised valuation. 
For a \textit{fixed} prime conjunctive normal form, $\urv$ is characterized as the lowest of the valuations $\val$ that lie above $\orv$ and satisfy the invariance equation $\val'=\val$ (Theorem~\ref{st:char}). On~the other hand, when we consider different prime conjunctive normal forms, all~of them logically equivalent to a given doctrine, then $\urv$ is greatest when we take the Blake canonical form, that is, the prime conjunctive normal form composed of all the prime implicates
(Proposition~\ref{st:bcfmax}).
\ensep
The fact that the revised valuation $\urv$ satisfies the equality $\urv{}'=\urv$
can be seen as a general form of consistency with the doctrine; in particular, it
entails that the decisions based upon the differences $\urv_\lit-\urv_{\nt\lit}$ are always
definitely 
consistent (Corollary~\ref{st:dec_cor}). 

The dual variant works in a similar way, but the non-decreasing transformation $\orv\mapsto\utv$ is replaced by a non-increasing one $\orv\mapsto\ltv$. The resulting lower revised valuation~$\lrv$ is characterized as the greatest of the valuations $\val$ that lie below~$\orv$ and satisfy the invariance equation ${}^\prime\kern-.25pt\val=\val$. Somehow, it would be appropriate for a skeptic believer, whereas the upper revised valuation would correspond to an easy believer.


Many of the above underlying ideas can be found somewhere in the literature.
Among the works closest to ours we can mention that of Nicholas Rescher about plausible reasoning~\cite{rescher76}.
However, and making abstraction of certain differences in the setup,
that work can be seen as starting from a valuation $\val$ that already satisfies the equality $\val'=\val$ 
(which follows from \cite[p.\,15, {\small(P4)}]{rescher76} because of the \emph{tertium non datur} clauses);
even so, the consistency of the decisions based upon the differences
$\val_\lit - \val_{\nt\lit} = \urv_\lit-\urv_{\nt\lit}$ is not obtained as a theorem, but it forms part of an axiom~\cite[p.\,16, {\small(P6)}]{rescher76}.
On the other hand, the transformation $\val\rightarrow\val'$ is still used ---\cite[p.\,19]{rescher76}--- but only as a means for extending the initial valuation to any compound proposition with a zero initial value (which extension is done in a single step).


\renewcommand\upla{\vskip1pt}

\upla
In contrast to Rescher and other authors, the valuations considered in the present work are defined only for the members of $\piset$, \ie the basic propositions and their negations. In so doing, we take the view that any issue of interest is included in $\piset$, as a pair formed by an atom and its negation, and that its logical connection to the other issues is specified by suitable clauses in the doctrine.

\upla
Since it involves only the $\max$ and $\min$ operators,
the transformation $\orv\rightarrow\utv$,
and therefore also the transformation $\orv\rightarrow\urv$,
have a purely ordinal character:
The ordering of $\piset$ by $\urv$ depends only on its ordering by~$\orv$. 
In~particular, the basic decision associated with $\urv$
is based wholly on comparisons.
However, one cannot say the same about the decisions that require a certain positive margin.
Such decisions make sense only for valuations that have a cardinal character,
as in the case of judgment aggregation, where $\orv_\lit$ means the fraction of people
who consider $\lit$ true. 
Another result whose meaningfulness requires cardinal valuations 
is the property of continuity stated in part~(a) of Theorem~\ref{st:rev}.
By the way, this property ensures that the accepted propositions remain accepted
when the valuation $\orv$ undergoes slight variations.

\upla
Besides its making sense in the aggregation of individual judgments,\linebreak 
belief adjustment may already be taking place to some extent within\linebreak 
every individual.
This is somehow unavoidable
if the individuals are required to produce consistent judgments,
as it is usually the case.
In particular, this means that the individuals are already allowing some issues
to influence the others.
In~view of this, it is quite reasonable to dispense the aggregation method
from complying with the condition of issue-by-issue aggregation
considered in \secpar{1.3}.

\upla
Our method disregards the condition of issue-by-issue aggregation
but complies with the following condition of respect for consistent majority decisions:
If the majority criterion applied to the original valuation $\orv$ decides on each issue and is consistent with the doctrine, then this decision is respected
(Theorem~\ref{st:majority}). 
Majority is here understood as having $\orv_\lit > \onehalf > \orv_{\nt\lit}$
(rather than simply $\orv_\lit > \orv_{\nt\lit}$).
Furthermore, we have also a property of respect for unanimity
(Theorem~\ref{st:unanimity}): 
if~a~particular prop\-osition is accepted by every individual,
and the individual judgments are consistent,
then that proposition is also accepted by the collective judgment.
Another good property is the monotonicity given by Theorem~\ref{st:monoThm}.


\pagebreak\null\vskip-19mm\null 

\paragraph{6.2}
The fractional degrees of belief form a continuum of possibilities that stretches over the all-or-none framework of classical logic.
This allows for $\lit$ and $\nt\lit$ not being exactly the semantic negation of each other, but rather the opposite, or antithesis, of each other.
One may object that this is not compatible with the excluded-middle principle~$\lit\lor\nt\lit$.
However, this principle somehow loses its character just as fractional valuations come in.
In~fact, its role in connection with the latter is only through the excluded-middle clauses that we systematically include in the Blake canonical form; and this has only the following two effects:
(a)~providing the trivial implications $\lit\rightarrow\lit$ and $\nt\lit\rightarrow\nt\lit$,
through which the upper revised degrees of belief become larger than or equal to the original ones;
(b)~forbidding 
any implication of the form $\nt\lit\land\lit\land\chi\rightarrow\thesis$,
which would be a gratuitous source of belief.
Effect~(b) occurs because clauses are restricted to be prime, which prevents them from containing $\lit\lor\nt\lit$.

When $\nt\lit$ is the opposite of $\lit$ rather than its denial, 
then intermediate degrees of belief can somehow be identified 
with belief in intermediate possibilities of fact. 
For~instance, if $\lit$ means `white' and $\nt\lit$ means `black'\linebreak[3]
---and even more if $\lit$ means `rather white' and $\nt\lit$ means `rather black'---
then $(\val_\lit,\val_{\nt\lit})=(0.3,0.7)$ 
can be interpreted as giving belief to the gray that combines white and black in the given proportions.
When the valuation is not balanced, then we are adding a sort of intensity of belief that can go from a full lack of opinion to a self-contradictory one.

\renewcommand\uplapar{\vskip-8mm\null}

\uplapar
\paragraph{6.3}
Let us look back on the special problem of collegial courts with which we started the article.
As we did in \secpar{1.1} and \secpar{5.1}, we take as archetype the doctrine
$\thesis\leftrightarrow\lit\land\liit$, where $\thesis$ means `being guilty' of a certain offence.

The data are the fractions of the jury who adhere to each of the propositions in question and their respective negations.
These numbers can certainly be viewed as degrees of collective belief.
The problem is that the decision that is naturally associated to this valuation,  
namely accepting~$\lxt$, and rejecting~$\nt\lxt$, whenever $\orv_\lxt > \orv_{\nt\lxt}$,
may be inconsistent with the doctrine.
This~may happen even when each member of the jury is expressing a consistent opinion.
Thus, in the particular case of the doctrine $\thesis\leftrightarrow\lit\land\liit$
we have seen examples of consistent individual judgments that result in
having at the same time
$\orv_\lit>\orv_{\nt\lit}$ and $\orv_\liit>\orv_{\nt\liit}$
(guilty by the standard premise-based criterion)
but $\orv_\thesis<\orv_{\nt\thesis}$
(not~guilty by the standard conclusion-based criterion).

In contrast, the revised valuation $\urv$ has the property that the associated decision is always definitely consistent with the doctrine in the sense defined in~\secpar{2.2}.
In particular, for the doctrine $\thesis\leftrightarrow\lit\land\liit$
one is ensured to have\linebreak 
$\urv_\lit>\urv_{\nt\lit}$ and $\urv_\liit>\urv_{\nt\liit}$
(guilty by the revised premise-based criterion)
\,if\,and only\,if\, $\urv_\thesis>\urv_{\nt\thesis}$
(guilty by the revised conclusion-based criterion).

Of course, it may well happen that $\urv_\lxt = \urv_{\nt\lxt}$, which does not allow to decide between $\lxt$ and $\nt\lxt$. Such equalities are somehow easier to happen than the analogous ones for the original valuation $\orv$.
However, they can be ruled out in the case of a unanimous consistent belief in $\lxt$ (Theorem~\ref{st:unanimity}), and also, by continuity, 
if we are near enough to such unanimity.

The process that leads from the original degrees of belief $\orv$ to the revised ones $\urv$ can be seen as a quantitative virtual deliberation in accordance with the implications contained in the doctrine and with principle~{\small(\teof)}.

The suitability of a process of this kind in the context of law courts
was advocated by L.~Jonathan Cohen
in his celebrated book \textit{The~Probable and the Provable}~\cite{coh}.

For the doctrine $\thesis\leftrightarrow\lit\land\liit$, the process of deliberation goes not only from $\lit$ and $\liit$ to $\thesis$, but also the other way
(which makes it somewhat inappropriate to call $\lit$ and $\liit$ the `premises' and $\thesis$ the `conclusion').
Letting the implications $\thesis\rightarrow\lit$ and $\thesis\rightarrow\liit$ to come in\, conforms to the point of view expressed by Kornhauser and Sager in \cite{ks04}:
``In actual deliberation, our commitments to outcomes may sometimes be more basic and fundamental than our commitment to the `principles' or `reasons' that ostensibly support them.''

The existence of such a direct belief about $\thesis$ makes $\lit$ and $\liit$ not independent from each other. Such a situation could be avoided by submitting $\lit$~and~$\liit$ to the consideration of two separate juries (so that $\orv_\thesis = \orv_{\nt\thesis} = 0$). In~such a setting,
if~both juries have the same number of members and none of the jurors abstains,
then the basic decision according to $\urv$ can be seen to coincide with the standard premise-based one.

Another aspect where our method matches the standard principles of law
is the fact that the decision about $\thesis$ is obtained from a balance
between the arguments for $\thesis$ and those for~$\nt\thesis$.
In~fact, this is the main idea of the adversarial system of justice that operates in most jurisdictions.
More specifically, our basic decision criterion, \ie that of margin $0$,
would correspond to the notion of ``preponderance of evidence'',
also known as ``balance of probabilities'', that defines the standard of proof usually adopted in civil cases.
In~contrast, the standard of proof ``beyond a reasonable doubt'' 
typical of criminal cases 
would correspond to a (perhaps unilateral) decision criterion of margin~$\mg$, with $\mg$ near enough to~$1$.





\vskip-8mm\null 


\begin{thebibliography}{99\kern2pt}
\setlength\itemsep{-1pt} 

\bgroup\small 

\renewcommand\ensep{\hskip.55em plus.25em}

\newcommand\tita[1]{{\upshape #1}}
\newcommand\titl[1]{{\itshape #1}}
\newcommand\tits[1]{{\itshape #1}}

\newcommand\vvv{\unskip, }
\newcommand\nnn{\unskip, n.\,}
\newcommand\ppp{\unskip\,: }

\newcommand\et{\,\hbox{\small\&}\,}

\newcommand\url[1]{{\footnotesize\texttt{#1}}}

\newcommand\xref[1]{\hfil\break\hangafter1\hangindent\parindent\llap{\footnotesize
\brwrap{\hbox
to1.25ex{\hss\mdseries#1\hss}}\hskip1.5ex\null}\ignorespaces}

\null\vskip-12mm\null 

\bibitem{arrow}
Kenneth Joseph Arrow, 1950--1963.\ensep
\xref{a} \tita{A difficulty in the concept of social welfare}.\ensep
\tits{Journal of Political Economy} \vvv 58 (1950) \ppp 328–-346.
\xref{b} \titl{Social Choice and Individual Values}.\ensep
New York: Wiley, 1951$^1$, 1963$^2$.


\bibitem{blake}
Archie Blake, 1937.\ensep
\titl{Canonical Expressions in Boolean Algebra}.\ensep
Ph. D. dissertation, Univ.\ of Chicago.\ensep
/\ensep Review and corrections:
\tits{Journal of Symbolic Logic} \vvv 3 \ppp 93 and 113--118.

\bibitem{brown}
Frank Markham Brown, 1990$^1$, 2003$^2$.\ensep
\titl{Boolean Reasoning: The Logic of Boolean Equations}.\ensep
Kluwer$^1$, Dover$^2$.

\bibitem{crc}
Rosa Camps, Xavier Mora, Laia Saumell,\ 2011.\ensep \tita{A continuous rating method for~preferential~voting: the complete case}.\ensep 
\tits{Social Choice and Welfare}\linebreak 
(accepted for publication, 
\url{http://dx.doi.org/10.1007/s00355-011-0548-z}).

\bibitem{cri}
Rosa Camps, Xavier Mora, Laia Saumell,\ 2012.\ensep \tita{A continuous rating method for~preferential~voting: the incomplete case}.\ensep
\tits{Social Choice and Welfare}\linebreak 
(accepted for publication).

\bibitem{coh}
L.~Jonathan Cohen, 1977.\ensep 
\titl{The Probable and the Provable}.\ensep
Oxford: Clarendon Press.

\bibitem{dl10b}
Franz Dietrich, Christian List, 2010.\ensep
\tita{The problem of constrained judgment aggregation}.\ensep
\titl{The Present Situation in the Philosophy of Science} (ed.~F.~Stadler et al.; Springer) p.\,125–-139.

\bibitem{dl}
Franz Dietrich, Christian List, 2009--2011 (working paper).\ensep
\tita{Propositionwise judgment aggregation: the general case}.\ensep
\url{http://personal.lse.ac.uk/list/ PDF-files/PropositionwiseAggregation.pdf}.

\bibitem{geana}
John Geanakoplos, 2005.\ensep
\tita{Three brief proofs of Arrow's impossibility theorem}.
\tits{Economic Theory} \vvv 26 \ppp 211--215.

\bibitem{gui}
Georges Th\'eodule Guilbaud, 1952.\ensep
\tita{Les th\'eories de l’int\'er\^et g\'en\'eral et le problème logique de l'agr\'egation}.\ensep
\tits{\'Economie Appliqu\'ee} \vvv 5 \ppp 501–-584.\ensep 
/\ensep Partial English translation:
\tita{Theories of the general interest and the logical problem of aggregation}.\ensep
\titl{Readings in Mathematical Social Sciences} (ed.\ P.\,F.~Lazarsfeld and N.\,W.~Henry; Science Research Association, Inc., Chicago, 1966) p.\,262–-307.
/\ensep Complete English translation:
\tita{Theories of the general interest and the logical problem of aggregation}.\ensep
\tits{Electronic Journal for History of Probability and Statistics}
\vvv 4 \nnn 1, 56\,p (2008).
\url{http://www.emis.de/journals/JEHPS/juin2008/Guilbaud.pdf}.

\bibitem{hansson}
Sven Ove Hansson, 2006.\ensep
\tita{Logic of Belief Revision}.\ensep
\titl{Stanford Encyclopedia of Philosophy}.\ensep
\url{http://plato.stanford.edu/entries/logic-belief-revision/}.

\bibitem{huberbook}
Franz Huber, Christoph Schmidt-Petri, 2009.\ensep
\titl{Degrees of Belief}.\ensep
Springer.

\bibitem{js}
Nicholas Jardine, Robin Sibson, 1971.\ensep \titl{Mathematical
Taxonomy}.\ensep Wiley.


\bibitem{kii}
Lewis A.\ Kornhauser, 1992.\ensep
\tita{Modelling collegial courts $\cdot$ II: Legal doctrine}.\ensep
\tits{Journal of Law, Economics and Organization} \vvv 8 \ppp 441--470.

\bibitem{ks93}
Lewis A.\ Kornhauser, Lawrence G. Sager, 1993.\ensep
\tita{The one and the many: Adjudication in collegial courts}.\ensep
\tits{California Law Review} \vvv 81 \nnn1 \ppp 1--59.

\bibitem{ks04}
Lewis A.\ Kornhauser, Lawrence G. Sager, 2004.\ensep
\tita{The many as one: Integrity and group choice in paradoxical cases}.\ensep
\tits{Philosophy and Public Affairs} \vvv 32 \ppp 249–-276.


\bibitem{lipo}
Christian List, Benjamin Polak, 2010.\ensep
\tita{Introduction to judgment aggregation}.\ensep
\tits{Journal of Economic Theory} \vvv 145 \ppp 441--466.

\bibitem{marquis}
Pierre Marquis, 2000.\ensep
\tita{Consequence finding algorithms}.\ensep
\tits{Handbook on Defeasible Reasoning and Uncertainty Management Systems} 
\vvv 5 \ppp 41--145.

\bibitem{mu}
Iain McLean, Arnold B.\ Urken (eds.), 1995.\ensep
\titl{Classics of Social Choice}.\ensep
The University of Michigan Press, Ann Arbor.

\bibitem{ms}
Burkhard Monien, Ewald Speckenmeyer, 1985.\ensep
\tita{Solving satisfiability in less than $2^n$ steps}.\ensep
\tits{Discrete Applied Mathematics} \vvv 10 \ppp 287--295.

\bibitem{np08}
Klaus Nehring, Clemens Puppe, 2008.\ensep
\tita{Consistent judgement aggregation: the truth-functional case}.\ensep
\tits{Social Choice and Welfare} \vvv 31 \ppp 41--57.

\bibitem{np10}
Klaus Nehring, Clemens Puppe, 2010.\ensep
\tita{Abstract Arrovian aggregation}.\ensep
\tits{Journal of Economic Theory} \vvv 145 \ppp 467--494.

\bibitem{rescher76}
Nicholas Rescher, 1976.\ensep
\titl{Plausible Reasoning}.\ensep
Assen: Van Gorcum.

\bibitem{sc}
Markus Schulze, 1997--2003.\ensep \xref{a} Posted in the
\tits{Election Methods Mailing List}.\ensep
\url{http://lists.electorama.com/
pipermail/election-methods-electorama.com/1997-October/001570.html}
\,and\, \url{001545.html}
\,(see~also \textit{ibidem}\,\url{/1998-January/001577.html}).
\xref{b} \textit{Ibidem} \url{/1998-August/002045.html}. \xref{c}
\tita{A new monotonic and clone-independent single-winner election
method}.\ensep
 \tits{Voting Matters} \vvv 17 (2003) \ppp 9--19.\ensep


\fancyhead[CO]{\textsc{Deciding about logically constrained issues}}

\bibitem{scbis}
Markus Schulze, 2011.\ensep
\tita{A new monotonic, clone-independent, reversal symmetric, and Condorcet-consistent
single-winner election method}.\ensep 
\tits{Social Choice and Welfare} \vvv 36 \ppp 267--303.

\bibitem{sz}
\xtra
Edward Szpilrajn, 1930.\ensep \tita{Sur l'extension de l'ordre
partiel}.\ensep \tits{Fundamenta Mathematicae} \vvv 16 \ppp
386--389.

\bibitem{t6}
T. Nicolaus Tideman, 2006.\ensep
\titl{Collective Decisions and Voting: The Potential for Public Choice}.\ensep
Ashgate Publishing.


\egroup 

\end{thebibliography}
\end{document}